\newcommand{\norm}[1]{\left\lVert#1\right\rVert}
\newtheorem{theorem}{Theorem}
\newtheorem{lemma}[theorem]{Lemma}
\newtheorem{proposition}[theorem]{Proposition}
\newtheorem{remark}[theorem]{Remark}
\newcommand{\df}{\mathrm{d}}
\newcommand{\TV}{\scriptsize\mbox{TV}}
\newcommand{\ind}{\mathbf{1}}
	\title{A phase transition in sampling from Restricted Boltzmann Machines}
\author[$*$]{Youngwoo Kwon}
\author[$*$]{Qian Qin}
\author[$\dagger$]{Guanyang Wang}
\author[$\ddagger$]{Yuchen Wei}
\affil[$*$]{School of Statistics, University of Minnesota}
\affil[$\dagger$]{Department of Statistics. Rutgers University}
\affil[$\ddagger$]{Department of Mathematics, Rutgers University}
\begin{document}

\maketitle

\begin{abstract}%
Restricted Boltzmann Machines are a class of undirected graphical models that play a key role in deep learning and unsupervised learning. In this study, we prove a phase transition phenomenon in the mixing time of the Gibbs sampler for a one-parameter Restricted Boltzmann Machine. Specifically, the mixing time varies logarithmically, polynomially, and exponentially with the number of vertices depending on whether the parameter \( c \) is above, equal to, or below a critical value \( c_\star \approx -5.87 \). A key insight from our analysis is the link between the Gibbs sampler and a dynamical system, which we utilize to quantify the former based on the behavior of the latter. To study the critical case \( c = c_\star \), we develop a new isoperimetric inequality for the sampler's stationary distribution by showing that the distribution is nearly log-concave.
\end{abstract}

	\section{Introduction}\label{sec:intro}

Restricted Boltzmann Machines (RBMs) is a family of probability distributions over binary variables.
These binary variables are separated into  the visible and the hidden layers. This bipartite architecture facilitates efficient training algorithms and has made RBMs instrumental in diverse machine learning applications. These include feature learning \citep{coates2011analysis}, dimensionality reduction \citep{hinton2006reducing}, classification \citep{larochelle2008classification}, and collaborative filtering \citep{salakhutdinov2007restricted}, among others. Furthermore, RBMs play a crucial role in Deep Belief Networks (DBNs) \citep{hinton2006fast}. They act as fundamental building blocks that improve deep learning techniques through efficient unsupervised feature learning and layer-wise training of neural networks. The development of RBMs is regarded as a key contribution in Hinton's recognition with the Nobel Prize in Physics in 2024 \cite{hinton2024nobel}.

Central to the training of RBMs is Gibbs sampling \citep{geman1984stochastic}, a Markov Chain Monte Carlo (MCMC) method known for its simplicity and effectiveness in sampling from complex distributions. In RBMs, Gibbs sampling iteratively updates the states of hidden and visible nodes (also known as `reconstruction' in \cite{hinton2012practical}), thereby approximating the probability distribution of the network. Gibbs sampling generates samples that help estimate the gradient of the likelihood function, which is crucial for the training step. The effectiveness of Gibbs sampling, and consequently the accuracy of the gradient estimator, is controlled by its mixing time, which is the number of iterations needed for the sampling process to converge to its stationary distribution. Its significance  is emphasized in Hinton's practical guide to training these models \citep{hinton2012practical}:\textit{``...The second critical aspect is the mixing rate of the alternating Gibbs Markov chain. When this rate is exceedingly low, there will be a minimal reconstruction error, despite a significant disparity between the model's distribution and that of the data..." .}

Despite the popularity of RBMs, there is still a notable lack of theoretical studies regarding the mixing time of the Gibbs chain for RBMs.  To our best knowledge, the only existing work tailored for RBMs is \cite{tosh2016mixing}. Our study introduces a special yet natural class of RBMs parametrized by one parameter $c$, which controls the nature and amount of association between the two layers.
In this set up, the Gibbs chain is, in terms of convergence properties, equivalent to a chain $(X_t)_{t=0}^{\infty}$ with state space $\Omega_n = \{j/n: \, j = 0, \dots, n\}$, where $n$ characterizes the size of the network, such that $X_{t+1} \mid X_t = x \sim n^{-1} \mathsf{Bin}(n, \sigma(cx))$, where $\mathsf{Bin}$ is the binomial distribution, and $\sigma$ is the logistic sigmoid function.
Despite its simple form, this Markov chain exhibits an interesting phase transition phenomenon in its mixing time as $c$ varies. 
We have determined a critical value \( c_{\star} \) and shown that the mixing time scales logarithmically, polynomially, and exponentially with the number of vertices when \( c \) is greater than, equal to, and less than \( c_{\star} \), respectively. Detailed description of our main results are in Section \ref{subsec: main results}.

From a technical perspective,  our analysis includes several new ingredients, which we outline below:
\begin{itemize}
	\item Our analysis introduces a novel angle by linking the random Gibbs chain to a deterministic dynamical system. This relationship plays a crucial role thoughout our analysis, as we use the behavior of the deterministic system to predict the behavior of the Gibbs chain. For instance, the phase transition in the mixing time of the Gibbs chain corresponds to whether the fixed point of the dynamical system acts as an attractor, saddle point, or repeller.
	\item We apply the drift and contraction technique to derive logarithmic mixing times for \( c > c_{\star} \). While this technique is commonly employed in continuous spaces \cite{rosenthal1995minorization, fort2003geometric}, its application to discrete spaces has been scarce until very recently \cite{zhou2022dimension, kim2023mixing, chang2024dimension}, and our work also contributes to this development. Moreover, a common issue with the bounds obtained from these techniques is that they are often overly conservative \cite{meyn1994computable, jones2001honest, qin2021limitations}. However, our findings indicate that they can yield sharp estimates, at least  in our  context.
	\item The most challenging case arises at the critical point.
	To address this, we establish a new isoperimetric inequality (Lemma \ref{lem:iso}) for the stationary distribution. 
	While standard isoperimetric inequalities are usually derived for log-concave distributions, numerical calculations suggest that the stationary distribution of  $(X_t)_{t=0}^{\infty}$ may not be log-concave.
	To overcome this, we introduce Lemma \ref{lem:logconc}, which demonstrates that the target distribution is nearly log-concave, meaning that the logarithm of its probability mass function, after a natural continuation, has a controlled upper bound on its second derivative. 
	We then use this inequality, along with a close coupling argument, to show that our Gibbs chain has a spectral gap of \(\Omega(1/n^4)\).
\end{itemize}

\subsection{RBMs and their Gibbs sampler} \label{subsec:RBM Gibbs}
Fix two positive integers \( n, n' \), and let \( G_{n,n'} \) represent the complete bipartite graph consisting of \( n \) nodes \( (v_1, v_2, \ldots, v_n) \) on one side (called visible nodes) and \( n' \) nodes \( (h_1, h_2, \ldots, h_{n'}) \) on the other side (called hidden nodes).  Let $W$ be a  real-valued matrix of size $n\times n'$. Let $a, b$ be two real-valued vectors with length $n, n'$. A configuration $y = (y(v_1), \dots, y(v_n); y(h_1), \dots, y(h_{n'}))$ refers to the allocation of binary values \(\{0,1\}\) to each vertex in the graph \(G_{n,n'}\).
 An RBM is a family of distributions parametrized by $W,a, b$ on the space of all possible configurations: 
\begin{align}\label{eq:rbm}
		P_{W,a,b}(\{y\}) = \frac{1}{Z_{W,a,b}}\exp\{-E_{W,a,b}(y)\},
\end{align}
where the energy  $E_{W,a,b}(y)$ is defined as
\begin{align*}
	E_{W,a,b}(y) = -\sum_{i = 1}^{n} a_i y(v_i) - \sum_{j = 1}^{n'}b_{j}y(h_{j}) - \sum_{i, j}y(v_{i})W_{ij} \, y(h_j),
\end{align*}
and $Z_{W,a,b}= \sum_{y} \exp\{-E_{W,a,b}(y)\}$ is the normalizing constant. See  Figure \ref{fig:RBM} for an illustration.

\begin{figure}
	
	\centering
	
	\includegraphics[width=0.5\linewidth]{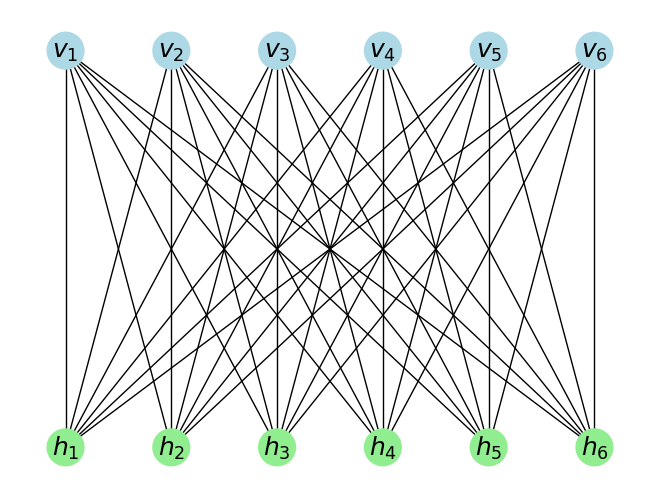}
	\caption{A restricted Boltzmann machine with six hidden nodes and six visible nodes}
	\label{fig:RBM}

\end{figure}

One method for sampling from the RBM distribution is by using the \textit{(alternating) Gibbs sampler}.  This approach involves alternately sampling the hidden and visible nodes, each based on the conditional distribution of the opposite set of nodes. Recall that the logistic sigmoid function $\sigma(x) := \exp(x)/(1+\exp(x))$,  one step the Gibbs sampler for RBM  is described in Algorithm \ref{alg:gibbs-RBM}.

\begin{algorithm}
	\caption{One step of the Gibbs sampler}
	\label{alg:gibbs-RBM}
	\begin{algorithmic}
		
		\State \textbf{Input:} Current state $$Y_t = (Y_t(v_1),\ldots, Y_t(v_n); Y_t(h_1),\ldots, Y_t(h_{n'})).$$
		\vspace{-1.5em}
		\State For  $i = 1, 2, \ldots, n$, independently draw
		\begin{align}\label{eq:visible update}
			Y_{t+1}(v_i) \sim \mathsf{Ber}\left(\sigma\left(a_i + \sum_{j=1}^{n'} W_{i,j} \, Y_t(h_j) \right) \right),
		\end{align}
		where $\mathsf{Ber}$ is the Bernoulli distribution.
		\State For  $j = 1, 2, \ldots, n'$, independently draw
		\begin{align}\label{eq:hidden update}
			Y_{t+1}(h_j) \sim \mathsf{Ber}\left(\sigma \left(b_j + \sum_{i=1}^n W_{i,j} \, Y_{t+1}(v_i) \right)\right).
		\end{align}
		\vspace{-1.5em}
		\State \textbf{Output:} Updated state $$Y_{t+1}= (Y_{t+1}(v_1),\ldots, Y_{t+1}(v_n); Y_{t+1}(h_1),\ldots, Y_{t+1}(h_{n'})).$$
	\end{algorithmic}
\end{algorithm}
	
We are interested in the convergence properties of the underlying Markov chain $(Y_t)_{t=0}^{\infty} $.

%The primary objective of our work is to study the number of steps required for convergence of this Gibbs sampler under certain conditions. It should be noted that a single step/iteration, which involves executing  Figure \ref{alg:gibbs-RBM} once, generates $n + n'$ random variables. 

\subsection{Our setup}\label{subsec:setup}
For any matrix $A$, let $\norm{A}_{1} := \max_{1\le j\le n'}\sum_{i = 1}^{n}|W_{ij}|$. The only available work \cite{tosh2016mixing} regarding the mixing of RBMs proves the following result using a one-step coupling argument:  if $\norm{W}_{1}\norm{W^T}_{1} < 4$, then running the Gibbs sampler for 
\begin{align}\label{eqn:existing mixing}
	\frac{1}{\log(4) - \log(\norm{W}_{1}\norm{W^T}_{1})}\log\left(\frac{\min(n, n')}{\epsilon}\right)
\end{align}
iterations is sufficient to ensure that the total variation distance between the distribution of $Y_t$ and $P_{W,a,b}$ is less than $\epsilon > 0$. Therefore, the mixing time (as formally defined in Section \ref{subsec: main results}) scales logarithmically with the dimensionality, given that $\norm{W}_{1}\norm{W^T}_{1} < 4$. However, formula \eqref{eqn:existing mixing} fails to provide useful information when $\norm{W}_{1}\norm{W^T}_{1} \geq 4$. It remains unclear whether the assumption $\norm{W}_{1}\norm{W^T}_{1} < 4$ is exact or simply an artifact, and how the mixing behavior scales with dimensionality outside of the known regime.

In this work, instead of trying to be more general, we focus a special yet natural structure that allows us to obtain much more intricate results on the sampler's mixing time.

From here on, assume that $a = b = 0$, $n=n'$, and $W_{i,j} = c/n$ for all $i,j$. 
As a result, the entire family of distributions is defined by a single parameter $c\in \mathbb R$. We discuss some extensions of this model towards the end of the paper. At first glance, this approach might seem overly simplistic. However, we justify its significance for several key reasons:
\begin{itemize}
\item This framework is  analogous to the Curie-Weiss model in statistical physics, which itself is a simplified version of the Ising model. The Curie-Weiss model involves distributing $\{-1,1\}$ values across each vertex of a complete graph, where each pair of nodes has the same interaction. Similarly, our model uses a $\{0,1\}$ distribution and operates on a complete bipartite graph. The Curie-Weiss model has led to significant mathematical developments, as evidenced by beautiful works like \citep{aizenman1987rapid,ding2009mixing, levin2010glauber}, providing useful insights for more complex models.

\item Perhaps surprisingly, our subsequent analysis reveals that even in this simplified model, the mixing time undergoes a phase transition as $c$ varies. This implies that the original model could exhibit even more complex behaviors regarding its mixing time. Additionally, this suggests the impracticality of directly addressing the original model in its full generality.
\end{itemize}

For a given configuration $y $, let $s_v(y) = n^{-1} \sum_{i=1}^n y(v_i)$, and let $s_h(y) = n^{-1} \sum_{j=1}^n y(h_j)$.
Then, in the Gibbs sampler, \eqref{eq:visible update} and \eqref{eq:hidden update} become, respectively,
\begin{equation} \label{eq:deinitial}
Y_{t+1}(v_i) \sim \mathsf{Ber}(\sigma( c s_h(Y_t) )), \quad Y_{t+1}(h_j) \sim \mathsf{Ber}(\sigma( c s_v(Y_{t+1}))).
\end{equation}
(Note that $s_v(Y_{t+1})$ can be calculated without knowing the $Y_{t+1}(h_j)$'s.)

\begin{remark}
	By now it is evident that $c$ controls the association between $s_v(Y)$ and $s_h(Y)$ if $Y \sim P_{W,a,b}$.
	The two averages are positively associated when $c > 0$ and negatively associated when $c < 0$.
	The larger $|c|$ is, the stronger the association.
\end{remark}

Consider an auxiliary chain $(X_t)_{t=0}^{\infty}$, where $X_{2t} = s_h(Y_t)$ and $X_{2t+1} = s_v(Y_{t+1})$ for $t \geq 0$.
This is a time-homogeneous Markov chain such that given $X_t$, $X_{t+1}$ is distributed as $n^{-1} \mathsf{Bin}(n, \sigma(c X_t))$, where $\mathsf{Bin}$ denotes the binomial distribution.
The stationary distribution of $(X_t)_{t=0}^{\infty}$ is $P_{W,a,b} \circ s_v^{-1} = P_{W,a,b} \circ s_h^{-1}$, which is of the form
\begin{equation} \label{eq:picn}
\pi_{c,n}(\{x\}) \propto \sum_{x' \in \Omega_n } {n \choose nx} {n \choose nx'} e^{cnxx'} = {n \choose nx} (1+e^{cx})^n, \quad x \in \Omega_n,
\end{equation}
where $\Omega_n = \{j/n: \, j = 0, \dots, n\}$.

It is not difficult to see that $(Y_t)_{t=0}^{\infty}$ and $(X_t)_{t=0}^{\infty}$ have identical convergence behavior up to a scaling factor.
Indeed, since $X_{2t}$ is a function of $Y_t$, the total variation (TV) distance from the law of $X_{2t}$ to $\pi_{c,n}$ must be no greater than the TV distance from the law $Y_t$ to $P_{W,a,b}$.
On the other hand, by \eqref{eq:deinitial}, $Y_{t+1}$ depends on $Y_t$ only through $X_{2t}$.
% and the conditional distribution of $Y_{t+1}$ given $X_{2t} = x \in \Omega_n$ is precisely the conditional distribution of a random element $Y$ given $s_v(Y) = x$ if $Y \sim P_{W,a,b}$.
This implies that the TV distance from the law of $Y_{t+1}$ to $P_{W,a,b}$ is no greater than the TV distance from the law of $X_{2t}$ to $\pi_{c,n}$; see \cite{roberts2001markov}, Theorem 1.
From here on, we shall focus on studying the convergence properties of the chain $(X_t)_{t=0}^{\infty}$, whose transition kernel will be denoted by $K_{c,n}$, i.e.,
\[
K_{c.n}(x, \{x'\}) = {n \choose nx'} \sigma(c x)^{nx'} [1 - \sigma(cx)]^{n-nx'}, \quad x, x' \in \Omega_n.
\]

\subsection{Main results}\label{subsec: main results}

For two probability distributions $Q_1, Q_2$ on a finite, discrete space $\Omega$, their total variation (TV) distance is given by $\|Q_1 - Q_2\|_{\scriptsize\mbox{TV}} := \sup_{A \subset \Omega} |Q_1(A) - Q_2(A)|$. 
For a Markov chain on $\Omega$ with an Mtk $K$ that has a stationary distribution $\pi$, the $\epsilon$-mixing time is defined as:
\begin{align}\nonumber
	\begin{aligned}
		\tau(\epsilon, K) :=& \max_{x\in \Omega} \inf \{t \geq 0:  \| K^t(x, \cdot) - \pi(\cdot) \|_{\scriptsize\mbox{TV}} < \epsilon\} \\
		=& \max_{x\in \Omega} \inf \{t \geq 0:  \| K^{t'}(x, \cdot) - \pi(\cdot) \|_{\scriptsize\mbox{TV}} < \epsilon \text{ for all } t' \geq t\},
	\end{aligned}
\end{align}
where the second equality follows from the well-known fact that $\| K^t(x, \cdot) - \pi(\cdot) \|_{\scriptsize\mbox{TV}}$ is a decreasing function of~$t$.
We also define $\tau(K) := \tau(1/4, K)$. It is known that $\tau(\epsilon,K) \leq \left \lceil{\log_2(1/\epsilon)}\right \rceil \tau(K)$ (Section 4.5 of \cite{levin2017markov}).

Our main results can be summarized as follows:
\begin{theorem}\label{thm:phase-transition}
	With all the settings described in Section \ref{subsec:setup}, let $x_{\star} \approx 1.278$ be the solution to the equation 
	\begin{equation} \label{eq:x0}
		\frac{x e^x}{1 + e^x} = 1,
	\end{equation}
	and let
	\begin{equation} \label{eq:c0}
		c_{\star} = -1 - x_{\star} - e^{x_{\star}} \approx -5.87.
	\end{equation}
Then each of the following holds:
\begin{itemize}
\item When $c > c_{\star}$ and $n \to \infty$,
\begin{align}\label{eq:fastmix}
\tau(K_{c,n})  = O(\log n).
\end{align}
\item When $c < c_{\star}$, there is a constant $\psi_c > 0$ such that, for $n \geq 1$,
\begin{align}\label{eq:slowmix}
	\tau(K_{c,n}) \geq \exp(\psi_c n) / 8.
	\end{align}

\item When $c = c_{\star}$, for any $\gamma \in (0,1/4)$, if $n$ is large enough,
\begin{align}\label{eq:thresholdmix}
	\tau(K_{c,n}) \geq  2 n^\gamma.
\end{align}

\item  When $c = c_{\star}$ and $n \to \infty$, 
\begin{align}\label{eq:thresholdmix2}
	\tau(K_{c,n}) = O(n^5).
\end{align}
\end{itemize}
\end{theorem}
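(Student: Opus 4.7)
My first step is to identify the phase boundary $c_\star$ through the deterministic map $f(x) := \sigma(cx)$, which captures the conditional expectation $\mathbb E(X_{t+1}\mid X_t) = f(X_t)$ of the chain (with per-step conditional variance of order $1/n$). The unique fixed point $x^* \in (0,1)$ of $f$ has derivative $f'(x^*) = cx^*(1-x^*)$, and writing $u := -cx^*$ and using the fixed-point equation $x^* = 1/(1+e^u)$, I get $f'(x^*) = -ue^u/(1+e^u)$. Hence $|f'(x^*)| = 1$ exactly when $u = x_\star$, and then $c = -1 - x_\star - e^{x_\star} = c_\star$ after algebra. The three regimes $c > c_\star$, $c = c_\star$, $c < c_\star$ therefore correspond to $x^*$ being an attractor, a neutral point, or a repeller of $f$, with a stable period-two orbit $(x_-, x_+)$ of $f\circ f$ being born as $c$ decreases through $c_\star$; each bullet of the theorem is then proved by lifting this deterministic picture to the stochastic chain.

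\textbf{Logarithmic mixing for $c > c_\star$.}
I couple two copies $(X_t, X_t')$ of the chain using a common vector of $n$ uniforms to drive both binomials, giving $\mathbb E[|X_{t+1} - X_{t+1}'| \mid X_t, X_t'] = |f(X_t) - f(X_t')|$. Since $\sup_{[0,1]} |f'|$ can exceed $1$ when $c$ is large positive, global contraction fails, so I combine a drift lemma --- a bounded number of iterations of $f$ pulls $[0,1]$ into a neighborhood $N$ of $x^*$ on which $|f'| \le 1 - \delta$ for some $\delta > 0$ --- with contraction inside $N$, obtaining geometric decay $\mathbb E|X_t - X_t'| \lesssim (1-\delta)^{t - O(1)}$ after the drift phase. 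After $O(\log n)$ steps the expected gap falls below $1/n$, at which point a maximal coupling of two binomials with success probabilities differing by $O(1/n)$ coalesces the pair with positive probability, yielding $\tau(K_{c,n}) = O(\log n)$.

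\textbf{Exponential lower bound for $c < c_\star$.}
Here $f\circ f$ has attractors $x_- < x^* < x_+$, so the two-step chain $(X_{2t})$ concentrates near one of $x_\pm$ and almost never crosses $x^*$. I would apply a Cheeger-type bottleneck argument to $K_{c,n}^2$ with $S := \{x \in \Omega_n : x \le x^*\}$: the stationary masses $\pi_{c,n}(S)$ and $\pi_{c,n}(S^c)$ stay bounded away from $0$ by the bimodality of $\pi_{c,n}$ (which follows from checking the sign of $\phi''(x^*)$ for $\phi := H + \log(1+e^{cx})$), while for $x$ in the well around $x_-$, reaching $S^c$ in two steps forces a binomial with $n$ trials to deviate by $\Omega(1)$ --- an event of probability $e^{-\Omega(n)}$ by Hoeffding. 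The resulting bottleneck ratio is at most $e^{-\psi_c n}$, which via the standard conductance lower bound on mixing time and the factor-of-two relation between $\tau(K^2)$ and $\tau(K)$ yields the claimed $\tau(K_{c,n}) \ge e^{\psi_c n}/8$.

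\textbf{The critical case $c = c_\star$.}
Let $g := f\circ f$, so $g(x^*) = x^*$, $g'(x^*) = 1$, and $g''(x^*) = 0$; because a stable period-two orbit appears just below $c_\star$, the period-doubling bifurcation is supercritical and thus $g'''(x^*) < 0$. Near $x^*$ the map is therefore $y \mapsto y + a y^3 + O(y^5)$ with $a < 0$, giving the polynomial relaxation $|y_t| \asymp t^{-1/2}$; tracking $\mathbb E X_t - x^*$ from an initial state of order-one distance and comparing with the $O(n^{-1/4})$ width of $\pi_{c_\star,n}$ delivers the $\tau(K_{c_\star,n}) \ge 2n^\gamma$ lower bound for any $\gamma \in (0,1/4)$. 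For the upper bound I route through the spectral gap: Lemma \ref{lem:logconc}'s near log-concavity of $\pi_{c_\star,n}$ --- the logarithm of a natural interpolation has its second derivative bounded above by $C/n$ --- drives the isoperimetric inequality of Lemma \ref{lem:iso}, and a close-coupling argument relating neighboring states of $\Omega_n$ then converts this into a spectral gap of order $n^{-4}$; combined with the crude bound $\log(1/\pi_{\min}) = O(n)$, the standard relation $\tau \le (1/\gamma) \log(2/\pi_{\min})$ yields $\tau(K_{c_\star,n}) = O(n^5)$. I expect the principal obstacle to be precisely this last step: extracting a sharp $n^{-4}$ spectral gap from only approximate log-concavity in the discrete setting requires tightly pairing the isoperimetric estimate with a finite-precision coupling, since the usual Bakry--Émery machinery assumes true log-concavity and breaks down here, and so most of the technical effort lives in establishing Lemmas \ref{lem:logconc} and \ref{lem:iso} with explicit, dimension-aware constants.
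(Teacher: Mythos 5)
Your identification of $c_\star$ via $|m_c'(x_c^*)|=1$ and your treatments of the first and fourth bullets follow essentially the paper's route: common-uniforms coupling plus a drift-to-a-neighborhood/local-contraction argument for \eqref{eq:fastmix}, and near-log-concavity $\to$ isoperimetry $\to$ close coupling $\to$ conductance $\to$ spectral gap of order $n^{-4}$, times a $\log(1/\pi_{\min})=O(n)$ factor, for \eqref{eq:thresholdmix2}. Two small corrections there: the drift statement must be for the \emph{chain}, not the map (the chain deviates from $m_c$ by $\Theta(n^{-1/2})$ per step; the paper's Lemma \ref{lem:drift} absorbs this via Hoeffding), and the near-log-concavity bound is a \emph{constant} upper bound on $\dd[2]{}{x}\log\omega_n$ (Lemma \ref{lem:logconc} gives $800$), not $C/n$: at $x_c^*$ the continuum exponent has vanishing second derivative and the discrete correction contributes an order-one positive term, so $C/n$ is false — fortunately a constant bound is all your isoperimetric step needs.

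For \eqref{eq:slowmix} you take a genuinely different route: a bottleneck/conductance argument on $K_{c,n}^2$ with $S=\{x\le x_c^*\}$, versus the paper's coupling-free argument that two chains started in $[0,x_c^*-\epsilon]$ and $[x_c^*+\epsilon,1]$ remain separated for $e^{\psi_c n}$ steps (Proposition \ref{pro:exp}), which never touches $\pi_{c,n}$. Your route can work, but it needs more than you state: "$\pi_{c,n}(S)$ and $\pi_{c,n}(S^c)$ bounded away from $0$" does not follow from the sign of $\phi''(x_c^*)$ alone; you need the two peaks of $\phi(x)=H(x)+\log(1+e^{cx})$ to have equal heights, which holds because $\phi(x)=\max_y[H(x)+H(y)+cxy]$ and the maximizers $x_\pm$ form a $2$-cycle of $m_c$, so $\phi(x_-)=\phi(x_+)$ by symmetry of the two-variable function — or else you must run the bottleneck bound on the lighter well and use that the saddle value $\phi(x_c^*)$ is strictly below both peaks. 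You also need that $m_c^2$ has exactly the three fixed points $x_-<x_c^*<x_+$ when $c<c_\star$, and Hoeffding only controls states away from the boundary, so the boundary layer near $x_c^*$ must be handled by a Laplace-type estimate. The paper's argument buys elementarity (only Hoeffding and $m_c'(x_c^*)<-1$); yours buys a statement about the stationary distribution itself, at the cost of these extra structural facts.

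The genuine gap is in your lower bound at $c=c_\star$. "Tracking $\mathbb{E}X_t - x_c^*$" is not a valid mechanism: since $m_c$ is nonlinear, $\mathbb{E}X_{t+1}=\mathbb{E}\,m_c(X_t)\neq m_c(\mathbb{E}X_t)$, and the per-step noise is $\Theta(n^{-1/2})$ while the deterministic two-step displacement near $x_c^*$ is only cubic in the distance, so fluctuations dominate the drift; what is needed is a high-probability statement that the whole trajectory stays within a prescribed tolerance of $m_c^{2t}(x)$ for $t$ up to $n^\gamma$. The paper's Lemma \ref{lem:k2t} does exactly this by demanding per-step accuracy $\epsilon/t$ and summing the Hoeffding errors, and this accumulation is precisely what produces the $n^{1-4\gamma}$ exponent and hence the restriction $\gamma<1/4$; your heuristic comparison of the deterministic $t^{-1/2}$ relaxation against the $n^{-1/4}$ stationary width would naively suggest a bound of order $n^{1/2}$ and neither explains the $1/4$ threshold nor constitutes a proof. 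Relatedly, the sign of $g'''(x_c^*)$ is not needed (the weaker estimate $|m_c^{2t}(x)-x_c^*|\ge|x-x_c^*|/(2t)$ of Lemma \ref{lem:m2t} suffices, using only $(m_c^2)'(x_c^*)=1$ and a Taylor bound), and inferring $g'''(x_c^*)<0$ from "the bifurcation is supercritical" is circular unless you verify it by direct computation.
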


As outlined in Theorem \ref{thm:phase-transition}, we uncover a  phase transition in the mixing time of the Gibbs sampler as \(c\) varies. 
We show that when \(c\) exceeds the critical value \(c_{\star} \approx -5.87\), the algorithm's mixing time is logarithmic relative to the size parameter \(n\), suggesting high efficiency. In contrast, if \(c\) falls below this critical value, we observe an exponential mixing time with respect to \(n\), indicative of significantly reduced efficiency. 
At the exact critical value \(c_{\star}\), we find that the sampler is polynomially mixing.

The remainder of this article is mostly devoted to establishing Theorem \ref{thm:phase-transition}.
In Section \ref{sec:general}, we describe the general strategies for proving this result.
We establish \eqref{eq:fastmix} in Section \ref{sec:bigc}, \eqref{eq:slowmix} in Section \ref{sec:smallc}, and \eqref{eq:thresholdmix} and \eqref{eq:thresholdmix2} in Section \ref{sec:criticalc}.

\section{General Strategies for Proving the Main Result} \label{sec:general}

\subsection{A dynamic system}

The first important observation is the relationship between our stochastic process and a deterministic dynamic system. Fix $c$.
Recall that given $X_t = x$, $X_{t+1}$ is distributed as $n^{-1} \mbox{Bin}(n, \sigma(c x))$, which has mean $\sigma(c x)$ and variance $\sigma(cx) [1-\sigma(cx)]/n$.
When $n$ is large, this conditional distribution is concentrated around $m_c(x) := \sigma(cx)$.

This insight suggests that we may first study the dynamical process  $x \mapsto m_c(x)$, $x \in [0,1]$. The characteristics of a dynamical system are largely influenced by its fixed (or stationary) points and the derivative of the system at these points. In our case, for any $c\in \mathbb R$,	it is easy to show that $m_c$ has a unique fixed point $x_c^* \in [0,1]$, so that $m_c(x_c^*) = x_c^*$. Since  $m_c' = c m_c (1-m_c)$, one can check the critical point $c_{\star}\approx -5.87$ is the unique value of~$c$ satisfying $|m_c'(x_c^*)| = 1$. In addition, when $c > c_{\star}$, the fixed point is attractive, while for $c < c_{\star}$, it becomes repelling. For users unfamiliar with dynamic systems, we recommend consulting \citep{martelli2011introduction} for additional references.

For a non-negative integer~$t$, we shall use $m_c^t$ to denote the composition of $t$ $m_c$'s, i.e., for $x \in [0,1]$, $m_c^0(x) = x$, and $m_c^{t+1}(x) = m_c(m_c^t(x))$.
For $\epsilon > 0$ and $x \in [0,1]$,  define the $\epsilon$-mixing time for the dynamical system $m_c$ as
\[
\hat{\tau}(\epsilon, x, c) = \inf \{ t \geq 0: \, |m_c^{t'}(x) - x_c^*| < \epsilon \text{ for all } t' \geq t \}.
\]
We then have the following result.

\begin{proposition} \label{pro:dynamic}
	Let $x \in [0,1]$ be arbitrary.
	Then each of the following holds.
	\begin{enumerate}
		\item [(i)]
		Suppose that $c>c_{\star}$.
		Then, for any $x \in [0,1]$ and an arbitrarily small $\epsilon > 0$,  $\hat{\tau}(\epsilon, x, c) = O(\log(1/\epsilon))$.
		\item [(ii)]
		Suppose that $c<c_{\star}$.
		Then, for $x \neq x_c^*$, there exists $\epsilon_x > 0$ such that $\hat{\tau}(\epsilon_x, x, c) = \infty$.
		\item [(iii)]
		Suppose that $c = c_{\star}$.
		Then,  for any $x \in [0,1]$ and $\epsilon > 0$, $\hat{\tau}(\epsilon, x, c) < \infty$.
		However, when $x \neq x_c^*$, as $\epsilon \to 0$, $\epsilon \hat{\tau}(\epsilon, x, c)$ is bounded away from 0.
	\end{enumerate}
\end{proposition}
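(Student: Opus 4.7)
My plan is to reduce everything to analyzing the real map $m_c(x) = \sigma(cx)$ and its square $m_c^2$ on $[0,1]$. Since $m_c$ is smooth and strictly monotone (increasing when $c>0$, decreasing when $c<0$), $m_c^2$ is strictly non-decreasing. Hence, for every starting point $x$ the even-indexed orbit $(m_c^{2t}(x))_{t\geq 0}$ is a monotone bounded sequence in $[0,1]$, and therefore converges; its limit is a fixed point of $m_c^2$, that is, either $x_c^*$ itself or a point of an $m_c$-period-$2$ orbit. The key derivative computation is $m_c'(x) = c\sigma(cx)(1-\sigma(cx))$, giving $m_c'(x_c^*) = cx_c^*(1-x_c^*)$; the definition of $c_\star$ is exactly $|m_{c_\star}'(x_{c_\star}^*)| = 1$, with $|m_c'(x_c^*)| < 1$ if and only if $c > c_\star$.

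For part (i), $(m_c^2)'(x_c^*) < 1$, and I would first show that $x_c^*$ is the only fixed point of $m_c^2$ on $[0,1]$ by sign analysis of $m_c^2(x) - x$ combined with the strictly monotone character of $m_c^2$. Monotone convergence then forces $m_c^{2t}(x) \to x_c^*$ from every starting point. Quantitatively, pick $r \in (|m_c'(x_c^*)|,1)$ and a neighborhood $U$ of $x_c^*$ where $|m_c'| \le r$. Each orbit enters $U$ after finitely many steps (depending on $c$ but not on $\epsilon$), after which $|m_c^{t+1}(x) - x_c^*| \le r|m_c^t(x) - x_c^*|$, so reaching precision $\epsilon$ costs an additional $O(\log(1/\epsilon))$ iterations.

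For part (ii), since $(m_c^2)'(x_c^*) > 1$, a sign analysis of $m_c^2(x) - x$ produces two additional fixed points $y_- < x_c^* < y_+$ of $m_c^2$ (together forming a single $m_c$-period-$2$ orbit). I would then argue that no orbit other than the one starting at $x_c^*$ can converge to $x_c^*$: if $m_c^{2t}(x) \to x_c^*$ with $m_c^{2t}(x) \neq x_c^*$, then a mean-value estimate using $(m_c^2)' > 1$ near $x_c^*$ forces the distance $|m_c^{2t}(x) - x_c^*|$ to grow geometrically along the tail, a contradiction. Hence the monotone limit must be $y_-$ or $y_+$, and $\epsilon_x := \tfrac12 \min(|x_c^* - y_-|, |x_c^* - y_+|)$ suffices. (There is no issue with finite-time preimages of $x_c^*$ because $m_c$ is injective for $c \neq 0$, so $m_c^{-1}(x_c^*) = \{x_c^*\}$.)

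Part (iii) is the main obstacle. At $c = c_\star$ we have $(m_{c_\star}^2)'(x_{c_\star}^*) = 1$, so the behavior is governed by the higher-order Taylor expansion of $m_{c_\star}^2$ at $x_{c_\star}^*$. A direct chain-rule computation shows that, writing $u = x - x_{c_\star}^*$ and using $g'(x_{c_\star}^*)=-1$, the quadratic coefficient $(m_{c_\star}^2)''(x_{c_\star}^*) = g''(x_{c_\star}^*)(g'(x_{c_\star}^*))^2 + g'(x_{c_\star}^*)g''(x_{c_\star}^*)$ vanishes automatically, yielding $m_{c_\star}^2(x_{c_\star}^* + u) = x_{c_\star}^* + u + \kappa u^3 + O(u^4)$ with $\kappa = -\tfrac13 g'''(x_{c_\star}^*) - \tfrac12 (g''(x_{c_\star}^*))^2$. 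Using the explicit formulas $g''(x) = c^2 \sigma(cx)(1-\sigma(cx))(1-2\sigma(cx))$ and $g'''(x) = c^3 \sigma(cx)(1-\sigma(cx))(1 - 6\sigma(cx) + 6\sigma(cx)^2)$ together with $c_\star x_{c_\star}^*(1-x_{c_\star}^*) = -1$ and $x_{c_\star}^* = \sigma(c_\star x_{c_\star}^*)$, I would verify $\kappa < 0$; this is the technically delicate step, as it relies on the specific form of $\sigma$ rather than any general structural feature, and the $O(u^4)$ remainder must be controlled uniformly on a compact neighborhood in order to turn the ODE heuristic into rigorous bounds. With $\kappa < 0$, $x_{c_\star}^*$ is the unique fixed point of $m_{c_\star}^2$ on $[0,1]$ (period-$2$ orbits emerge only strictly below $c_\star$), so monotone convergence delivers $\hat\tau(\epsilon, x, c_\star) < \infty$ for every $x$; and comparing the recursion $u_{t+1} = u_t + \kappa u_t^3 + O(u_t^4)$ with the ODE $\dot u = \kappa u^3$ gives $|u_t| \asymp t^{-1/2}$, hence $\hat\tau(\epsilon, x, c_\star) = \Theta(\epsilon^{-2})$ and in particular $\epsilon\hat\tau(\epsilon, x, c_\star) \to \infty$ as $\epsilon \to 0$.
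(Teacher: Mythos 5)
Your overall skeleton (monotone even-index orbits, classification by $|m_c'(x_c^*)|$ relative to $1$, cubic normal form at criticality with the quadratic term vanishing because $g'(x_c^*)=-1$) is sound, and your part (ii) is a correct, workable alternative to the paper's argument: the paper never needs the period-$2$ orbit, instead choosing $\epsilon_x=\min\{|x-x_c^*|,\epsilon\}$ and showing directly that $m_c$ maps each side of the excluded interval strictly into the other side, so the orbit never re-enters; your route via repulsion plus convergence of the monotone even orbit to a non-$x_c^*$ fixed point also works once the nearest-other-fixed-point issue is handled.

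The genuine gap is in parts (i) and (iii): both hinge on a \emph{global} statement — that $m_c^2$ has no fixed point other than $x_c^*$ (equivalently, no period-$2$ orbit of $m_c$) when $c>c_\star$, and likewise at $c=c_\star$ — and you never supply a mechanism for proving it. ``Sign analysis of $m_c^2(x)-x$'' is not an argument here: for $c<c_\star$ that function has three zeros, so any proof must exploit $c>c_\star$ quantitatively, and the hard regime is $c\in(c_\star,-2)$ where nothing elementary about monotonicity or concavity settles it. The paper's proof of exactly this point is its Lemma on $(m_c^2)'$: it shows $(m_c^2)'(x)<1$ for \emph{all} $x\in[0,1]$ when $c>c_\star$ (and $<1$ off the fixed point at $c=c_\star$), via the factorization $(m_c^2)'(x)=a(y)\,b(y;c)$ with $y=-c\sigma(cx)$, $a(z)=z/(1+e^z)$, $b(z;c)=(-c-z)/(1+e^{-z})$, bounding $a\le e^{-x_\star}$ and $b<e^{x_\star}$; this global contraction simultaneously gives uniqueness of the fixed point of $m_c^2$, global convergence, and the $O(\log(1/\epsilon))$ rate, with no need for your ``enter a neighborhood in finitely many steps'' step. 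Similarly, in (iii) your parenthetical ``period-$2$ orbits emerge only strictly below $c_\star$'' is precisely the claim to be proven and does not follow from the local cubic expansion; moreover the sign $\kappa<0$ and the uniform control of the $O(u^4)$ remainder are flagged but not carried out, so the $\Theta(\epsilon^{-2})$ rate (stronger than needed) rests on unverified computations. The paper sidesteps both issues: global strict non-expansiveness of $m_c^2$ off $x_c^*$ yields $\hat\tau<\infty$ for every $x$, and a cruder second-order Taylor bound gives $|m_c^{2t}(x)-x_c^*|\ge |x-x_c^*|/(2t)$ on a neighborhood of $x_c^*$, which already makes $\epsilon\,\hat\tau(\epsilon,x,c)$ bounded away from $0$. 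As written, your proposal defers exactly the computations that constitute the technical core of the proposition.
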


\begin{figure}
    \begin{center}
        \includegraphics[width=\textwidth]{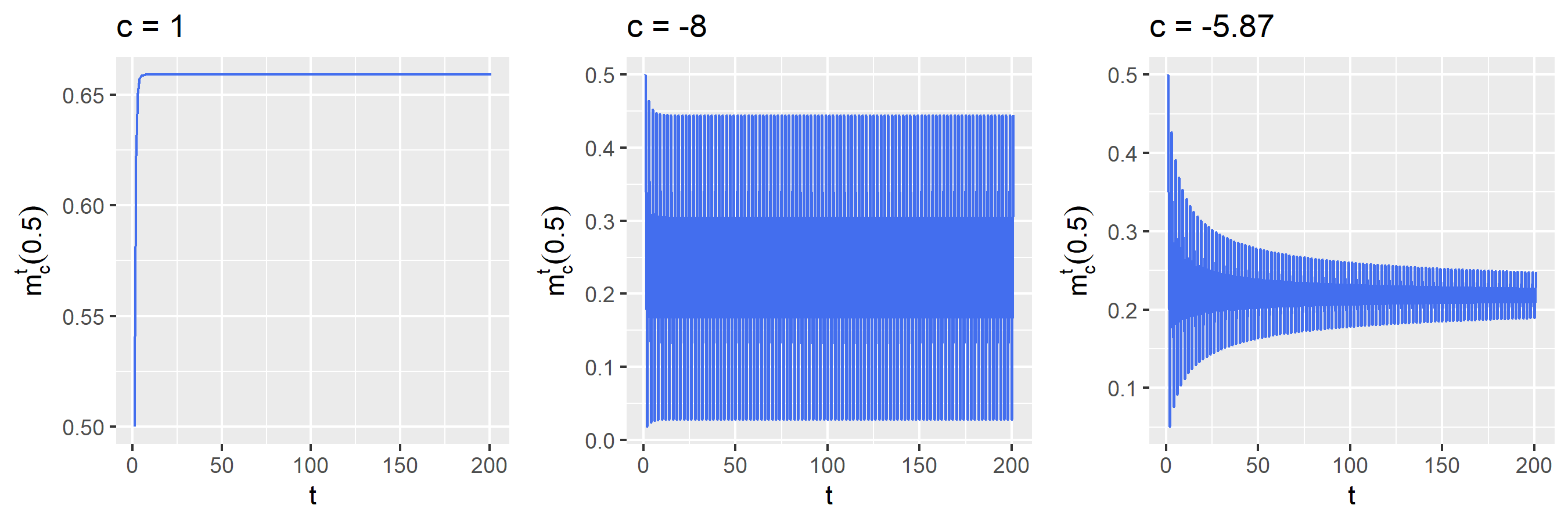}
        \includegraphics[width=\textwidth]{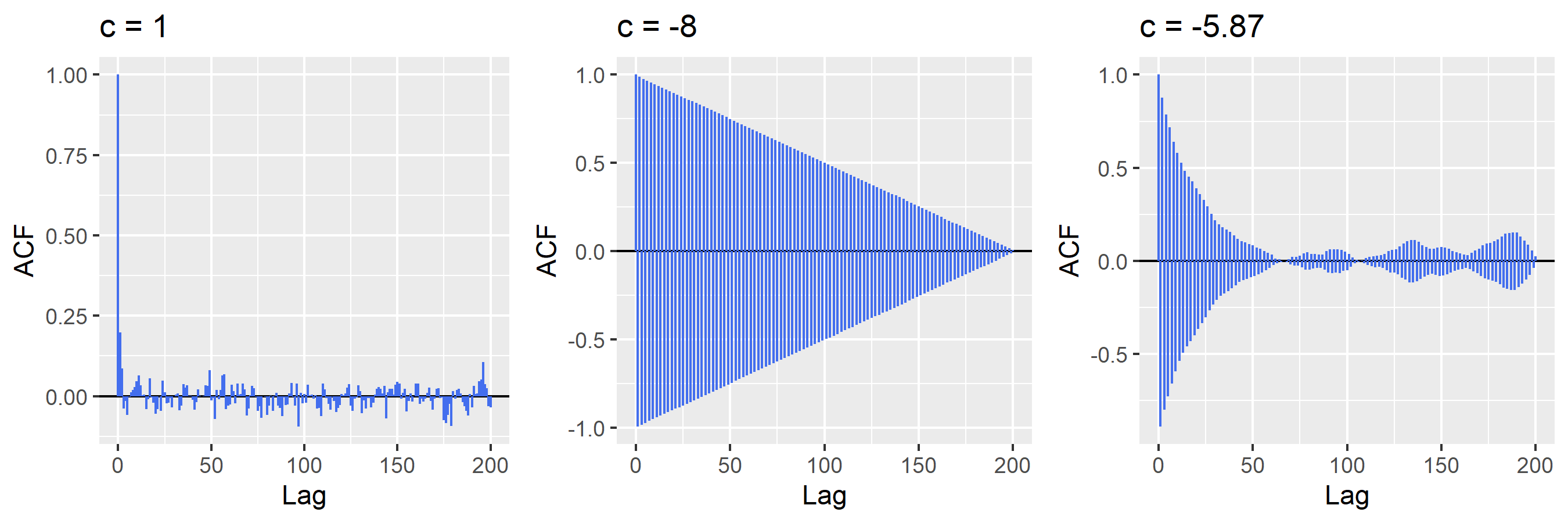}       
    \end{center}
    \caption{In the first row, $m_c^t(0.5)$ is plotted against $t$ for three values of $c$.
    The second row contains autocorrelation function plots for the Markov chain $(X_t)_t$ at three values of $c$.}
    \label{fig:sys-chain}
\end{figure}

\begin{figure}
    \begin{center}
        \includegraphics[width=\textwidth]{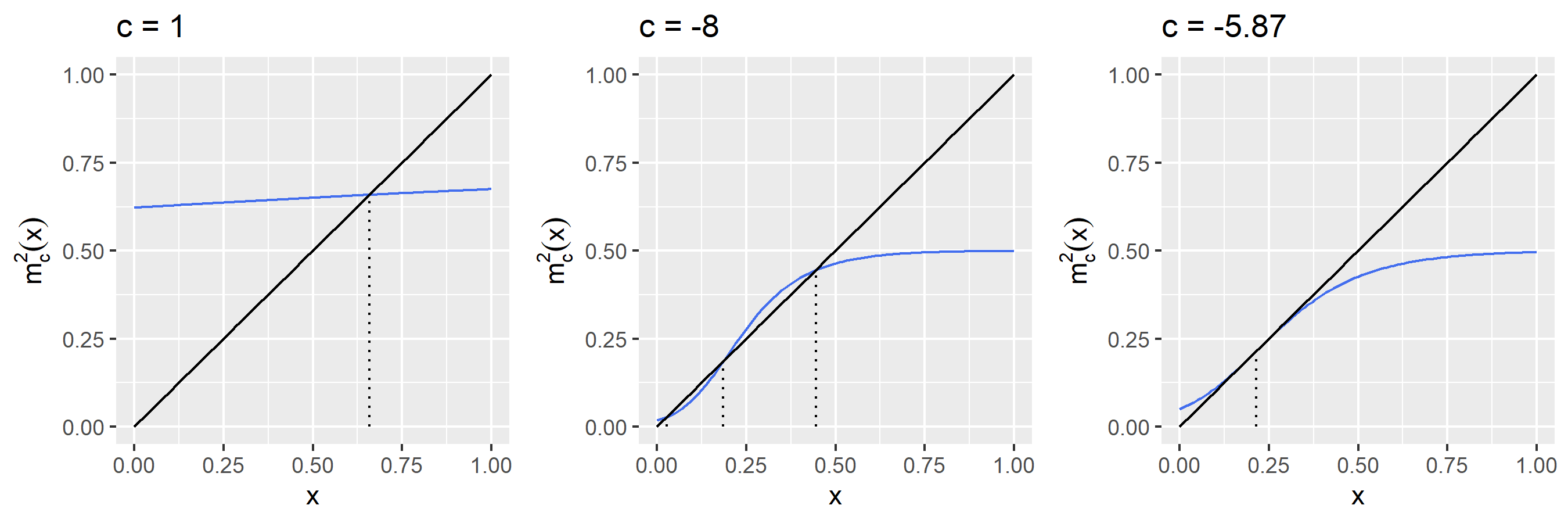}
        \includegraphics[width=\textwidth]{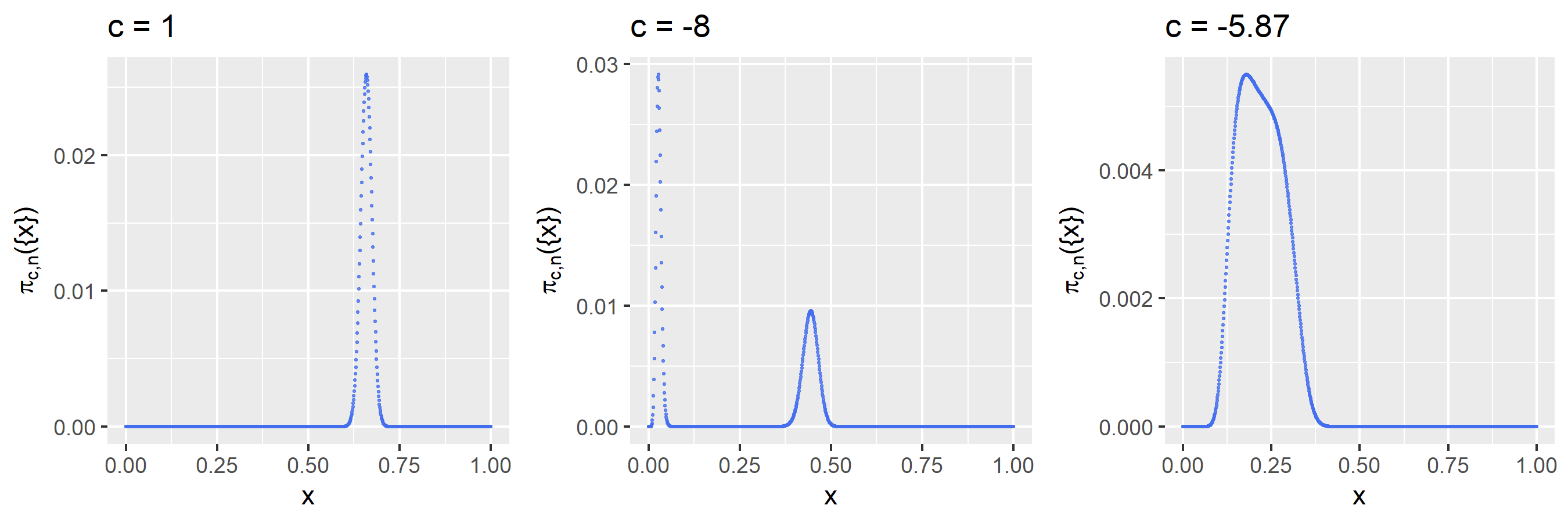}       
    \end{center}
    \caption{The first row depicts the function $x \mapsto m_c^2(x)$ at three values of $c$, alongside the $45^{\circ}$ diagonal line that passes through the origin. 
    	The second row portrays the probability mass function $x \mapsto \pi_{c,n}(\{x\})$ at three values of $c$ with $n = 1000$.}
    \label{fig:chain-stationary}
\end{figure}

Hence, the scenarios where $c > c_{\star}$, $c < c_{\star}$, and $c = c_{\star}$ correspond to three modes of convergence.
. 
Comparing Theorem \ref{thm:phase-transition} and Proposition \ref{pro:dynamic}, we see that the convergence behavior of the dynamic system translates to that of the Markov chain. See Figure \ref{fig:sys-chain} for numerical illustrations. 

Figure \ref{fig:chain-stationary} provides another perspective by drawing a connection between the stability of the fixed points of $x \mapsto m_c^2(x)$ and the shape of the stationary distribution $\pi_{c,n}$.
When $c = 1 > c_{\star}$, $x \mapsto m_c^2(x)$ has a single fixed point that is attractive; accordingly, $x \mapsto \pi_{c,n}(\{x\})$ is bell-like with steep sides, similar to a concentrated Gaussian distribution.
When $c = -8 < c_{\star}$, $x \mapsto m_c^2(x)$ has two attractive fixed points and a repelling one; correspondingly, $x \mapsto \pi_{c,n}(\{x\})$ is bimodal a bottleneck between the two modes.
When $c \approx c_{\star}$, $x \mapsto m_c^2(x)$ has a fixed point at which the derivative is approximately~1; meanwhile, $x \mapsto \pi_{c,n}(\{x\})$ is seemingly unimodal but has a broad peak.

Comparing the Markov chain $(X_t)_{t=0}^{\infty}$ with the deterministic process $(m_c^t(x))_{t=0}^{\infty}$ enables us to demonstrate how slowly $(X_t)_{t=0}^{\infty}$ converges. 
 Indeed, \eqref{eq:slowmix} and \eqref{eq:thresholdmix} are established by showing that when $n$ is large, the chain $(X_t)_{t=0}^{\infty}$ cannot mix rapidly if the corresponding dynamic system does not converge rapidly.

 We will rely on the following technical result that connects the Markov chain to the dynamic system.
 It follows from the well-known Hoeffding's inequality.
 
 \begin{lemma} \label{lem:hoeffding}
 	For $n \geq 1$, $x \in \Omega_n$, and $\epsilon > 0$,
 	\[
 	K_{c,n}(x, ( m_c(x) - \epsilon, m_c(x) + \epsilon )) \geq 1 - 2 \exp(-2n\epsilon^2).
 	\]
 	In particular, for $X \sim K_{c,n}(x; \cdot)$,
 	\[
 	E|X - m_c(x)| \leq \int_0^{\infty} 2 \exp(-2n\epsilon^2) \, \df \epsilon = \sqrt{\pi/(2n)}.
 	\]
 \end{lemma}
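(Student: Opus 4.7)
The plan is to recognize this as a near-immediate consequence of Hoeffding's inequality applied to Bernoulli random variables, combined with the layer cake representation of the expectation of a nonnegative random variable.

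First I would unpack the definition of $K_{c,n}$. Given $X_t = x$, the update rule says $X = X_{t+1}$ is distributed as $n^{-1} \mathsf{Bin}(n, \sigma(cx))$, so one can write $X = n^{-1} \sum_{i=1}^{n} Z_i$ where $Z_1, \dots, Z_n$ are i.i.d.\ Bernoulli random variables with success probability $\sigma(cx) = m_c(x)$. In particular $E[X] = m_c(x)$, and each $Z_i$ takes values in $[0,1]$. Hoeffding's inequality for sums of bounded independent variables then yields
\[
P\bigl(|X - m_c(x)| \geq \epsilon\bigr) \;\leq\; 2\exp(-2n\epsilon^2),
\]
and taking complements gives the first displayed bound, since $K_{c,n}(x, (m_c(x)-\epsilon, m_c(x)+\epsilon)) = P(|X - m_c(x)| < \epsilon) \geq P(|X - m_c(x)| \leq \epsilon) - $ a null term that does not affect the inequality.

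For the second displayed bound I would invoke the standard layer cake identity for a nonnegative random variable,
\[
E|X - m_c(x)| \;=\; \int_0^{\infty} P\bigl(|X - m_c(x)| > \epsilon\bigr)\, \df\epsilon,
\]
and substitute the Hoeffding tail bound inside the integral. The remaining step is the Gaussian integral evaluation $\int_0^{\infty} 2\exp(-2n\epsilon^2)\,\df\epsilon = \sqrt{\pi/(2n)}$, which follows from $\int_0^{\infty} e^{-a\epsilon^2}\, \df\epsilon = \tfrac{1}{2}\sqrt{\pi/a}$ with $a = 2n$.

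There is no real obstacle here; the only thing to be careful about is the strict-versus-weak inequality in the interval $(m_c(x)-\epsilon, m_c(x)+\epsilon)$ versus the Hoeffding conclusion stated with $\geq \epsilon$. This is handled by noting $\{|X - m_c(x)| < \epsilon\} \supseteq \{|X - m_c(x)| \leq \epsilon/2\}$ (or by applying Hoeffding at $\epsilon' < \epsilon$ and taking a limit), both of which give the stated lower bound on the kernel mass. The whole argument is essentially one application of Hoeffding plus one elementary integral.
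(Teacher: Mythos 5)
Your proposal is correct and matches the paper's argument, which simply cites Hoeffding's inequality for the binomial average together with the tail-integral formula for the expectation. The only minor quibble is that your strict-versus-weak worry is unnecessary: Hoeffding bounds $P(|X-m_c(x)|\geq \epsilon)$, whose complement is exactly the open-interval event, so the first display follows directly (and your $\epsilon/2$ fallback would actually weaken the constant, though the limiting argument is fine).
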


 \subsection{Techniques for upper bounding the mixing time}
 
 To establish upper bounds on the mixing time, i.e., \eqref{eq:fastmix} and \eqref{eq:thresholdmix2}, we employ techniques involving coupling and isoperimetric inequalities.
 
To establish \eqref{eq:fastmix}, we construct a coupling which satisfies a contraction condition if the coupled chains are both close to the fixed point $x_c^*$.
 A drift condition is then established based on properties of the dynamic system, demonstrating that the coupled chains visit the vicinity of $x_c^*$ frequently.
 Combining the drift and contraction conditions leads to a convergence bound that implies \eqref{eq:fastmix}.
 
 To establish \eqref{eq:thresholdmix2}, we first establish an isoperimetric inequality for the probability mass function of $\pi_{c_{\star},n}$, which is shown to be nearly log-concave for large $n$, and then show that this leads to $K_{c_{\star},n}$ having a spectral gap.

\section{Fast Mixing when $c > c_{\star}$} \label{sec:bigc}

When $c > c_{\star}$, we shall establish \eqref{eq:fastmix} from Theorem \ref{thm:phase-transition} using the following coupling construction.

For $x \in \Omega_n$ and $u = (u_1, \dots, u_n) \in [0,1]^n$, let 
\begin{equation} \label{eq:fcn}
f_{c,n}(x; u) = \frac{1}{n} \sum_{i=1}^n \ind_{[0, m_c(x)]}(u_i).
\end{equation}
Then $f_{c,n}(x; U)$ is distributed as $K_{c,n}(x, \cdot)$ if $U = (U_1, \dots, U_n)$ is a vector of iid $\mathsf{Uni}(0,1)$ random variables.
($\mathsf{Uni}$ denotes the uniform distribution.)
Let $W_1, W_2, \dots$ be iid copies of $U$ given above, and let $W = (W_1, W_2, \dots)$.
For $x \in \Omega_n$, let $F_{c,n}^0 (x; W) = x$, and let $F_{c,n}^{t+1}(x; W) = f_{c,n}(F_{c,n}^t(x; W); W_{n+1} )$ for $t \geq 0$.
Then, given $F_{c,n}^t(x; W) = x_t \in \Omega_n$, $F_{c,n}^{t+1}(x; W)$ is distributed as $K_{c,n}(x_t, \cdot)$.
For two arbitrary initializations $x, x' \in \Omega_n$, $(F_{c,n}^t(x; W))_{t=0}^{\infty}$ and $(F_{c,n}^t(x';W))_{t=0}^{\infty}$ form a pair of coupled chains.
We shall study the expected distance between $F_{c,n}^t(x; W)$ and $F_{c,n}^t(x'; W)$, which serves as an upper bound on the Wasserstein distance between $K_{c,n}^t(x, \cdot)$ and $K_{c,n}^t(x', \cdot)$.

The following establishes a contraction condition for this coupling construction.
Its proof is provided in Appendix \ref{app:bigc-proofs}.

\begin{lemma} \label{lem:contraction}
	Suppose that $c > c_{\star}$.
	Let $U = (U_1, \dots, U_n)$ be a vector of iid $\mathsf{Uni}(0,1)$ random variables.
	Then, there exist constants $\epsilon_c > 0$ and $\gamma_c \in [0,1)$ such that for any positive integer~$n$ and $x,x' \in \Omega_n$,
	\[
	E |f_{c,n}(x; U) - f_{c,n}(x'; U)| \leq \begin{cases}
		\min \{ |c|/4, \gamma_c\} |x-x'| & x,x' \in [x_c^*-\epsilon_c, x_c^* + \epsilon_c] \\
		(|c|/4) |x-x'| & \text{otherwise}
	\end{cases}.
	\]

\end{lemma}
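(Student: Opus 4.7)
The plan is to reduce the lemma to two Lipschitz estimates on $m_c$ by computing the expected distance under the common-randomness coupling \emph{exactly}. Fix $x,x' \in \Omega_n$ and assume without loss of generality that $m_c(x) \geq m_c(x')$. For each $i$, since the two indicators share the same lower endpoint,
$$\ind_{[0,m_c(x)]}(U_i) - \ind_{[0,m_c(x')]}(U_i) = \ind_{(m_c(x'),\, m_c(x)]}(U_i) \in \{0,1\},$$
so every summand defining $f_{c,n}(x;U) - f_{c,n}(x';U)$ is non-negative and the absolute value passes inside the sum. Taking expectations,
$$E|f_{c,n}(x;U) - f_{c,n}(x';U)| = \frac{1}{n}\sum_{i=1}^n P\bigl(U_i \in (m_c(x'),\, m_c(x)]\bigr) = |m_c(x) - m_c(x')|.$$
The opposite orientation is symmetric, so it suffices to produce (i) a global Lipschitz bound $|m_c(x) - m_c(x')| \leq (|c|/4)|x-x'|$, and (ii) a stronger bound $|m_c(x) - m_c(x')| \leq \gamma_c |x-x'|$ valid throughout a neighborhood of $x_c^*$.

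For (i), differentiating gives $m_c'(y) = c\,\sigma(cy)(1-\sigma(cy))$, and since $s(1-s) \leq 1/4$ on $[0,1]$, we have $\sup_{y \in [0,1]}|m_c'(y)| \leq |c|/4$. The mean value theorem delivers the global bound, which handles the ``otherwise'' case and also the $|c|/4$ factor inside the minimum. For (ii), we invoke the characterization of $c_\star$ already established in Section \ref{sec:general}: $c_\star$ is the unique value of $c$ satisfying $|m_c'(x_c^*)|=1$, and for $c>c_\star$ the fixed point is attracting, i.e.\ $|m_c'(x_c^*)|<1$. Pick any $\gamma_c \in (|m_c'(x_c^*)|,\, 1)$; by continuity of $m_c'$, there exists $\epsilon_c>0$ such that $|m_c'(y)| \leq \gamma_c$ for every $y \in [x_c^* - \epsilon_c, x_c^* + \epsilon_c]$. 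A final application of the mean value theorem on this interval yields $|m_c(x)-m_c(x')| \leq \gamma_c |x-x'|$ whenever both arguments lie in the neighborhood.

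The main technical input, beyond the one-line coupling identity and elementary calculus, is the fact that $|m_c'(x_c^*)|<1$ for \emph{every} $c>c_\star$. This is not entirely transparent because $c_\star$ is negative while the regime $c>c_\star$ also covers large positive values of $c$; however, the stability picture is precisely what is summarized in Section \ref{sec:general} and can simply be quoted. Everything else is driven by the key observation that the common-randomness coupling realizes the Lipschitz bound for $m_c$ \emph{exactly} in expectation, which is the reason the contraction constants come out as cleanly as they do.
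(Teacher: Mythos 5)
Your proof is correct and follows essentially the same route as the paper: the common-randomness coupling gives $E|f_{c,n}(x;U)-f_{c,n}(x';U)| = |m_c(x)-m_c(x')|$ exactly, the global case follows from $\sup_{y\in[0,1]}|m_c'(y)|\le |c|/4$ and the mean value theorem, and the local contraction follows from $|m_c'(x_c^*)|<1$ together with continuity of $m_c'$. The one input you quote rather than prove, namely $|m_c'(x_c^*)|<1$ for every $c>c_{\star}$, is precisely the paper's Lemma~\ref{lem:firstorder} (proved in Appendix~\ref{app:bigc}), so citing that lemma instead of the informal remark in Section~\ref{sec:general} makes the argument fully rigorous.
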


Lemma \ref{lem:contraction} implies that a pair of coupled chains tend to approach each other if they are both close to the fixed point $x_c^*$.
We now give a drift condition, which loosely speaking, implies that the chains would visit the vicinity of $x_c^*$ often.
For $x \in \Omega_n$, let
\[
V_c(x) = |m_c(x) - x_c^*| + |x - x_c^*|.
\]
The following lemma is established in Appendix \ref{app:bigc-proofs}.

\begin{lemma} \label{lem:drift}
	Suppose that $c > c_{\star}$.
	Then, for each positive integer~$n$, there exist $\lambda_{c,n} < 1$ and $L_{c,n} < \infty$ such that, for $x \in \Omega_n$,
	\[
	K_{c,n} V_c(x) := \sum_{x' \in \Omega_n} K_{c,n}(x, \{x'\}) V_c(x') \leq \lambda_{c,n} V_c(x) + L_{c,n}.
	\]
	Moreover, $\lambda_{c,n}$ and $L_{c,n}$ can be chosen in a way such that $\lambda_{c,n}  = \lambda_c$ does not depend on~$n$, whereas $L_{c,n} \to 0$ as $n \to \infty$.
\end{lemma}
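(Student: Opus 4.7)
The plan is to bound $K_{c,n} V_c(x)$ above by a deterministic drift of the map $m_c$ plus a small stochastic remainder, and then verify the deterministic contraction. For $X \sim K_{c,n}(x, \cdot)$ we have $E X = m_c(x)$, and Lemma \ref{lem:hoeffding} gives $E|X - m_c(x)| \leq \delta_n := \sqrt{\pi/(2n)}$. Since $|m_c'| \leq |c|/4$ on $[0,1]$, the triangle inequality applied to $|X - x_c^*|$ and $|m_c(X) - x_c^*|$ (expanding the latter about $m_c(x)$) yields $E|X - x_c^*| \leq |m_c(x) - x_c^*| + \delta_n$ and $E|m_c(X) - x_c^*| \leq |m_c^2(x) - x_c^*| + (|c|/4)\delta_n$. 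Summing these, $K_{c,n} V_c(x) \leq V_c(m_c(x)) + (1 + |c|/4)\delta_n$. It therefore suffices to prove the deterministic contraction $V_c(m_c(x)) \leq \lambda_c V_c(x)$ for some $\lambda_c < 1$ uniformly in $x \in [0,1]$; taking $L_{c,n} = (1 + |c|/4)\delta_n \to 0$ then finishes the job, with $\lambda_{c,n} = \lambda_c$ manifestly independent of $n$.

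For the deterministic step, I would exploit the fact that $m_c^2$ is strictly monotonically increasing on $[0,1]$ (as the composition of two strictly monotone maps) and has $x_c^*$ as its only fixed point: any additional fixed point would be a period-two orbit of $m_c$, which Proposition \ref{pro:dynamic}(i) rules out since $m_c^t(x) \to x_c^*$ globally when $c > c_\star$. Monotonicity of $m_c^2$ combined with this global attraction forces $|m_c^2(x) - x_c^*| < |x - x_c^*|$ pointwise for $x \neq x_c^*$: otherwise the iterates $m_c^{2k}(x)$ would move monotonically away from $x_c^*$ instead of converging to it. Writing $\eta(x) := |m_c^2(x) - x_c^*|/|x - x_c^*|$ for $x \neq x_c^*$, extended at $x_c^*$ by $\eta(x_c^*) := (m_c'(x_c^*))^2 < 1$ (the limiting value from Taylor expansion), one obtains a continuous function on the compact interval $[0,1]$ that is strictly below $1$ everywhere, hence $\lambda_1 := \sup_{x \in [0,1]} \eta(x) < 1$. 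Combining $|m_c^2(x) - x_c^*| \leq \lambda_1 |x - x_c^*|$ with the Lipschitz bound $|m_c(x) - x_c^*| \leq (|c|/4)|x - x_c^*|$, a direct manipulation of the ratio $V_c(m_c(x))/V_c(x) = (|m_c^2(x) - x_c^*| + |m_c(x) - x_c^*|)/(|m_c(x) - x_c^*| + |x - x_c^*|)$ yields the uniform bound $\lambda_c = 1 - (1-\lambda_1)/(1 + |c|/4) < 1$, as required.

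The main obstacle is the deterministic contraction, especially in the regime $c \in (c_\star, -4)$ where $m_c$ is not a Euclidean contraction on $[0,1]$, so a naive Lipschitz estimate on $m_c$ alone cannot produce $\lambda_c < 1$. The substance of the argument is to pass to the two-step map $m_c^2$ and extract contraction from its structure as a monotone map with a globally attracting fixed point, and then to upgrade the pointwise inequality $\eta(x) < 1$ to a uniform bound $\sup \eta < 1$ via continuity and compactness of $[0,1]$. Once this deterministic contraction is in hand, the stochastic reduction via Hoeffding is routine and delivers the stated drift with $\lambda_{c,n} = \lambda_c$ and $L_{c,n} \to 0$.
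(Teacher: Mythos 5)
Your proof is correct, and the stochastic half is exactly the paper's: both reduce $K_{c,n}V_c(x)$ to $|m_c^2(x)-x_c^*|+|m_c(x)-x_c^*|+(1+|c|/4)\sqrt{\pi/(2n)}$ via Lemma \ref{lem:hoeffding} and the bound $|m_c'|\leq |c|/4$, and both extract $\lambda_c = 1-(1-k)/(1+|c|/4)$ from the same ratio manipulation. Where you diverge is the source of the uniform two-step contraction $|m_c^2(x)-x_c^*|\leq k\,|x-x_c^*|$ with $k<1$: the paper gets it directly from Lemma \ref{lem:secondorder}, i.e.\ the explicit calculus bound $\sup_{x\in[0,1]}(m_c^2)'(x)<1$, plus the mean value theorem, whereas you re-derive it softly from monotonicity of $m_c^2$, uniqueness of the fixed point, global convergence (Proposition \ref{pro:dynamic}(i)), and compactness of $[0,1]$ applied to the ratio $\eta$. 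Your route is valid and arguably more conceptual, but note two things. First, it does not actually avoid the hard computation: the paper's proof of Proposition \ref{pro:dynamic}(i) is itself built on Lemma \ref{lem:secondorder}, so your argument repackages rather than replaces that lemma. Second, your extension $\eta(x_c^*)=(m_c'(x_c^*))^2<1$ is asserted without justification, and it is genuinely needed: global convergence alone only forces $(m_c^2)'(x_c^*)\leq 1$ (the critical case $c=c_\star$ has global convergence of the dynamical system with derivative exactly $1$ at the fixed point, and there no uniform $k<1$ exists). So you must invoke the attractivity statement $|m_c'(x_c^*)|<1$ for $c>c_\star$, which is the paper's Lemma \ref{lem:firstorder}; with that citation added (and noting the trivial degenerate case $c=0$, where $m_c$ is constant and your "strictly monotone" claim fails but the contraction is immediate), your argument is complete.
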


We can now combine the drift and contraction condition to obtain a geometric convergence result.
The proof of this result, which is given in Appendix \ref{app:bigc-proofs}, relies on existing techniques \citep[see, e.g.,][]{butkovsky2014subgeometric,douc2018markov,qin2019geometric}.
The general idea is combining Lemmas \ref{lem:contraction} and \ref{lem:drift} to show that, for $x, x' \in \Omega_n$, the expected value of some weighted distance between $F_{c,n}^t(x)$ and $F_{c,n}^t(x')$ is upper bounded by $5 \rho_c^t$ for some $\rho_c < 1$.

\begin{proposition}\label{pro:geometric-convergence}
	Suppose that $c > c_{\star}$.
	Then, there exists $\rho_c < 1$ and $n_c > 0$ such that, for any $n \geq n_c$ and $t \geq 0$,
\begin{align}\nonumber
	\begin{aligned}
	\|K_{c,n}^t(x, \cdot) - \pi_{c,n}(\cdot)\|_{\scriptsize\mbox{TV}} \leq 5n \rho_c^t.
	\end{aligned}
\end{align}
\end{proposition}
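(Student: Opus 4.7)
I would combine Lemmas \ref{lem:contraction} and \ref{lem:drift} through a Hairer--Mattingly-style weighted semimetric of the form
$$d_\beta(x,x') = |x-x'|\bigl(1 + \beta V_c(x) + \beta V_c(x')\bigr), \qquad x,x'\in\Omega_n,$$
with a parameter $\beta>0$ to be tuned (taking $\beta=1$ makes the bookkeeping clean). The target is a one-step contraction $E[d_\beta(F_{c,n}^1(x;W),F_{c,n}^1(x';W))] \leq \rho_c\,d_\beta(x,x')$, uniformly in $x,x'\in\Omega_n$, for some $\rho_c<1$ independent of $n$ and all $n$ sufficiently large. Since $V_c\leq 2$ on $[0,1]$ and $|x-x'|\leq 1$, this choice of $\beta$ gives $d_\beta(x,x')\leq 5$, and iterating the one-step bound yields $E[d_\beta(F_{c,n}^t(x;W),F_{c,n}^t(x';W))] \leq 5\rho_c^t$. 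Using $d_\beta \geq |x-x'|$, the discreteness of $\Omega_n$, and Markov's inequality,
$$P\bigl(F_{c,n}^t(x;W)\neq F_{c,n}^t(x';W)\bigr) \leq n\,E\bigl|F_{c,n}^t(x;W)-F_{c,n}^t(x';W)\bigr| \leq 5n\rho_c^t.$$
The coupling inequality then gives $\|K_{c,n}^t(x,\cdot)-K_{c,n}^t(x',\cdot)\|_{\TV} \leq 5n\rho_c^t$ for all $x,x'$, and averaging $x'$ against the stationary $\pi_{c,n}$ yields the stated bound.

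\textbf{Verifying the one-step contraction.} I would split into two regimes. \emph{Regime (a): both $x,x'\in B := [x_c^*-\epsilon_c,\,x_c^*+\epsilon_c]$.} Lemma \ref{lem:contraction} gives $E|X_1-X_1'|\leq\gamma_c|x-x'|$ with $\gamma_c<1$, and the drift Lemma \ref{lem:drift} controls $E[V_c(X_1)+V_c(X_1')]$. Bounding the correlation term $E[|X_1-X_1'|(V_c(X_1)+V_c(X_1'))]$ by Cauchy--Schwarz, with $E(X_1-X_1')^2$ controlled via Lemma \ref{lem:hoeffding}, shows that the full ratio $E[d_\beta]/d_\beta$ stays strictly below $1$ for $\beta$ small enough. \emph{Regime (b): at least one of $x,x'$ lies outside $B$.} Now Lemma \ref{lem:contraction} only gives $E|X_1-X_1'|\leq(|c|/4)|x-x'|$, which can be expanding, so all contraction must come from the weight factor. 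Here $V_c(x)+V_c(x')\geq\epsilon_c$, so the denominator $1+\beta(V_c(x)+V_c(x'))$ sits strictly above $1+\beta\epsilon_c$; meanwhile $E[V_c(X_1)+V_c(X_1')]\leq\lambda_c(V_c(x)+V_c(x'))+2L_{c,n}$ with $L_{c,n}\to 0$. Choosing $\beta$ small enough that the numerator surplus $\beta(|c|/4)(V_c+V_c)$ is dominated by the denominator surplus $\beta\epsilon_c$, and $n$ large enough that $L_{c,n}$ is negligible, the ratio is again bounded by some $\rho_c<1$. Taking the maximum of the two regime-specific rates gives the common $\rho_c$.

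\textbf{Main obstacle.} The delicate step is Regime (b), where the factor $|c|/4$ can exceed one (e.g.\ whenever $c>4$), so that the raw displacement $|x-x'|$ is actively expanding and all contraction must come from the drift of $V_c$ in the weight. This forces a joint choice of $\beta$, $\rho_c$, and $n_c$: $\beta$ must be small enough for Regime (a) yet large enough that $\beta\epsilon_c$ outweighs the expansion in Regime (b), and $n$ must be large enough that the additive drift slack $L_{c,n}$ becomes negligible. The cross-term $E[|X_1-X_1'|(V_c(X_1)+V_c(X_1'))]$, which couples the numerator and denominator behavior of $d_\beta$, is handled by Cauchy--Schwarz together with the Hoeffding bound of Lemma \ref{lem:hoeffding}; reconciling the constants across the two regimes is the core technical calculation.
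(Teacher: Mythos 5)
Your endgame (bound $d$ by $5$, use discreteness plus Markov's inequality to get $P(X_t \neq X_t') \leq n E|X_t - X_t'|$, then the coupling inequality with $x'$ drawn from $\pi_{c,n}$) is exactly the paper's, and your overall plan—combine Lemma \ref{lem:contraction} and Lemma \ref{lem:drift} into a one-step contraction for a weighted distance—is also the paper's. The gap is in the weighted distance you chose and in how you propose to verify the contraction, and it is not a bookkeeping issue: with the plain product $d_\beta(x,x') = |x-x'|\,(1+\beta V_c(x)+\beta V_c(x'))$, a uniform one-step contraction cannot be extracted from the two lemmas when $|c|>4$, which is precisely the hard range (for $c$ slightly above $c_\star$ one has $|c|/4 \approx 1.47$). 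In your Regime (b) the displacement enters with power one and can expand by the factor $|c|/4$, while the weight is bounded ($V_c \leq 2$, so $1+\beta(V_c+V_c) \in [1, 1+4\beta]$) and the best guaranteed one-step decrease of the weight that Lemma \ref{lem:drift} provides, for starting points with $V_c(x)+V_c(x')$ just at the threshold $\epsilon_c$, is the factor $\frac{1+\beta(\lambda_c \epsilon_c + 2L_{c,n})}{1+\beta\epsilon_c}$, which exceeds $\lambda_c$ for every finite $\beta$. Since the lemmas give no control forcing $(|c|/4)\,\lambda_c < 1$ (and indeed $\lambda_c \to 1$ as $c \downarrow c_\star$, because $\sup_x (m_c^2)'(x) \to 1$), no choice of $\beta$ and $n_c$ makes your Regime (b) ratio drop below one. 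Your Cauchy--Schwarz treatment of the cross term adds a further loss even in Regime (a): $E[(X_1-X_1')^2]$ contains the binomial variance term $|m_c(x)-m_c(x')|\,(1-|m_c(x)-m_c(x')|)/n$, which at the minimal spacing $|x-x'| = 1/n$ is of the same order as the squared mean, so the coefficient degrades from $\gamma_c$ to roughly $\sqrt{\gamma_c^2+\gamma_c}\,\sqrt{E W_1^2}$, which exceeds $1$ once $\gamma_c$ is close to $1$ — and $\gamma_c$ must be close to $1$ for $c$ near $c_\star$ since $|m_c'(x_c^*)| \to 1$. (Regime (a) alone could be patched by the crude bound $V_c \leq 2$ with $\beta$ small, but that makes Regime (b) worse, and the two requirements on $\beta$ then pull in opposite directions with no feasible choice.)

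The missing idea is the one the paper uses: weight the displacement with a concave power rather than linearly. The paper works with $|x-x'|^r\,[V_c(x)+V_c(x')+1]^{1-r}$ and splits the one-step expectation by H\"older's inequality, $E[A^r B^{1-r}] \leq (EA)^r (EB)^{1-r}$, so the cross term disappears with no second-moment loss, the expansion outside the good region is damped to $(|c|/4)^r$, and the drift contributes the fixed factor $\bigl(\frac{\epsilon_c\lambda_c+2}{\epsilon_c+2}\bigr)^{1-r} < 1$; the exponent $r$ is then tuned (together with $n_c$, via conditions like \eqref{ine:n1}--\eqref{ine:n2}) so that both regime rates fall below one, yielding $\rho_c<1$. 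This is also what the Hairer--Mattingly construction you cite actually does: their semimetric carries a square root, $\sqrt{d(x,y)(1+\beta V(x)+\beta V(y))}$, i.e.\ the $r=1/2$ instance of the same interpolation; dropping the root, as in your $d_\beta$, removes exactly the mechanism that lets the drift beat a local expansion factor larger than one.
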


It follows from Proposition \ref{pro:geometric-convergence} that \eqref{eq:fastmix} from Theorem \ref{thm:phase-transition} holds, i.e., $\tau(K_{c,n}) = O(\log n)$ as $n \to \infty$.

\section{Slow Mixing when  $c < c_{\star}$} \label{sec:smallc}

The goal of this section is to establish \eqref{eq:slowmix} from Theorem \ref{thm:phase-transition}.
Recall (ii) of Proposition \ref{pro:dynamic}, which states that the dynamic system $x \mapsto m_c(x)$ is not convergent when $c < c_{\star}$.
By comparing the Markov chain to the dynamic system, we will establish the following result on the torpid convergence of chain.

\begin{proposition} \label{pro:exp}
	Suppose that $c < c_{\star}$. Then, one can find $\psi_c>0$ such that, for any positive integer~$n$ and $t \geq 0$,
	\begin{equation} \label{ine:exp}
	\max_{x \in \Omega_n} \|K_{c,n}^t(x,\cdot) - \pi_{c,n}(\cdot)\|_{\scriptsize\mbox{TV}} \geq 1/2 - 2t e^{-\psi_c n}.
	\end{equation}
\end{proposition}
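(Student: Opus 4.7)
The plan is to exploit the period-$2$ structure of the deterministic map $m_c$ at the repelling fixed point $x_c^*$: for $c<c_\star$, every iterate of $m_c$ cleanly swaps the two sides of $x_c^*$ with a uniform positive margin, and by Lemma \ref{lem:hoeffding} the Markov chain shadows this swapping for an exponentially long time. Running the chain from two starting points, one on each side of $x_c^*$, and combining them through a triangle inequality will then produce \eqref{ine:exp}.

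First I would record the analytic ingredient. By the discussion following Proposition \ref{pro:dynamic}, the range $c<c_\star$ is characterized by $|m_c'(x_c^*)|=|c|\,x_c^*(1-x_c^*)>1$. Since $m_c$ is smooth and strictly decreasing on $[0,1]$, a first-order Taylor expansion at $x_c^*$ produces constants $\eta,\delta>0$ depending only on $c$, with $\eta<\min(x_c^*,1-x_c^*)$ and $\eta+\delta<\tfrac12-x_c^*$, such that
\[
m_c\!\left([0,\,x_c^*-\eta]\right)\subseteq [x_c^*+\eta+\delta,\,\tfrac12],\qquad m_c\!\left([x_c^*+\eta,\,1]\right)\subseteq [\sigma(c),\,x_c^*-\eta-\delta].
\]
Set $G_-:=\Omega_n\cap[0,\,x_c^*-\eta]$ and $G_+:=\Omega_n\cap[x_c^*+\eta,\,1]$; these are disjoint, with $0\in G_-$ and $1\in G_+$ for every $n$.

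Next, applying Lemma \ref{lem:hoeffding} with $\epsilon=\delta$ together with a union bound over the first $t$ transitions produces an event $\mathcal{E}_t$ of probability at least $1-2te^{-2n\delta^2}$ on which $|X_{s+1}-m_c(X_s)|<\delta$ for every $s<t$. On $\mathcal{E}_t$, if $X_{2k}\in G_-$ then $m_c(X_{2k})\geq x_c^*+\eta+\delta$, so $X_{2k+1}\geq m_c(X_{2k})-\delta\geq x_c^*+\eta$ and hence $X_{2k+1}\in G_+$; the symmetric argument gives $X_{2k+1}\in G_+\Rightarrow X_{2k+2}\in G_-$. Thus, started from $X_0=0\in G_-$, induction yields $X_t\in G_-$ for every even $t$ and $X_t\in G_+$ for every odd $t$ on $\mathcal{E}_t$; started from $X_0'=1\in G_+$, the parities are swapped.

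For even $t$ (odd $t$ is identical with the roles of $G_\pm$ reversed), disjointness of $G_-$ and $G_+$ gives
\[
\|K_{c,n}^t(0,\cdot)-K_{c,n}^t(1,\cdot)\|_{\TV}\geq K_{c,n}^t(0,G_-)-K_{c,n}^t(1,G_-)\geq 1-4te^{-2n\delta^2},
\]
and two applications of the triangle inequality through $\pi_{c,n}$ produce
\[
\max_{x\in\Omega_n}\|K_{c,n}^t(x,\cdot)-\pi_{c,n}\|_{\TV}\geq \tfrac12-2te^{-\psi_c n},\qquad \psi_c:=2\delta^2,
\]
which is \eqref{ine:exp}. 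The main technical point is the construction of the trapping pair $(G_-,G_+)$ with a \emph{positive} margin $\delta$ independent of $n$; this is exactly where $c<c_\star$ enters the argument, through the strict inequality $|m_c'(x_c^*)|>1$, and it is precisely the feature that fails at $c=c_\star$, forcing the more delicate treatment in Section~\ref{sec:criticalc}.
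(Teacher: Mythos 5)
Your proposal is correct and follows essentially the same route as the paper: trap the chain in the two sets flanking the repelling fixed point (your $G_\pm$ are the paper's $A_\pm(\epsilon)$, your margin $\delta$ plays the role of $(\kappa-1)\epsilon$ coming from $m_c'(x_c^*)<-1$), use Lemma \ref{lem:hoeffding} to show the chain tracks the alternating deterministic dynamics for exponentially many steps, and finish with the same triangle-inequality argument from two opposite starting points. The only difference is cosmetic: you control a single good event via a union bound over the first $t$ transitions, while the paper bounds the escape probability $P_t$ by induction, yielding the same $O(te^{-\psi_c n})$ error and essentially the same $\psi_c$.
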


\begin{proof}
	In Appendix \ref{app:smallc}, we establish Lemma \ref{lem:mcprime-csmall}, which states that $m_c'(x_c^*)< -1$.
	Select $\kappa > 1$ and $\epsilon \in (0,\min\{x_c^*,1-x_c^*\})$ so that $m_c'(x) \leq -\kappa$ when $x \in [x_c^* - \epsilon, x_c^* + \epsilon]$.
	Fix a positive integer~$n$.
	Let $x_- \in A_-(\epsilon) := [0,x_c^*-\epsilon]$ and $x_+ \in A_+(\epsilon) := [x_c^* + \epsilon, 1]$ be multiples of $1/n$.
	(Note that we can always let $x_- = 0$ and $x_+ = 1$.)

Using the prefixed $n$, construct two Markov chains $(X_t)$ and $(X'_t)$ that follow the transition law~$K_{c,n}$ such that $X_0 = x_-$ and $X'_0 = x_+$.
	(Whether the two chains are independent does not matter.)
	By the triangle inequality, for $t \geq 0$,
	\begin{equation} \label{ine:Kt}
		\begin{aligned}
			&\|K_{c,n}^t(x_-,\cdot) - \pi_{c,n}(\cdot) \|_{\scriptsize\mbox{TV}} + 	\|K_{c,n}^t(x_+,\cdot) - \pi_{c,n}(\cdot) \|_{\scriptsize\mbox{TV}} \\
			\geq & \| K_{c,n}^t(x_-,\cdot) - K_{c,n}^t(x_+,\cdot) \|_{\scriptsize\mbox{TV}} \\
			\geq & \begin{cases}
				K_{c,n}^t(x_-, A_-(\epsilon)) - K_{c,n}^t(x_+,A_-(\epsilon)), & t \text{ is even} \\
				K_{c,n}^t(x_-, A_+(\epsilon)) - K_{c,n}^t(x_+,A_+(\epsilon)), & t \text{ is odd}
			\end{cases} \\
			\geq & \begin{cases}
				1 - P(X_t \not\in A_-(\epsilon)) - P(X'_t \not\in A_+({\epsilon})) & t \text{ is even} \\
				1 - P(X_t \not\in A_+(\epsilon)) - P(X'_t \not\in A_-(\epsilon)), & t \text{ is odd} 
			\end{cases}.
		\end{aligned}
	\end{equation}
	
	Let
	\[
	P_t = \begin{cases}
		P(X_t \not\in A_-(\epsilon)) + P(X'_t \not\in A_+({\epsilon})), & t \text{ is even} \\
		P(X_t \not\in A_+(\epsilon)) + P(X'_t \not\in A_-(\epsilon)), & t \text{ is odd} 
	\end{cases}.
	\]
	We claim $P_t \leq 4t \exp [2n(\kappa-1)^2 \epsilon^2]$ for any integer $t \geq 0$.
	This is obviously true when $t=0$.
	Assume now that the inequality holds for some given $t$.
	Consider $P_{t+1}$.
	Assume that $t+1$ is even.
	Then
	\begin{equation} \label{ine:Pt}
		\begin{aligned}
			P_{t+1}  = & P(X_{t+1} \not\in A_-(\epsilon),  \; X_t \in A_+(\epsilon)) + P(X_{t+1} \not\in A_-(\epsilon),  \; X_t \notin A_+(\epsilon))+ \\
			&P(X'_{t+1} \not\in A_+({\epsilon}),  \; X_t' \in A_-(\epsilon))) + 
			P(X'_{t+1} \not\in A_+({\epsilon}), \; X_t' \not\in A_-(\epsilon))\\
			\leq &P(X_{t+1} \not\in A_-(\epsilon), \; X_t \in A_+(\epsilon)) + P(X_{t+1}' \not\in A_+(\epsilon), \; X_t' \in A_-(\epsilon) ) + P_t \\
			 \leq &P(X_{t+1} \not\in A_-(\epsilon), \; X_t \in A_+(\epsilon)) + P(X_{t+1}' \not\in A_+(\epsilon), \; X_t' \in A_-(\epsilon) ) + \\
			 & 4t \exp [2n(\kappa-1)^2 \epsilon^2].
		\end{aligned}
	\end{equation}
	Note that $m_c$ is decreasing, so
	if $X_t \in A_+(\epsilon)$, 
	\[
	m_c(X_t) \leq m_c(x_c^*+\epsilon) \leq m_c(x_c^*) - \kappa \epsilon = x_c^* - \kappa \epsilon.
	\]
	By Lemma~\ref{lem:hoeffding}, when $X_t \in A_+(\epsilon)$,
	\[
	\begin{aligned}
		P(X_{t+1} \not\in A_-(\epsilon) \mid X_t ) &\leq P(|X_{t+1} - m_c(X_t)| \geq (\kappa-1)\epsilon \mid X_t ) \\
		&\leq 2 \exp [-2n(\kappa-1)^2\epsilon^2].
	\end{aligned}
	\]
	This implies that
	\[
	\begin{aligned}
		P(X_{t+1} \not\in A_-(\epsilon), \; X_t \in A_+(\epsilon)) &= E[P(X_{t+1} \not\in A_-(\epsilon) \mid X_t ) \, \ind_{A_+(\epsilon)}(X_t)] \\
		&\leq 2 \exp [-2n(\kappa-1)^2\epsilon^2].
	\end{aligned}
	\]
	Similarly,
	\[
	P(X_{t+1}' \not\in A_+(\epsilon), \; X_t' \in A_-(\epsilon) ) \leq 2 \exp [-2n(\kappa-1)^2\epsilon^2].
	\]
	By~\eqref{ine:Pt},
	\[
	P_{t+1} \leq 4(t+1) \exp[-2n(\kappa-1)^2\epsilon^2].
	\]
	The same can be established when $t+1$ is odd.
	Then $P_t \leq 4t \exp[2n(\kappa-1)^2\epsilon^2]$ for all~$t$ by induction.
	Take $\psi_c = 2 (\kappa-1)^2 \epsilon^2$.
	Then, by~\eqref{ine:Kt}, for $t \geq 0$,
	\[
	\max \left\{ \|K_{c,n}^t(x_-,\cdot) - \pi_{c,n}(\cdot) \|_{\scriptsize\mbox{TV}}, \,	\|K_{c,n}^t(x_+,\cdot) - \pi_{c,n}(\cdot) \|_{\scriptsize\mbox{TV}} \right\} \geq \frac{1}{2} - 2t e^{-\psi_c n}.
	\]
	Then \eqref{ine:exp} follows.
\end{proof}

We can now establish \eqref{eq:slowmix} from Theorem \ref{thm:phase-transition}, which asserts that $\tau(1/4, K_{c,n}) \geq \exp(\psi_c n)/8 $ for $n \geq 1$.
Fix a positive integer~$n$.
By Proposition \ref{pro:exp}, there exists a point $x \in \Omega_n$ such that 
\[
\|K_{c,n}^{ \lfloor \exp(\psi_c n)/8 \rfloor }(x,\cdot) - \pi_{c,n}(\cdot)\|_{\scriptsize\mbox{TV}} \geq  \frac{1}{4},
\]
where $\lfloor \cdot \rfloor$ is the floor function.
Thus, $\tau(1/4, K_{c,n}) \geq \exp(\psi_c n)/8 $.

\section{Polynomial mixing when $c = c_{\star}$} \label{sec:criticalc}

When $c = c_{\star}$, we shall establish \eqref{eq:thresholdmix} in Section \ref{ssec:lower} and \eqref{eq:thresholdmix2} in Section \ref{ssec:upper}.

\subsection{A lower bound on the mixing time} \label{ssec:lower}

In Section \ref{ssec:lower}, we establish \eqref{eq:thresholdmix} using an argument that similar to the one used in Section \ref{sec:smallc}.
We begin with an extension of Lemma \ref{lem:hoeffding} that connects $K_{c,n}^{2t}(x, \cdot)$ to $m_c^{2t}(x)$ for $x \in \Omega_n$ and $t \geq 0$.
The proof of the result is given in Appendix \ref{app:criticalc-proofs}.

\begin{lemma} \label{lem:k2t}
	Suppose that $c = c_{\star}$.
	Let $n$ be a positive integer.
	For $t \geq 0$, $x \in \Omega_n$ and $\epsilon > 0$,
	\[
	K_{c,n}^{2t}(x, \{x': |x' - m_c^{2t}(x)| \geq \epsilon \}) \leq 4 t \exp \left[ - \frac{32n \epsilon^2}{(|c|+4)^2t^2} \right].
	\]
\end{lemma}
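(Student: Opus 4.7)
The plan is a Hoeffding plus union bound argument applied to the $2t$ individual transitions, grouped into $t$ pairs, combined with a crucial global $1$-Lipschitz property of the two-step map $m_c^2$ at the critical parameter.

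The key preliminary fact I would establish is that when $c = c_\star$, the map $m_c^2 = m_c \circ m_c$ is globally $1$-Lipschitz on $[0,1]$. A direct computation yields
\[
|(m_c^2)'(x)| \;=\; c^2 \, m_c(x)(1-m_c(x)) \, m_c^2(x)(1-m_c^2(x)).
\]
Writing $a = m_c(x) \in [\sigma(c),1/2]$ so that $m_c^2(x) = \sigma(ca)$, the first-order condition for $h(a) := c^2 a(1-a)\sigma(ca)(1-\sigma(ca))$ reduces to $g(a) := (1-2a) + ca(1-a)(1-2\sigma(ca)) = 0$. At $a = x_c^*$ the identity $c_\star x_c^*(1-x_c^*) = m_{c_\star}'(x_c^*) = -1$ lets this factor as $(1-2x_c^*)(1 + c_\star x_c^*(1-x_c^*)) = 0$, so $x_c^*$ is a critical point with $h(x_c^*) = (c_\star x_c^*(1-x_c^*))^2 = 1$. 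Differentiating $g$ and using that $c < 0$ together with $(1-2a), (1-2\sigma(ca)) \geq 0$ on $[0,1/2]$ shows $g'(a) \le -2$ on $[\sigma(c),1/2]$, so $g$ is strictly decreasing and $x_c^*$ is the unique interior critical point of $h$; comparison with the boundary values $h(\sigma(c))$ and $h(1/2)$ (both strictly less than $1$) confirms that $h \le 1$ on $[\sigma(c),1/2]$. Hence $|(m_c^2)'(x)| \le 1$ for every $x \in [0,1]$, so $m_c^2$ is $1$-Lipschitz.

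Next, set $\delta := 4\epsilon/((|c|+4)t)$ and define the good event
\[
E \;:=\; \bigcap_{s=0}^{2t-1} \bigl\{\,|X_{s+1} - m_c(X_s)| < \delta\,\bigr\}.
\]
Conditioning on $X_s$ at each step and applying Lemma \ref{lem:hoeffding} gives $P(|X_{s+1} - m_c(X_s)| \ge \delta \mid X_s) \le 2\exp(-2n\delta^2)$, so a union bound yields
\[
P(E^c) \;\le\; 4t\exp(-2n\delta^2) \;=\; 4t\exp\!\left(-\frac{32n\epsilon^2}{(|c|+4)^2 t^2}\right).
\]
On $E$, writing $Y_s := X_{2s}$ and using the $(|c|/4)$-Lipschitz property of $m_c$ twice,
\[
|Y_{s+1} - m_c^2(Y_s)| \;\le\; |X_{2s+2} - m_c(X_{2s+1})| + \tfrac{|c|}{4}|X_{2s+1} - m_c(X_{2s})| \;<\; \tfrac{|c|+4}{4}\delta,
\]
and then the $1$-Lipschitz property of $m_c^2$ gives the telescoping recursion
\[
|Y_{s+1} - m_c^{2(s+1)}(x)| \;\le\; |Y_{s+1} - m_c^2(Y_s)| + |m_c^2(Y_s) - m_c^2(m_c^{2s}(x))| \;\le\; |Y_s - m_c^{2s}(x)| + \tfrac{|c|+4}{4}\delta.
\]
Iterating from $Y_0 = x$ over $t$ blocks shows that on $E$, $|X_{2t} - m_c^{2t}(x)| < t \cdot \tfrac{|c|+4}{4}\delta = \epsilon$, so $\{|X_{2t} - m_c^{2t}(x)| \ge \epsilon\} \subseteq E^c$, and the claimed bound follows.

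The main obstacle is the preliminary global $1$-Lipschitz estimate: the equality $|(m_c^2)'(x_c^*)| = 1$ at the critical parameter is immediate from $m_{c_\star}'(x_c^*) = -1$, but ruling out any other interior maximum of $h$ on $[\sigma(c),1/2]$ requires the monotonicity analysis of $g$ above. Without this, naive iteration of the $(|c|/4)$-Lipschitz bound on $m_c$ alone would give a factor $L^{2t}/(L-1)$ in the error, which is exponentially worse than the stated $t(|c|+4)/4$ accumulation; it is precisely the "neutral" character of $m_c^2$ at the critical parameter that lets errors accumulate only linearly in $t$, producing the polynomial-type estimate in the lemma.
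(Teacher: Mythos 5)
Your proof is correct and follows essentially the same route as the paper: one-step Hoeffding bounds with threshold $4\epsilon/((|c|+4)t)$, the $(|c|/4)$-Lipschitz property of $m_c$ within each two-step block, and the global non-expansiveness of $m_c^2$ at $c=c_\star$ to make the block errors accumulate only linearly, yielding the identical constant $4t\exp[-32n\epsilon^2/((|c|+4)^2t^2)]$. The only differences are organizational: you telescope forward in time using $1$-Lipschitzness of $m_c^2$ and re-prove that non-expansiveness via a direct analysis of $h(a)=c^2a(1-a)\sigma(ca)(1-\sigma(ca))$, whereas the paper telescopes by applying $m_c^{2(t-t')}$ to intermediate states and cites its appendix result (Lemma \ref{lem:critical-contract}, proved via the factorization $a(y)b(y;c)$) for the same fact.
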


In Appendix \ref{app:criticalc}, we establish Lemma~\ref{lem:m2t}, which states that, if $c = c_{\star}$, then there exists an open neighborhood $N$ of $x_c^*$ such that, for $x \in N$ and $t \geq 1$, $|m_c^{2t}(x) - x_c^*| \geq |x-x_c^*|/(2t)$.
Without loss of generality, we assume that $N$ is symmetric about $x_c^*$.
Combining Lemma~\ref{lem:m2t} and Lemma~\ref{lem:k2t} we get the following:
\begin{lemma} \label{lem:s-+}
	Suppose that $c = c_{\star}$.
	Let~$n$ be sufficiently large so that $x_- = \min (\Omega_n \cap N)$ and $x_+ = \max (\Omega_n \cap N)$ satisfy $x_- < x_c^* < x_+$,
	where $N$ is defined in Lemma \ref{lem:m2t}.
	Then, for $\gamma > 0$ and $t \in [1, n^{\gamma}]$,
	\[
	K_{c,n}^{2t}(x_-, \{ x': x' \geq x_{c}^* \}) \leq 4 n^{\gamma} \exp \left[ - \frac{8n^{1-4 \gamma}  |x_- - x_{c}^*|^2}{ (|c|+4)^2} \right],
	\]
	\[
	K_{c,n}^{2t}(x_+, \{ x': x' \leq x_{c}^* \}) \leq 4 n^{\gamma} \exp \left[ - \frac{8n^{1-4 \gamma}  |x_+ - x_{c}^*|^2}{ (|c|+4)^2} \right].
	\]
\end{lemma}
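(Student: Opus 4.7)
The plan is to fuse the deterministic escape bound of Lemma~\ref{lem:m2t} with the concentration bound of Lemma~\ref{lem:k2t}. Concretely, Lemma~\ref{lem:m2t} tells us how far $m_c^{2t}(x_\pm)$ stays from $x_c^*$, and Lemma~\ref{lem:k2t} tells us that $K_{c,n}^{2t}(x_\pm,\cdot)$ is concentrated around $m_c^{2t}(x_\pm)$; combining the two forces $K_{c,n}^{2t}(x_\pm,\cdot)$ to place little mass on the wrong side of $x_c^*$.

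The first step is a sign check. Since $c = c_\star < 0$, $m_c$ is strictly decreasing, hence $m_c^2$ is strictly increasing with the same unique fixed point $x_c^*$. A straightforward induction on $t$ then shows that $m_c^{2t}(x_-) < x_c^*$ and $m_c^{2t}(x_+) > x_c^*$ for all $t \geq 1$. Because $x_\pm \in N$ by construction, Lemma~\ref{lem:m2t} applies at $x = x_\pm$ and combining it with the sign just obtained yields
\[
m_c^{2t}(x_-) \leq x_c^* - \frac{|x_- - x_c^*|}{2t}, \qquad m_c^{2t}(x_+) \geq x_c^* + \frac{|x_+ - x_c^*|}{2t}.
\]

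The second step converts these separations into the claimed tail bounds. For the $x_-$ statement, the event $\{x' \geq x_c^*\}$ is contained in $\{|x' - m_c^{2t}(x_-)| \geq x_c^* - m_c^{2t}(x_-)\}$, which by the previous display is contained in $\{|x' - m_c^{2t}(x_-)| \geq |x_- - x_c^*|/(2t)\}$. Invoking Lemma~\ref{lem:k2t} with $\epsilon = |x_- - x_c^*|/(2t)$ gives
\[
K_{c,n}^{2t}\bigl(x_-, \{x' \geq x_c^*\}\bigr) \leq 4t \exp\!\left[-\frac{8n\,|x_- - x_c^*|^2}{(|c|+4)^2\, t^4}\right].
\]
Using the hypothesis $t \in [1, n^\gamma]$ to replace $t^4$ in the denominator of the exponent by $n^{4\gamma}$ and $t$ in the prefactor by $n^\gamma$ delivers the announced estimate. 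The argument for $x_+$ is identical, with the roles of $\leq$ and $\geq$ exchanged.

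I do not expect any real obstacle here: the lemma is essentially a bookkeeping step that marries the deterministic iterate bound of Lemma~\ref{lem:m2t} with the random concentration of Lemma~\ref{lem:k2t}. The one point worth flagging is the sign argument ensuring $m_c^{2t}(x_\pm)$ remains on the same side of $x_c^*$ as $x_\pm$; this uses that $m_c$ is decreasing (so $m_c^2$ preserves the side of $x_c^*$) and that $N$ is symmetric about $x_c^*$, so both endpoints $x_\pm$ lie in the regime where Lemma~\ref{lem:m2t} is valid.
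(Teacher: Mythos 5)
Your proposal is correct and follows essentially the same route as the paper: use the monotonicity of $m_c^2$ to pin the deterministic iterate on the correct side of $x_c^*$, invoke Lemma~\ref{lem:m2t} for the separation $|m_c^{2t}(x_\pm)-x_c^*|\geq |x_\pm - x_c^*|/(2t)$, and then apply Lemma~\ref{lem:k2t} with $\epsilon = |x_\pm - x_c^*|/(2t)$ before bounding $t$ by $n^\gamma$. No gaps.
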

\begin{proof}
	We prove the first inequality.
	The proof for the second one is analogous.
	
	Let $(X_t)$ be a Markov chain associated with $K_{c,n}$ such that $X_0 = x_-$.
	Note that since $x_{c}^*$ is the fixed point and $m_c^2$ is evidently increasing, $m_c^{2t}(x_-) \leq x_{c}^*$ for $t \geq 1$.
	A necessary condition for $X_{2t} \geq x_{c}^*$ is
	\[
	|m_c^{2t}(x_-) - x_{c}^*| < \frac{|x_- - x_{c}^*|}{2t} \text{ or } |X_{2t} - m_c^{2t}(x_-)| \geq \frac{|x_- - x_{c}^*|}{2t}.
	\] 
	By Lemma~\ref{lem:m2t}, the first inequality cannot hold.
	Thus, by Lemma~\ref{lem:k2t}, when $t \in [1, n^{\gamma}]$, where $\gamma > 0$,
	\[
	\begin{aligned}
		K_{c,n}^{2t}(x_-, \{x': x' \geq x_{c}^* \}) =& P( X_{2t} \geq x_{c}^* \mid X_0 = x_-  ) \\
		\leq & P( |X_{2t} - m_c^{2t}(x_-)| \geq |x_- - x_{c}^*|/(2t) \mid X_{0} = x_- ) \\
		\leq & 4 t \exp \left[ - \frac{8n |x_- - x_{c}^*|^2}{(|c|+4)^2t^4} \right] \\
		\leq & 4 n^{\gamma} \exp \left[ - \frac{8n^{1-4 \gamma}  |x_- - x_{c}^*|^2}{(|c|+4)^2} \right].
	\end{aligned}
	\]
\end{proof}

As an immediate consequence, we have a lower bound on the time needed for the chain to converge.
\begin{proposition} \label{pro:exp-critical}
	Suppose that $c = c_{\star}$.
	For $\gamma \in (0,1/4)$ and $n$ large enough, the following holds if $t \leq n^{\gamma}$:
	\[
	\max_{x \in \Omega_n} \|K_{c,n}^{2t}(x, \cdot) - \pi_{c,n}(\cdot)\|_{\scriptsize\mbox{TV}} \geq 1/4.
	\]
\end{proposition}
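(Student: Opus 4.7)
The strategy mirrors the one used for Proposition \ref{pro:exp}, but with the polynomial decay estimate of Lemma \ref{lem:s-+} replacing the exponential one from Section \ref{sec:smallc}. I will exhibit two starting points $x_-, x_+ \in \Omega_n$ on opposite sides of $x_c^*$, argue that the chain started at either essentially cannot cross $x_c^*$ within $n^{\gamma}$ iterations of $K_{c,n}^2$, and then invoke the triangle inequality together with the pigeonhole principle to conclude that at least one of the two TV distances to $\pi_{c,n}$ must be at least $1/4$.

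Concretely, let $N$ be the symmetric open neighborhood of $x_c^*$ furnished by Lemma \ref{lem:m2t}, and define $x_- = \min (\Omega_n \cap N)$ and $x_+ = \max (\Omega_n \cap N)$ as in Lemma \ref{lem:s-+}. Because $N$ is fixed independently of $n$, there is a constant $\delta > 0$ such that $|x_\pm - x_c^*| \geq \delta$ once $n$ is sufficiently large. Setting $A = \{x \in \Omega_n : x < x_c^*\}$, the triangle inequality gives
\begin{equation*}
\|K_{c,n}^{2t}(x_-, \cdot) - \pi_{c,n}\|_{\TV} + \|K_{c,n}^{2t}(x_+, \cdot) - \pi_{c,n}\|_{\TV} \geq K_{c,n}^{2t}(x_-, A) - K_{c,n}^{2t}(x_+, A).
\end{equation*}
Lemma \ref{lem:s-+} bounds $K_{c,n}^{2t}(x_-, A^c)$ and $K_{c,n}^{2t}(x_+, A)$ each by
\begin{equation*}
4 n^\gamma \exp\!\left[-\frac{8 n^{1-4 \gamma} \delta^2}{(|c|+4)^2}\right],
\end{equation*}
uniformly in $t \in [1, n^\gamma]$. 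Since $\gamma < 1/4$, the exponent $n^{1-4\gamma}$ diverges, so for all sufficiently large $n$ both bounds are at most $1/8$ uniformly in such $t$. The right-hand side of the displayed triangle inequality is therefore at least $3/4$, and the pigeonhole principle yields $\max_{x \in \{x_-, x_+\}} \|K_{c,n}^{2t}(x, \cdot) - \pi_{c,n}\|_{\TV} \geq 3/8 > 1/4$.

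The excluded case $t = 0$ is handled separately: $K_{c,n}^0(x_\pm, \cdot)$ is a point mass at $x_\pm$ whereas $\pi_{c,n}(\{x_\pm\}) \to 0$ as $n \to \infty$, so the TV distance converges to $1$ and in particular exceeds $1/4$ eventually. The only real subtlety, and the step I expect to do the most work, is verifying that $|x_\pm - x_c^*|$ admits a positive lower bound independent of $n$ so that the exponent in Lemma \ref{lem:s-+} actually grows as a positive power of $n$; this is automatic because the neighborhood $N$ produced by Lemma \ref{lem:m2t} depends only on $c = c_\star$. Apart from that, the proposition follows by direct substitution into Lemma \ref{lem:s-+}.
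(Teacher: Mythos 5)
Your proposal is correct and takes essentially the same route as the paper: the same points $x_\pm$ from Lemma \ref{lem:s-+}, the same triangle-inequality comparison of $K_{c,n}^{2t}(x_-,\cdot)$ and $K_{c,n}^{2t}(x_+,\cdot)$ across the half-line $\{x' < x_c^*\}$, the same use of $1-4\gamma>0$ to make the Lemma \ref{lem:s-+} bound uniformly small for $t\in[1,n^{\gamma}]$, and a pigeonhole step at the end. The only minor difference is at $t=0$, where your claim $\pi_{c,n}(\{x_\pm\})\to 0$ is true but left unjustified; the paper instead simply observes the displayed inequality holds trivially there (e.g., since $\|\delta_{x_-}-\delta_{x_+}\|_{\TV}=1$), which is the cleaner fix.
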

\begin{proof}
	Fix $\gamma \in (0,1/4)$.
	Let $N = (x_{c}^*-\delta, x_{c}^*+\delta)$ be the open neighborhood in Lemma \ref{lem:s-+}.
	Let $n$ be large enough so that $x_-$ and $x_+$ as defined in Lemma~\ref{lem:s-+} satisfy $|x_- - x_{c}^*| > \delta/2$ and $|x_+-x_{c}^*| > \delta/2$.
	Moreover, let $n$ be large enough so that
	\[
	8n^{\gamma} \exp \left[ - \frac{2n^{1-4\gamma} \delta^2}{(|c|+4)^2} \right] \leq 1/2.
	\]
	Then, by Lemma~\ref{lem:s-+}, for $t \in [1, \lfloor n^{\gamma} \rfloor$],
	\[
	\begin{aligned}
		&\|K_{c,n}^{2t}(x_-, \cdot) - \pi_{c,n}(\cdot)\|_{\scriptsize\mbox{TV}} + \|K_{c,n}^{2t}(x_+, \cdot) - \pi_{c,n}(\cdot)\|_{\scriptsize\mbox{TV}} \\
		\geq& \|K_{c,n}^{2t}(x_-, \cdot) - K_{c,n}^{2t}(x_+, \cdot) \|_{\scriptsize\mbox{TV}} \\
		\geq& K_{c,n}^{2t}(x_-, \{x': \, x' < x_{c}^*\}) - K_{c,n}^{2t}(x_+, \{x': \, x' < x_{c}^*\}) \\
		\geq& 1 - 4 n^{\gamma} \exp \left[ - \frac{8n^{1-4 \gamma}  |x_- - x_{c}^*|^2}{2 (|c|+4)^2} \right] - 4 n^{\gamma} \exp \left[ - \frac{8n^{1-4 \gamma}  |x_+ - x_{c}^*|^2}{2 (|c|+4)^2} \right] \\
		\geq & 1 - 8n^{\gamma} \exp \left[ - \frac{2n^{1-4\gamma} \delta^2}{(|c|+4)^2} \right] \\
		\geq & 1/2.
	\end{aligned}
	\]
	The inequality clearly also holds when $t = 0$.
	Thus, for $t \leq n^{\gamma}$, $\|K_{c,n}^{2t}(x_-, \cdot) - \pi_{c,n}(\cdot)\|_{\scriptsize\mbox{TV}} \geq 1/4$ or $\|K_{c,n}^{2t}(x_+, \cdot) - \pi_{c,n}(\cdot)\|_{\scriptsize\mbox{TV}}  \geq 1/4$.
	The desired result then follows.
\end{proof}

We can now establish \eqref{eq:thresholdmix} from Theorem \ref{thm:phase-transition}.
Take $c = c_{\star}$.
Let $\gamma \in (0,1/4)$ be arbitrary.
By Proposition \ref{pro:exp-critical}, given a large enough~$n$, there exists $x \in \Omega_n$ such that
\[
\|K_{c,n}^{ \lfloor 2n^{\gamma} \rfloor }(x, \cdot) - \pi_{c,n}(\cdot)\|_{\scriptsize\mbox{TV}} \geq 1/4.
\]
This implies that $\tau(1/4, K_{c,n}) \geq 2 n^{\gamma}$ when $n$ is sufficiently large.

\subsection{An upper bound on the mixing time} \label{ssec:upper}

\subsubsection{Tools for constructing the bound} \label{sssec:tools}
Section \ref{ssec:upper} is devoted to establishing \eqref{eq:thresholdmix2} from Theorem \ref{thm:phase-transition}.
We rely on a variety of mathematical concepts and tools that are now briefly reviewed.
Although we are interested in the case $c = c_{\star}$, all results in Section \ref{sssec:tools} hold for an arbitrary $c$.

Let $L^2(\pi_{c,n})$ be the vector space of real functions on $\Omega_n$ equipped with the inner product
\[
\langle f, g \rangle = \sum_{x \in \Omega_n} f(x) g(x) \pi_{c,n}(\{x\}).
\]
The norm of a function $f$ is $\|f\| = \sqrt{\langle f, f \rangle}$.
Let $L_0^2(\pi_{c,n}) = \{f \in L^2(\pi_{c,n}): \, \langle f, \ind \rangle = 0 \}$, where $\ind$ is the function that is constantly 1.
For $f \in L_0^2(\pi_{c,n})$, let $K_{c,n} f(\cdot) = \sum_{x \in \Omega_n} K_{c,n}(\cdot, \{x\}) f(x)$.
Then $K_{c,n}$ defines a linear operator on $L_0^2(\pi_{c,n})$.
It can be checked that $K_{c,n}$ is self-adjoint and hence, $K_{c,n}^2$ is positive semi-definite.
The convergence properties of the Markov chain associated with $K_{c,n}$ are largely dictated by the spectral gap of $K_{c,n}^2$, which is a number in $[0,1]$ given by
\[
G(K_{c,n}^2) = 1 - \sup_{f \in L^2_0(\omega), f \neq 0} \frac{\langle f,K_{c,n}^2 f \rangle}{\|f\|^2}.
\]
Indeed, the following lemma is standard.
\begin{lemma} \label{lem:critical-bound}
	Let $\mu: 2^{\Omega_n} \to [0,1]$ be a distribution.
	For $t \geq 0$, let $\mu K_{c,n}^t(\cdot) = \sum_{x \in \Omega_n} \mu(\{x\}) \, K_{c,n}^t(x, \cdot)$.
	Then
	\[
	\|\mu K_{c,n}^t(\cdot) - \pi_{c,n}(\cdot) \|_{\scriptsize\mbox{TV}} \leq \frac{1}{2} \left\| \frac{\df \mu}{\df \pi_{c,n}} - \ind \right\| [1 - G(K_{c,n}^2)]^{\lfloor t/2 \rfloor}.
	\]
\end{lemma}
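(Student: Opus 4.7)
The plan is to reduce total variation distance to an $L^2(\pi_{c,n})$ norm computation and then exploit the self-adjointness of $K_{c,n}$ together with the definition of the spectral gap of $K_{c,n}^2$.

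First I would let $h = \df\mu/\df\pi_{c,n}$, viewed as a function in $L^2(\pi_{c,n})$. A standard manipulation using the definition of $K_{c,n}$ as a self-adjoint operator on $L^2(\pi_{c,n})$ shows that for every test function $f$,
\[
\sum_{x \in \Omega_n} f(x) \, \mu K_{c,n}^t(\{x\}) = \langle h, K_{c,n}^t f \rangle = \langle K_{c,n}^t h, f \rangle,
\]
so the Radon--Nikodym derivative of $\mu K_{c,n}^t$ with respect to $\pi_{c,n}$ is the function $K_{c,n}^t h$. Then
\[
\|\mu K_{c,n}^t - \pi_{c,n}\|_{\TV} = \frac{1}{2}\sum_{x \in \Omega_n} |K_{c,n}^t h(x) - 1|\,\pi_{c,n}(\{x\}) = \frac{1}{2}\langle |K_{c,n}^t h - \ind|,\ind\rangle,
\]
and Cauchy--Schwarz (together with $\|\ind\| = 1$) upgrades this to $\tfrac{1}{2}\|K_{c,n}^t h - \ind\|$.

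Next I would use that $K_{c,n}\ind = \ind$ (each row of the transition kernel sums to one), so $K_{c,n}^t h - \ind = K_{c,n}^t(h - \ind)$, and that $h - \ind \in L_0^2(\pi_{c,n})$ because $\langle h - \ind,\ind\rangle = \mu(\Omega_n) - 1 = 0$. The subspace $L_0^2(\pi_{c,n})$ is invariant under $K_{c,n}$, and by the definition given just before the lemma, the operator norm of $K_{c,n}^2$ restricted to $L_0^2(\pi_{c,n})$ equals $1 - G(K_{c,n}^2)$. Since $K_{c,n}^2$ is positive semi-definite, this norm equals its spectral radius, hence for any positive integer $m$ and any $g \in L_0^2(\pi_{c,n})$,
\[
\|K_{c,n}^{2m} g\| \leq [1 - G(K_{c,n}^2)]^m \|g\|.
\]
For odd exponents $t = 2m+1$, I would combine this with the contraction $\|K_{c,n} g\| \leq \|g\|$ (a consequence of Jensen's inequality for $K_{c,n}$ or of $\|K_{c,n}\| \leq 1$) to get $\|K_{c,n}^t g\| \leq [1 - G(K_{c,n}^2)]^{\lfloor t/2\rfloor}\|g\|$ uniformly in the parity of $t$.

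Applying this contraction bound to $g = h - \ind$ and substituting back into the TV estimate yields
\[
\|\mu K_{c,n}^t - \pi_{c,n}\|_{\TV} \leq \frac{1}{2} \|h - \ind\|\,[1 - G(K_{c,n}^2)]^{\lfloor t/2\rfloor},
\]
which is exactly the stated inequality. There is no serious obstacle here; the only subtlety is handling odd $t$ correctly, which the floor in the exponent anticipates, and being careful that the spectral gap is defined for $K_{c,n}^2$ rather than $K_{c,n}$ (which matters because $K_{c,n}$ could have negative spectrum close to $-1$, a possibility consistent with the period-two behaviour visible from the dynamical system picture in Section~\ref{sec:general}).
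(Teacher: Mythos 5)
Your proof is correct, and it is exactly the standard spectral argument the paper invokes: the paper states Lemma \ref{lem:critical-bound} without proof, calling it standard, and your chain of steps (identifying $\df(\mu K_{c,n}^t)/\df\pi_{c,n} = K_{c,n}^t h$ via reversibility, Cauchy--Schwarz, restriction to $L_0^2(\pi_{c,n})$, the operator-norm identity $\|K_{c,n}^2\|_{L_0^2} = 1 - G(K_{c,n}^2)$ for the positive semi-definite $K_{c,n}^2$, and $\|K_{c,n}g\| \leq \|g\|$ to absorb odd $t$ into the floor) is precisely that argument. The attention to the parity issue and to why the gap is defined for $K_{c,n}^2$ rather than $K_{c,n}$ is well placed and consistent with the period-two structure of the chain.
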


To bound $\|\mu K_{c,n}^t(\cdot) - \pi_{c,n}(\cdot) \|_{\scriptsize\mbox{TV}}$ from above, we need a lower bound on $G(K_{c,n}^2)$, and this will be obtained by considering the conductance, given by
\[
\Phi(K_{c,n}^2) = \inf \left\{ \frac{\sum_{x \in A} \pi_{c,n}(\{x\}) \, K_{c,n}^2(x, A^c) }{\pi_{c,n}(A) \, \pi_{c,n}(A^c)}: \, A \subset \Omega_n, \, 0 < \pi_{c,n}(A) < 1 \right\}.
\]
The famous Cheeger's inequality \cite{lawler1988bounds} implies the following.
\begin{lemma} \citep{lawler1988bounds} \label{lem:cheeger}
	$\Phi(K_{c,n}^2)^2/8 \leq G(K_{c,n}^2) \leq \Phi(K_{c,n}^2)$.
\end{lemma}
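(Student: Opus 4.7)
First, observe that $K_{c,n}$ is reversible with respect to $\pi_{c,n}$: by \eqref{eq:picn}, the pairing $\pi_{c,n}(\{x\}) K_{c,n}(x, \{x'\})$ reduces (up to the common normalizing constant of $\pi_{c,n}$) to ${n \choose nx}{n \choose nx'} e^{cnxx'}$, which is symmetric in $x$ and $x'$. Consequently $K_{c,n}^2$ is self-adjoint and positive semi-definite on $L^2(\pi_{c,n})$, and
\[
G(K_{c,n}^2) = \inf_{f \in L^2_0(\pi_{c,n}),\, f \neq 0} \frac{\mathcal{E}(f,f)}{\|f\|^2}, \quad \mathcal{E}(f,f) := \tfrac{1}{2}\sum_{x,x'}\pi_{c,n}(\{x\}) K_{c,n}^2(x,\{x'\}) [f(x)-f(x')]^2,
\]
so the claim is exactly the classical Cheeger inequality applied to the reversible kernel $K_{c,n}^2$.

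For the upper bound $G(K_{c,n}^2) \leq \Phi(K_{c,n}^2)$, I plug the centered indicator $f = \ind_A - \pi_{c,n}(A)$ into the Rayleigh quotient for each admissible $A$. A direct calculation using reversibility of $K_{c,n}^2$ gives $\|f\|^2 = \pi_{c,n}(A)\pi_{c,n}(A^c)$ and $\mathcal{E}(f,f) = \sum_{x \in A}\pi_{c,n}(\{x\}) K_{c,n}^2(x, A^c)$, so the Rayleigh quotient at $f$ coincides with the ratio in the definition of $\Phi(K_{c,n}^2)$; taking the infimum over admissible $A$ completes this direction.

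For the harder bound $\Phi(K_{c,n}^2)^2/8 \leq G(K_{c,n}^2)$, I would follow the classical Cheeger argument of \cite{lawler1988bounds}. Given non-zero $f \in L^2_0(\pi_{c,n})$, I would first shift by its $\pi_{c,n}$-median $m$ (leaving $\mathcal{E}$ unchanged while only increasing $\|\cdot\|^2$), then decompose $f - m = g_+ - g_-$ into its disjointly supported positive and negative parts, each supported on a set of $\pi_{c,n}$-measure at most $1/2$ by definition of the median. The pointwise inequality $(g_+(x) - g_+(x'))(g_-(x) - g_-(x')) \leq 0$ gives $\mathcal{E}(f-m,f-m) \geq \mathcal{E}(g_+,g_+) + \mathcal{E}(g_-,g_-)$, while orthogonality gives $\|f-m\|^2 = \|g_+\|^2 + \|g_-\|^2$, so it suffices to prove $\mathcal{E}(g,g)/\|g\|^2 \geq \Phi(K_{c,n}^2)^2/8$ for non-negative $g$ whose support has $\pi_{c,n}$-measure at most $1/2$. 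For such a $g$, Cauchy--Schwarz applied to the factorization $g(x)^2 - g(x')^2 = [g(x)-g(x')][g(x)+g(x')]$ weighted by $\pi_{c,n}(\{x\}) K_{c,n}^2(x, \{x'\})$, together with $(a+b)^2 \leq 2(a^2+b^2)$ and stationarity, yields
\[
\Bigl(\sum_{x,x'} \pi_{c,n}(\{x\}) K_{c,n}^2(x,\{x'\})\,|g(x)^2 - g(x')^2|\Bigr)^2 \leq 8\,\mathcal{E}(g,g)\,\|g\|^2,
\]
while a coarea-type identity combined with the definition of $\Phi(K_{c,n}^2)$ and the measure constraint $\pi_{c,n}(\{g^2 \leq t\}) \geq 1/2$ produces the matching lower bound $\Phi(K_{c,n}^2)\|g\|^2$ on the same sum. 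Rearranging gives the desired inequality.

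The main obstacle is the coarea step in the Cheeger direction: each level set $\{g^2 > t\}$ must be matched against the numerator in the definition of $\Phi(K_{c,n}^2)$, and the constant $8$ (coming from $(a+b)^2 \leq 2(a^2+b^2)$ inside Cauchy--Schwarz) must be tracked carefully, as must the factor produced by the passage from $f$ to $g_\pm$. Since $K_{c,n}^2$ is itself a reversible Markov kernel with stationary distribution $\pi_{c,n}$, nothing in the argument depends on the specific structure of our chain beyond reversibility, and the conclusion reduces to the classical Cheeger inequality of \cite{lawler1988bounds}.
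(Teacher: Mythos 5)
The paper offers no proof of this lemma at all: it is imported verbatim as the classical Cheeger inequality of \cite{lawler1988bounds} (Lawler--Sokal), applied to the self-adjoint, positive semi-definite kernel $K_{c,n}^2$, so there is nothing in the paper to compare your argument against line by line. Your blind reconstruction is essentially a correct proof sketch of that cited result. The preliminary reduction is right: $\pi_{c,n}(\{x\})K_{c,n}(x,\{x'\})$ is proportional to ${n \choose nx}{n \choose nx'}e^{cnxx'}$, hence symmetric, which gives reversibility (this is exactly the ``can be checked'' step the paper alludes to), and the Dirichlet-form expression for $G(K_{c,n}^2)$ follows. The easy direction via $f = \ind_A - \pi_{c,n}(A)$ is exact for the paper's conductance, which uses the product denominator $\pi_{c,n}(A)\pi_{c,n}(A^c)$ rather than $\min\{\pi_{c,n}(A),\pi_{c,n}(A^c)\}$; note this normalization is also why the constant is $8$ rather than the more familiar $2$. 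Your accounting of that constant is consistent: Cauchy--Schwarz plus $(a+b)^2 \le 2(a^2+b^2)$ and stationarity give the factor $8$ in
\[
\Bigl(\textstyle\sum_{x,x'}\pi_{c,n}(\{x\})K_{c,n}^2(x,\{x'\})\,|g(x)^2-g(x')^2|\Bigr)^2 \le 8\,\mathcal{E}(g,g)\,\|g\|^2,
\]
while in the coarea step the bound $Q(A_t,A_t^c) \ge \Phi(K_{c,n}^2)\,\pi_{c,n}(A_t)\,\pi_{c,n}(A_t^c) \ge \tfrac{1}{2}\Phi(K_{c,n}^2)\,\pi_{c,n}(A_t)$ (using that the complement of the support of $g$ has measure at least $1/2$) combines with the symmetrization factor $2$ to yield exactly $\Phi(K_{c,n}^2)\|g\|^2$, so no constant is lost in passing from the min-denominator to the product-denominator conductance. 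The median-shift and $g_\pm$ decomposition steps are also correctly oriented (shifting preserves $\mathcal{E}$ and can only increase the norm of a mean-zero $f$, so it suffices to bound the shifted quotient). In short: the paper buys this lemma by citation; your sketch supplies the standard self-contained argument, and the places you flag as needing care (the coarea identity and constant tracking) do in fact close as you describe.
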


Now our goal becomes bounding $\Phi(K_{c,n}^2)$ from below.
We shall rely on a type of isoperimetric inequality.
To be specific, we will use the following lemma, which is similar to existing results in \cite{lovasz1999hit, belloni2009computational, dwivedi2019log, andrieu2024explicit}.

\begin{lemma} \citep{qin2024convergence} \label{lem:iso-conductance}
Suppose that $K_{c,n}$ and $\pi_{c,n}$ satisfy each of the following conditions:
\begin{enumerate}
	\item There exist $ \delta \in (0,\infty)$ and $\varepsilon \in (0,1]$ such that, whenever $|x-x'| < \delta$, $\|K_{c,n}^2(x,\cdot) - K_{c,n}^2(x', \cdot)\|_{\TV} \leq 1-\epsilon$. 
	\item There exists $\kappa > 0$ such that if $S_1, S_2, S_3$ form a partition of $\Omega_n$, and $\inf_{x \in S_1, \, x' \in S_2} |x-x'| \geq \delta$, then
	\[
		\pi_{c,n}(S_3) \geq \kappa \pi_{c,n}(S_1)\pi_{c,n}(S_2).
	\]
\end{enumerate}
Then, for any $a \in (0,1)$,
\[
\begin{aligned}
    \Phi(K_{c,n}^2) \geq \varepsilon \min \left\{\frac{1-a}{2}, \frac{a^2\kappa}{4}\right\}.
\end{aligned}
\]
\end{lemma}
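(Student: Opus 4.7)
The plan is to lower-bound the conductance $\Phi(K_{c,n}^2)$ by fixing an arbitrary $A \subset \Omega_n$ with $0 < \pi_{c,n}(A) < 1$ and applying the standard Lov{\'a}sz--Simonovits ``bad sets'' trichotomy. For a threshold $\alpha$ to be chosen shortly, partition
\[
S_1 = \{x \in A : K_{c,n}^2(x, A^c) < \alpha\}, \quad S_2 = \{x \in A^c : K_{c,n}^2(x, A) < \alpha\}, \quad S_3 = \Omega_n \setminus (S_1 \cup S_2),
\]
so that $S_1, S_2$ are the deep interiors of $A, A^c$ where the chain rarely escapes, while $S_3$ is a boundary layer in which the escape probability (from $A$ for $x \in A$, or into $A$ for $x \in A^c$) is at least $\alpha$.

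The first key step is to check that this partition satisfies the separation hypothesis of condition~2. For any $x \in S_1$ and $x' \in S_2$,
\[
\|K_{c,n}^2(x, \cdot) - K_{c,n}^2(x', \cdot)\|_{\TV} \geq K_{c,n}^2(x, A) - K_{c,n}^2(x', A) > (1 - \alpha) - \alpha = 1 - 2\alpha.
\]
Choosing $\alpha < \varepsilon/2$ makes the right side strictly greater than $1 - \varepsilon$, so the contrapositive of condition~1 forces $|x-x'| \geq \delta$ for every $x \in S_1$ and $x' \in S_2$. Condition~2 then delivers $\pi_{c,n}(S_3) \geq \kappa\, \pi_{c,n}(S_1)\, \pi_{c,n}(S_2)$.

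With this in hand, I would split into cases governed by an auxiliary parameter $\beta \in (0,1)$. If $\pi_{c,n}(S_1) \leq \beta\, \pi_{c,n}(A)$, then the escape flow $\sum_{x \in A} \pi_{c,n}(\{x\}) K_{c,n}^2(x, A^c)$ is at least $\alpha\, \pi_{c,n}(A \setminus S_1) \geq \alpha(1-\beta)\,\pi_{c,n}(A)$, so the flow ratio is at least $\alpha(1-\beta)/\pi_{c,n}(A^c) \geq \alpha(1-\beta)$. The symmetric case $\pi_{c,n}(S_2) \leq \beta\, \pi_{c,n}(A^c)$ is handled identically after using reversibility of $K_{c,n}^2$ (noted in Section~\ref{sssec:tools}) to rewrite the flow as $\sum_{x \in A^c} \pi_{c,n}(\{x\}) K_{c,n}^2(x, A)$. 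In the remaining balanced case, $\pi_{c,n}(S_1)\pi_{c,n}(S_2) > \beta^2 \pi_{c,n}(A) \pi_{c,n}(A^c)$, and summing the two one-sided flow bounds gives $2\sum_{x \in A} \pi_{c,n}(\{x\}) K_{c,n}^2(x, A^c) \geq \alpha\, \pi_{c,n}(S_3) > \alpha\kappa \beta^2\, \pi_{c,n}(A)\pi_{c,n}(A^c)$, so the flow ratio is at least $\alpha\kappa \beta^2/2$. Combining the three cases, $\Phi(K_{c,n}^2) \geq \min\{\alpha(1-\beta),\, \alpha\kappa\beta^2/2\}$ for every admissible $\alpha, \beta$. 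Setting $\beta = a$ and letting $\alpha \uparrow \varepsilon/2$ recovers the stated bound $\varepsilon \min\{(1-a)/2,\, a^2\kappa/4\}$.

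The main delicate point is that the total-variation inequality used to invoke the contrapositive of condition~1 must be strict, i.e., $\|K_{c,n}^2(x,\cdot) - K_{c,n}^2(x',\cdot)\|_{\TV} > 1 - \varepsilon$. Since condition~1 is phrased with $\leq$, the threshold $\alpha = \varepsilon/2$ is borderline, which forces us to keep $\alpha$ strictly below $\varepsilon/2$ and obtain the final bound as a supremum over such $\alpha$. Besides this bit of care, the argument is a clean case analysis, and the only other thing to handle carefully is the appeal to reversibility so that the escape flow can be read off from either side of the cut $A, A^c$.
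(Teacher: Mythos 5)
Your argument is correct: the bad-set partition $S_1,S_2,S_3$ with threshold $\alpha<\varepsilon/2$, the contrapositive of the close-coupling condition to force $|x-x'|\ge\delta$ between $S_1$ and $S_2$, the use of reversibility of $K_{c,n}^2$ to read the flow from either side of the cut, and the three-case analysis with $\beta=a$ all go through, and the constants recover $\varepsilon\min\{(1-a)/2,\,a^2\kappa/4\}$ exactly. The paper itself does not prove this lemma (it is quoted from the cited reference), and your proof is essentially the standard Lov\'asz--Simonovits-style argument underlying that source, so there is nothing further to reconcile.
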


The first condition in Lemma \ref{lem:iso-conductance} is called a close coupling condition, while the second one is called an isoperimetric inequality.
In what follows, we establish these two conditions when $c = c_{\star}$.

\subsubsection{Close coupling}

We now establish the following close coupling condition.

\begin{lemma} \label{lem:close}
Suppose that $c = c_{\star}$, and let $n$ be a positive integer.
 Let $x,x' \in \Omega_n$ be such that $|x-x'| <  3/(2n)$. 
 Then
\[
\begin{aligned}
    \|K_{c,n}^2(x,\cdot) - K_{c,n}^2(x',\cdot)\|_{\TV} \leq 1 - e^c.
\end{aligned}
\]
\end{lemma}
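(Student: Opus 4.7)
The plan is to reduce the claim to a pointwise minorization between the single-step kernels at $x$ and $x'$, and then read off the total-variation bound. Because $x, x' \in \Omega_n$ are integer multiples of $1/n$ and $|x-x'| < 3/(2n)$, we must have $|x-x'| \in \{0,\,1/n\}$; the case $x = x'$ is trivial, so by relabeling I may assume $x' = x + 1/n$. Since a Markov kernel is a non-expansion in total variation,
\[
\|K_{c,n}^2(x,\cdot) - K_{c,n}^2(x',\cdot)\|_{\TV} \;\leq\; \|K_{c,n}(x,\cdot) - K_{c,n}(x',\cdot)\|_{\TV},
\]
and it suffices to bound the right-hand side by $1 - e^c$.

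The core computation is the pointwise likelihood ratio. Both $K_{c,n}(x,\cdot)$ and $K_{c,n}(x',\cdot)$ are (scaled) binomials sharing the same binomial coefficients, so after writing $\sigma(z) = e^z/(1+e^z)$ and using $x' = x + 1/n$, the $\sigma^j$ and $(1-\sigma)^{n-j}$ factors combine (via $\sigma(z)/(1-\sigma(z)) = e^z$) to give, for every $j \in \{0, 1, \ldots, n\}$,
\[
\log \frac{K_{c,n}(x,\{j/n\})}{K_{c,n}(x',\{j/n\})} \;=\; -\frac{jc}{n} + n r, \qquad r \;:=\; \log \frac{1+e^{cx+c/n}}{1+e^{cx}}.
\]
This expression is affine in $j$, and since $c = c_{\star} < 0$ the slope $-c/n$ is positive, so the minimum over $j \in \{0,\dots,n\}$ is attained at $j = 0$ and equals $nr$. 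To see $nr \geq c$, note that
\[
\frac{1+e^{cx+c/n}}{1+e^{cx}} \;=\; \frac{1}{1+e^{cx}} \cdot 1 \;+\; \frac{e^{cx}}{1+e^{cx}} \cdot e^{c/n}
\]
is a convex combination of $1$ and $e^{c/n}$, both lying in $(0,1]$ when $c<0$; hence it is at least $e^{c/n}$, giving $r \geq c/n$ and so $nr \geq c$. Therefore $K_{c,n}(x,\{y\}) \geq e^c\, K_{c,n}(x',\{y\})$ for every $y \in \Omega_n$.

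The conversion to a TV bound is routine. Because $e^c \leq 1$, a one-line case split gives $\min\bigl(K_{c,n}(x,\{y\}), K_{c,n}(x',\{y\})\bigr) \geq e^c\, K_{c,n}(x',\{y\})$ for each $y$: when $K_{c,n}(x,\{y\}) \leq K_{c,n}(x',\{y\})$ this is the derived inequality, and otherwise the minimum equals $K_{c,n}(x',\{y\}) \geq e^c K_{c,n}(x',\{y\})$. Summing over $y$ yields $\sum_y \min \geq e^c$, and the identity $\|K_{c,n}(x,\cdot) - K_{c,n}(x',\cdot)\|_{\TV} = 1 - \sum_y \min(\cdot,\cdot)$ finishes the proof. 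There is no substantive obstacle here; the only step that demands care is the algebraic cancellation that reduces the log-ratio to an \emph{affine} function of $j$, which is precisely what allows us to identify the minimum in $j$ and extract the sharp constant $e^c$ by minimizing a linear function on $\{0,1,\dots,n\}$.
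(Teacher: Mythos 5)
Your proof is correct, but it takes a genuinely different route from the paper. The paper constructs an explicit coupling of the two-step kernels: it runs both chains with the same uniform vector $U$ in the first step, merges them in the second step if they coincide, and bounds $P(X_1 \neq X_1') \leq 1 - [1 - |m_c(x) - m_c(x+1/n)|]^n$, using that $m_c$ is decreasing and convex (so the increment is largest at $0$) and then a calculation showing $[1 - (1/2 - 1/(1+e^{-c/n}))]^n \geq e^c$. You instead invoke the non-expansion of total variation under a Markov kernel to reduce to the one-step kernels, and then prove a pointwise minorization $K_{c,n}(x,\{y\}) \geq e^c K_{c,n}(x',\{y\})$ by computing the binomial likelihood ratio exactly: the log-ratio $-jc/n + nr$ is affine in $j$ with positive slope (since $c = c_\star < 0$), its minimum $nr$ satisfies $nr \geq c$ via the convex-combination bound on $(1+e^{cx+c/n})/(1+e^{cx})$, and the overlap bound $\sum_y \min \geq e^c$ follows; I checked these computations and they are sound, including the reduction of $|x-x'| < 3/(2n)$ on $\Omega_n$ to $|x-x'| \in \{0, 1/n\}$. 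What each approach buys: the paper's coupling is hands-on and reuses the update map $f_{c,n}$ already introduced for the fast-mixing regime, while your argument is more structural — it yields a uniform minorization between neighboring one-step kernels (strictly stronger than a TV overlap bound), makes the constant $e^c$ appear transparently as $e^{nr}$ with $r \geq c/n$, and avoids the separate convexity-of-$m_c$ step and the final "routine calculation"; the only extra ingredient you need is the standard fact that applying a common kernel cannot increase total variation, which handles the passage from $K_{c,n}$ to $K_{c,n}^2$.
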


\begin{proof}
	The desired result obviously holds when $x = x'$.
	Assume that $x \neq x'$ and without loss of generality assume that $x' > x$.
	Since $\Omega_n = \{j/n: \, j=0,\dots,n\}$, $x' - x = 1/n$.
	
	We now construct a coupling of $K_{c,n}^2(x, \cdot)$ and $K_{c,n}^2(x',\cdot)$.
	Let $U = (U_1, \dots, U_n)$ be a vector of $\mathsf{Uni}(0,1)$ random variables.
	Let $X_1 = f_{c,n}(x; U)$ and $X_1' = f_{c,n}(x'; U)$, where $f_{c,n}$ is defined in \eqref{eq:fcn}, i.e., $f_{c,n}(x; U) = n^{-1} \sum_{i = 1}^n \ind_{[0,m_c(x)]}(U_i)$.
	Given $(X_1, X_1') = (x_1, x_1')$, let $X_2 = X_2'$ be distributed as $K_{c,n}(x_1, \cdot)$ if $x_1 = x_1'$, and let $X_2$ be distributed as $K_{c,n}(x_1, \cdot)$ and independently let $X_2'$ be distributed as $K_{c,n}(x_1', \cdot)$ if $x_1 \neq x_1'$.
	It is easy to see that $X_2 \sim K_{c,n}^2(x, \cdot)$ and $X_2' \sim K_{c,n}^2(x', \cdot)$.
	Then, by the well-known coupling inequality,
	\[
	\|K_{c,n}^2(x,\cdot) - K_{c,n}^2(x',\cdot)\|_{\TV} \leq P(X_2 \neq X_2') \leq P(X_1 \neq X_1').
	\]
	Note that
	\[
	\begin{aligned}
		P(X_1 \neq X_1') &\leq 1 - P\left( \bigcap_{i=1}^n [ \ind_{[0,m_c(x)}(U_i) = \ind_{[0,m_c(x')}(U_i) ] \right) \\
		&= 1 - [1 - |m_c(x) - m_c(x+1/n)|]^n.
	\end{aligned}
	\]
	It can be checked that, since $c = c_{\star} < 0$, $m_c$ is decreasing and convex.
	Thus, $|m_c(x) - m_c(x+1/n)| \leq m_c(0) - m_c(1/n) = 1/2 - 1/(1+e^{-c/n})$.
	Then, by routine calculations,
	\[
	\begin{aligned}
		\|K_{c,n}^2(x,\cdot) - K_{c,n}^2(x',\cdot)\|_{\TV} &\leq 1 - \left[ 1- \left( \frac{1}{2} - \frac{1}{1+ e^{-c/n}} \right) \right]^n \leq 1 - e^c.
	\end{aligned}
	\]
\end{proof}

\subsubsection{Isoperimetric inequality} 

Next, we establish the following isoperimetric inequality.
\begin{lemma}\label{lem:iso}
Suppose that $c = c_{\star}$.
Let $n$ be greater than 20.
Then if $S_1, S_2, S_3$ form a partition of $\Omega_n$, and $\inf_{x \in S_1, \, x' \in S_2} |x-x'| \geq 3/(2n)$, then
\[
\pi_{c,n}(S_3) \geq \frac{1}{e^{100} n^2} \pi_{c,n}(S_1)\pi_{c,n}(S_2).
\]
\end{lemma}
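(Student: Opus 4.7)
My plan is to combine the near log-concavity of $\pi_{c,n}$ from Lemma~\ref{lem:logconc} with a three-point comparison and a straightforward $\ell^{\infty}$-to-$\ell^1$ estimate. The case $S_1 = \emptyset$ or $S_2 = \emptyset$ being trivial, assume both are non-empty, and pick $x^* \in S_1$ and $z^* \in S_2$ that maximize $\pi_{c,n}(\{\cdot\})$ on their respective sets. Since $|\Omega_n| = n+1$,
\[ \pi_{c,n}(S_1)\pi_{c,n}(S_2) \leq (n+1)^2 \, \pi_{c,n}(\{x^*\})\pi_{c,n}(\{z^*\}). \]
Without loss of generality $x^* < z^*$. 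The hypothesis $z^* - x^* \geq 3/(2n)$ and the fact that $x^*, z^* \in \Omega_n$ force $z^* - x^* = k/n$ for some integer $k \geq 2$, yielding $k-1 \geq 1$ interior lattice points $p_j := x^* + j/n$. A short pigeonhole argument shows at least one $p_j$ must lie in $S_3$: otherwise the sequence of labels starting at $p_0 = x^* \in S_1$ and ending at $p_k = z^* \in S_2$ would have some adjacent transition from $S_1$ to $S_2$, but adjacent lattice points are at distance $1/n < 3/(2n)$, contradicting the separation hypothesis.

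Pick such a $y^* = p_j \in S_3$ and write $y^* = (1-t)x^* + t z^*$ with $t = j/k \in (0,1)$. By Lemma~\ref{lem:logconc}, the natural continuation of $\log \pi_{c,n}$ to $[0,1]$ has second derivative bounded above by some absolute constant $\lambda$. A standard computation (applying concavity of $x \mapsto \log \pi_{c,n}(x) - \lambda x^2/2$) then yields
\[ \log \pi_{c,n}(\{y^*\}) \geq (1-t)\log\pi_{c,n}(\{x^*\}) + t\log\pi_{c,n}(\{z^*\}) - \frac{\lambda}{2}t(1-t)(z^*-x^*)^2, \]
with the deficit at most $\lambda/8$ since $t(1-t)\leq 1/4$ and $z^*-x^* \leq 1$. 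Exponentiating gives $\pi_{c,n}(\{x^*\})^{1-t}\pi_{c,n}(\{z^*\})^t \leq e^{\lambda/8}\pi_{c,n}(\{y^*\})$, and multiplying by the trivial bound $\pi_{c,n}(\{x^*\})^t\pi_{c,n}(\{z^*\})^{1-t} \leq 1$ yields
\[ \pi_{c,n}(\{x^*\})\pi_{c,n}(\{z^*\}) \leq e^{\lambda/8}\pi_{c,n}(\{y^*\}) \leq e^{\lambda/8}\pi_{c,n}(S_3). \]

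Combining these two estimates gives $\pi_{c,n}(S_1)\pi_{c,n}(S_2) \leq (n+1)^2 e^{\lambda/8}\pi_{c,n}(S_3)$. For $n \geq 20$, $(n+1)^2 \leq (21/20)^2 n^2$, so the desired bound $\pi_{c,n}(S_3) \geq \pi_{c,n}(S_1)\pi_{c,n}(S_2)/(e^{100} n^2)$ follows once $(21/20)^2 e^{\lambda/8} \leq e^{100}$, i.e., $\lambda$ is at most a constant on the order of $800$. The main obstacle is producing a sufficiently moderate explicit bound on $\lambda$ via direct manipulation of the pmf~\eqref{eq:picn}, which is presumably where the bulk of the work in Lemma~\ref{lem:logconc} lies; once $\lambda$ is a modest constant, the combinatorial and analytic steps above give the result essentially for free.
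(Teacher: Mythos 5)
Your argument is, in substance, the paper's own proof: pick the maximizers $a_1\in S_1$, $a_2\in S_2$, locate an intermediate lattice point of $S_3$ between them (forced by the $3/(2n)$ separation), and use the near-log-concavity from Lemma~\ref{lem:logconc} via a quadratic correction — the paper works with $g_n(x)=\log\omega_n(x)-400(x-a_1)(x-a_2)$, which is exactly your ``concavity of $\log\pi_{c,n}-\lambda x^2/2$'' device, and it exploits $\log\omega_n\le 0$ in the same way you use $\pi_{c,n}(\{x^*\})^t\pi_{c,n}(\{z^*\})^{1-t}\le 1$, with the same deficit bound $400\left(\frac{a_2-a_1}{2}\right)^2\le 100$.

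The one place your write-up does not quite deliver the stated constant is the counting step $\pi_{c,n}(S_1)\pi_{c,n}(S_2)\le (n+1)^2\,\pi_{c,n}(\{x^*\})\pi_{c,n}(\{z^*\})$: with $\lambda=800$ this yields only $\pi_{c,n}(S_3)\ge \pi_{c,n}(S_1)\pi_{c,n}(S_2)/\bigl(e^{100}(n+1)^2\bigr)$, a factor $(1+1/n)^2$ short, and your proposed remedy — requiring $(21/20)^2 e^{\lambda/8}\le e^{100}$, i.e.\ $\lambda\lesssim 799.2$ — is not supplied by Lemma~\ref{lem:logconc}, which only asserts the strict bound $<800$ with no quantitative margin. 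The paper sidesteps this by bounding $\pi_{c,n}(S_1)\pi_{c,n}(S_2)< n^2\,\omega_n(a_1)\,\omega_n(a_2)$ directly, which is legitimate because $S_1$, $S_2$ are disjoint and $S_3$ is nonempty, so $|S_1|\,|S_2|<n^2$ (indeed $|S_1|,|S_2|\le n-1$). With that trivial sharpening of your cardinality estimate, your proof closes exactly as the paper's does.
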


Proving \ref{lem:iso} requires a bit of work.
First note that the probability mass function of $\pi_{c,n}$, according to \eqref{eq:picn}, is
\[
\omega_n (x) = C_n {n \choose nx} (1+ e^{cx})^n, \quad x \in \Omega_n,
\]
where $C_n$ is a normalizing constant.
It is convenient to extend $\omega_n$ to $[0,1]$ by letting
\[
\omega_n(x) = \frac{C_n \Gamma(n+1) }{\Gamma(nx + 1) \Gamma(n-nx+1)} (1+e^{cx})^n, \quad x \in [0,1],
\]
where $\Gamma$ denotes the Gamma function.
Numerical calculations reveal that $\omega_n$ is not necessarily log-concave; see Figure \ref{fig:deriv2}.
Establishing isoperimetric inequalities for distributions that are not log-concave is typically challenging.
We will rely on the following lemma, established in Appendix \ref{app:criticalc-proofs}, which roughly states that $\omega_n$ is nearly log-concave when $n$ is large.

\begin{figure}
	\centering
	\includegraphics[width=0.5\textwidth]{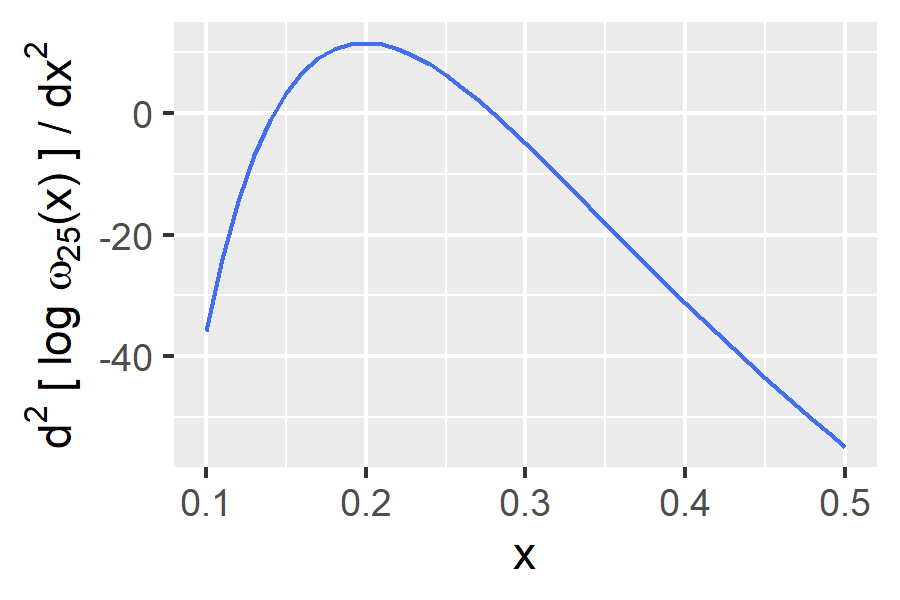}
		\caption{Plot of the second derivative of $\log \omega_{25}(x)$ for $x \in [0.1,0.5]$.} \label{fig:deriv2}
\end{figure}

\begin{lemma}\label{lem:logconc}
	Suppose that $c = c_{\star}$.
When $n > 20$, for $x \in [0,1]$,
\[
\begin{aligned}
     \frac{\df^2}{\df x^2} \log \omega_n(x)  < 800
\end{aligned}
\]
\end{lemma}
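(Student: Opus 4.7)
The plan is to compute $\frac{\df^2}{\df x^2}\log\omega_n(x)$ in closed form, bound it by a simpler quantity via a trigamma estimate, and then split $[0,1]$ into a bulk region where a near log-concavity inequality does the work and two boundary regions where the trigamma terms dominate directly.

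Writing $\psi_1(z) = \frac{\df^2}{\df z^2}\log\Gamma(z)$ for the trigamma function, direct differentiation gives
\[
\frac{\df^2}{\df x^2}\log\omega_n(x) = -n^2\psi_1(nx+1) - n^2\psi_1(n(1-x)+1) + nc^2\sigma(cx)(1-\sigma(cx)).
\]
The bound $\psi_1(z) = \sum_{k\geq 0}(z+k)^{-2} \geq \int_0^{\infty}(z+t)^{-2}\,\df t = 1/z$ (valid for $z>0$ by left-Riemann comparison on a decreasing integrand) upgrades this to
\[
H_n(x) := -\frac{n^2}{nx+1} - \frac{n^2}{n(1-x)+1} + nc^2\phi(x), \qquad \phi(x) := \sigma(cx)(1-\sigma(cx)).
\]
Using the algebraic identity $-\frac{n^2}{nx+1} = -\frac{n}{x} + \frac{1}{x(x+1/n)}$ and the analogous one for $1-x$, I rewrite
\[
H_n(x) = n g(x) + \frac{1}{x(x+1/n)} + \frac{1}{(1-x)(1-x+1/n)}, \qquad g(x) := c^2\phi(x) - \frac{1}{x(1-x)}.
\]

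The core of the argument is the claim that $g(x) \leq 0$ on $(0,1)$, equivalently $h(x) := c^2 x(1-x)\phi(x) \leq 1$. I prove this by showing $\log h$ is strictly concave on $(0,1)$ with a critical point at $x_c^*$. Concavity follows from
\[
\frac{\df^2}{\df x^2}\log h(x) = -\frac{2x^2 - 2x + 1}{(x(1-x))^2} - 2c^2\phi(x) < 0,
\]
since $2x^2-2x+1 = 2(x-\tfrac12)^2 + \tfrac12 > 0$. For the critical-point condition, $\frac{\df}{\df x}\log h = \frac{1-2x}{x(1-x)} + c(1-2\sigma(cx))$ vanishes at $x_c^*$: substituting $\sigma(c_\star x_c^*) = x_c^*$ together with $1/(x_c^*(1-x_c^*)) = |c_\star|$ and the sign identity $c_\star + |c_\star| = 0$ makes the two summands cancel. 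Since $\log h \to -\infty$ at both endpoints, $x_c^*$ is the unique critical point and hence the global maximizer, and $h(x_c^*) = c_\star^2(1/|c_\star|)^2 = 1$.

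For the bulk region $x \in [1/20, 19/20]$ with $n > 20$, the decomposition together with $g \leq 0$ gives $H_n(x) \leq 1/x^2 + 1/(1-x)^2 \leq 800$. For $x \in [0, 1/20]$, monotonicity of $\psi_1$ and $\psi_1(z) \geq 1/z$ yield $\psi_1(nx+1) \geq 1/(n/20+1) \geq 10/n$ (using $n/20+1 < n/10$ for $n>20$), so $-n^2\psi_1(nx+1) \leq -10n$; the other trigamma term is at most $-(n-1)$, and $nc^2\phi(x) \leq nc_\star^2/4 < 9n$ because $c_\star^2/4 < 9$. The total is strictly negative, hence $<800$; the case $x \in [19/20,1]$ is analogous. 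The main obstacle is establishing the near log-concavity of $h$: it relies crucially on two specific features of $c_\star$, namely $c_\star + |c_\star| = 0$ combined with the critical relation $|c_\star|x_c^*(1-x_c^*) = 1$ (which together force $x_c^*$ to be a critical point of $\log h$), and the numerical fact $c_\star^2/4 < 10$ (which lets the trigamma term dominate in the boundary regions). Loosening either would break one of the two case analyses.
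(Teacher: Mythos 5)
Your proposal is correct, and while it follows the same skeleton as the paper's proof (trigamma representation of $\frac{\df^2}{\df x^2}\log\omega_n$, the bound $\psi^{(1)}(z)\geq 1/z$, reduction to the same core inequality $c_\star^2\sigma(c_\star x)[1-\sigma(c_\star x)]\leq \frac{1}{x}+\frac{1}{1-x}$ on $(0,1)$, and a bulk/boundary split at $x=1/20$), the two key sub-steps are executed differently. For the core inequality (the paper's Lemma \ref{lem:n-infty}), the paper shows the auxiliary function $g(x)=c^2x(1-x)-e^{-cx}(1+e^{cx})^2$ is concave with a double zero at $x_c^*$, whereas you show that $\log\bigl[c^2x(1-x)\sigma(cx)(1-\sigma(cx))\bigr]$ is strictly concave with its unique critical point at $x_c^*$ where the value is $0$; both arguments hinge on the same identity $c_\star x_c^*(1-x_c^*)=-1$ (equivalently $m_{c_\star}'(x_c^*)=-1$), which you assert as "the critical relation" — it is exactly \eqref{eq:identity} in the paper, so you should cite or rederive it rather than state it bare, but it is not a gap. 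For the passage from the limiting inequality to finite $n$, the paper perturbs $h(x,y)$ in $y$ via $\sup|\partial_y h|\leq 800$ to get $h(x,1/n)\leq 800/n$ on $(0.05,0.95)$, while you use the exact identity $-\frac{n}{x+1/n}=-\frac{n}{x}+\frac{1}{x(x+1/n)}$, so the finite-$n$ correction is computed exactly and bounded by roughly $401$ on the bulk — no mean-value step is needed, and your argument incidentally shows the constant $800$ is not tight there. Your boundary treatment (trigamma monotonicity plus $c_\star^2/4<9$) is the same in spirit as the paper's Case 1 and relies on the same level of numerical knowledge of $c_\star$ as the paper itself; the stray "$c_\star^2/4<10$" in your closing remark versus "$<9$" in the body is a harmless inconsistency, as either suffices for your negativity bound.
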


We are now ready to prove Lemma~\ref{lem:iso}.

\begin{proof}
	Let $S_1, S_2, S_3$ be a partition of $\Omega_n$ such that $\inf_{x \in S_1, \, x' \in S_2} |x-x'| \geq 3/(2n)$.
	Without loss of generality, assume that $S_1$ and $S_2$ are nonempty.
	
	Let $a_1 \in S_1$ and $a_2 \in S_2$ be such that 
	\[
	\begin{aligned}
		\omega_n(a_1) = \max_{x\in S_1} \omega_n(x), \quad \omega_n(a_2) = \max_{x\in S_2} \omega_n(x).
	\end{aligned}
	\]   
	Without loosing of generality, let $a_1 < a_2$.
	Since $\inf_{x \in S_1, \, x' \in S_2} |x-x'| \geq 3/(2n)$, there is at least one element $a_3 \in S_3$ such that $a_1 < a_3 < a_2$.
	Moreover,
	\[
	\begin{aligned}
		\pi_{c,n}(S_1)\pi_{c,n}(S_2) < n^2 \omega_n(a_1) \, \omega_n(a_2), \quad \pi_{c,n}(S_3) \geq \omega_n(a_3).
	\end{aligned}
	\]  
	Hence, it suffices to show that
	\begin{equation} \label{ine:iso-1}
	\omega_n(a_3) \geq e^{-100} \omega_n(a_1) \, \omega_n(a_2).
	\end{equation}
	
	For $x \in [0,1]$, let $g_n(x) = \log \omega_n(x) - 400 (x-a_1) (x-a_2)$.
	Then $g_n(a_1) = \log \omega_n(a_1)$, $g_n(a_2) = \log \omega_n(a_2)$.
	By Lemma \ref{lem:logconc}, $g_n$ is concave, so
	\[
	g_n(a_3) \geq \frac{a_2 - a_3}{a_2 - a_1} g_n(a_1) + \frac{a_3 - a_1}{a_2 - a_1} g_n(a_2).
	\]
	Then
	\[
	\begin{aligned}
		\log \omega_n(a_3) & \geq \frac{a_2 - a_3}{a_2 - a_1} \log \omega_n(a_1) + \frac{a_3 - a_1}{a_2 - a_1} \log \omega_n(a_2) + 400(a_3-a_1) (a_3-a_2) \\
		&\geq \log \omega_n(a_1) + \log \omega_n(a_2) - 400 \left( \frac{a_2-a_1}{2} \right)^2 \\
		&\geq \log [ \omega_n(a_1) \, \omega_n(a_2)] - 100.
	\end{aligned}
	\]
	Then \eqref{ine:iso-1} follows.
\end{proof}

\subsubsection{An upper bound on $\tau(K_{c_{\star},n})$}

Combining Lemmas \ref{lem:critical-bound}, \ref{lem:cheeger}, \ref{lem:iso-conductance}, \ref{lem:close}, and \ref{lem:iso} yields the following result.

\begin{proposition} \label{pro:bound-critical}
	Suppose that $c = c_{\star}$ and $n > 20$.
	Let $\mu: 2^{\Omega_n} \to [0,1]$ be a distribution.
	Then, for $t \geq 0$,
	\[
	\|\mu K_{c,n}^t(\cdot) - \pi_{c,n}(\cdot)\|_{\TV} \leq \frac{1}{2} \left\| \frac{\df \mu}{\df \pi_{c,n} } - \ind \right\| \left( 1 - \frac{e^{2c}}{2048 e^{200} n^4} \right)^{\lfloor t/2 \rfloor}.
	\]
\end{proposition}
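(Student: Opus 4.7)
The plan is to assemble the proposition by plugging the already-established lemmas into one another in sequence: the close coupling estimate (Lemma \ref{lem:close}), the isoperimetric inequality (Lemma \ref{lem:iso}), the conductance lower bound (Lemma \ref{lem:iso-conductance}), Cheeger's inequality (Lemma \ref{lem:cheeger}), and finally the spectral-gap convergence bound (Lemma \ref{lem:critical-bound}). Since all the analytic work has been packaged into these lemmas, the remaining task is essentially bookkeeping of constants.

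First, I would invoke Lemma \ref{lem:iso-conductance} with the parameters supplied by Lemmas \ref{lem:close} and \ref{lem:iso}, namely $\delta = 3/(2n)$, $\varepsilon = e^{c}$, and $\kappa = 1/(e^{100} n^{2})$. This gives, for every $a \in (0,1)$,
\[
\Phi(K_{c,n}^{2}) \;\geq\; e^{c} \min\!\left\{ \frac{1-a}{2},\; \frac{a^{2}}{4 e^{100} n^{2}} \right\}.
\]
To keep the algebra simple I would just take $a = 1/2$, which (using $n > 20$ so that the second term is the smaller one) yields
\[
\Phi(K_{c,n}^{2}) \;\geq\; \frac{e^{c}}{16\, e^{100}\, n^{2}}.
\]

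Next, I would feed this into the lower half of Cheeger's inequality (Lemma \ref{lem:cheeger}) to obtain
\[
G(K_{c,n}^{2}) \;\geq\; \frac{\Phi(K_{c,n}^{2})^{2}}{8} \;\geq\; \frac{e^{2c}}{2048\, e^{200}\, n^{4}},
\]
where the factor $2048 = 8 \cdot 16^{2}$ simply tracks the constants above. Finally, plugging this into Lemma \ref{lem:critical-bound} gives exactly the stated inequality, since $1 - G(K_{c,n}^{2}) \leq 1 - e^{2c}/(2048\, e^{200} n^{4})$ and that factor is raised to the power $\lfloor t/2 \rfloor$.

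There is no genuine obstacle here; the only thing to be careful about is the choice of $a$ in Lemma \ref{lem:iso-conductance} and verifying that, for $n > 20$, the value $a = 1/2$ indeed makes $a^{2} \kappa / 4$ the active term in the minimum (so that the $(1-a)/2$ branch is not the binding one). A slightly better constant could be obtained by optimizing $a$, but since the statement only records $n^{-4}$ scaling with an absolute (if large) prefactor, the convenient choice $a = 1/2$ suffices and avoids a needless calculation.
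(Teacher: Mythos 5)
Your proposal is correct and follows essentially the same route as the paper: both combine Lemmas \ref{lem:close} and \ref{lem:iso} via Lemma \ref{lem:iso-conductance} (the choice $a=1/2$ reproducing the paper's bound $\Phi(K_{c,n}^2) \geq e^c/(16 e^{100} n^2)$), then apply Cheeger's inequality and Lemma \ref{lem:critical-bound}. The constant bookkeeping, including $2048 = 8\cdot 16^2$, matches the paper exactly.
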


\begin{proof}
	By Lemmas \ref{lem:iso-conductance}, \ref{lem:close}, and \ref{lem:iso}, $\Phi(K_{c,n}^2) \geq e^c/(16 e^{100} n^2)$.
	By Lemma \ref{lem:cheeger}, $G(K_{c,n}^2) \geq e^{2c}/(2048 e^{200} n^4)$.
	The desired result then follows from Lemma \ref{lem:critical-bound}.
\end{proof}

For $x \in \Omega_n$, let $\mu_x$ be the point mass at $x$.
Then 
\[
\begin{aligned}
	\left\| \frac{\df \mu}{\df \pi_{c,n} } - \ind \right\|^2 &= \sum_{x' \in \Omega_n} \left[ \frac{\ind_{\{x\}}(x')}{\omega_n(x')} - 1 \right]^2 \omega_n(x') \\
	&= \left[ \sum_{x' \in \Omega_n} {n \choose nx'} (1+e^{cx'})^n \right]   \left[ {n \choose nx} (1+e^{cx})^n \right]^{-1} - 1 \\
	&\leq 4^n.
\end{aligned}
\]
Then, by Proposition \ref{pro:bound-critical}, 
\[
\|K_{c,n}^t(x, \cdot) - \pi_{c,n}(\cdot) \|_{\TV} \leq 2^{n-1}  \left( 1 - \frac{e^{2c}}{2048 e^{200} n^4} \right)^{\lfloor t/2 \rfloor}.
\]
It is then easy to show that, when $n$ is sufficiently large, $\tau(1/4, K_{c,n}) = O(n^5)$.
That is, \eqref{eq:thresholdmix2} from Theorem \ref{thm:phase-transition} holds.

\section{Conclusion}
Many questions can be asked about our model and our techniques. One  intriguing question about our model is whether any of the regimes exhibit a cutoff phenomenon in the mixing time. For instance, the mixing time range for the critical case is \([c_1 n^{\gamma}, c_2 n^5]\) with \(\gamma \in (0, 1/4)\). This indicates that a more precise analysis is necessary to prove or disprove the existence of a cutoff.

Regarding extensions, a natural direction is to establish a similar phase transition for bipartite graphs with \(m\) nodes on one side and \(n\) nodes on the other, where the ratio \(m/n \rightarrow c\) as both \(m\) and \(n\) increase. Furthermore, considering the significance of non-log-concave sampling and the scarcity of existing tools \cite{balasubramanian2022towards}, we are interested in exploring additional applications of the techniques introduced in Section \ref{sec:criticalc} to analyze sampling algorithms for non-log-concave distributions.

\bigskip
\noindent{\bf Acknowledgement.}
The second author was partially supported by the National Science Foundation through grant DMS-2112887. The third author was partially supported by the National Science Foundation through grant DMS-2210849 and an Adobe Data Science Research Award.
\bibliographystyle{plain}
\bibliography{yourbibfile}
\appendix

\section{Properties of the Dynamic System}

\subsection{Basic properties}

In this section, we establish some basic properties of the dynamic system $x \mapsto m_c(x)$.
In particular, we shall prove Proposition \ref{pro:dynamic}.

First recall some definitions and basic facts.
For $x \in [0,1]$ and $c \in \mathbb{R}$, $m_c(x) = \sigma(cx)$, where $\sigma(x) = e^x/(1+e^x)$.
Clearly, $m_c'(x) = c\sigma(cx)[1-\sigma(cx)]$.
When $c \geq 0$, $m_c$ is an increasing function;
when $c < 0$, $m_c$ is a decreasing function.
It is clear that $m_c$ has a fixed point $x_c^* \in [0,1]$.

\begin{lemma} \label{lem:unique}
	The fixed point is unique.
\end{lemma}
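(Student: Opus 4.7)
The plan is to split into cases according to the sign of $c$, and in each case analyze the monotonicity or concavity of $g(x) := m_c(x) - x$ on $[0,1]$, whose zeros are precisely the fixed points of $m_c$. Observe at the outset that $g(0) = \sigma(0) = 1/2 > 0$ and $g(1) = \sigma(c) - 1 < 0$, so by continuity $g$ has at least one zero in $(0,1)$; the content of the lemma is that there is at most one.

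For $c \le 0$, I would note that $m_c'(x) = c\,\sigma(cx)[1 - \sigma(cx)] \le 0$ on $[0,1]$, so $m_c$ is non-increasing and hence $g(x) = m_c(x) - x$ is strictly decreasing. Uniqueness is then immediate.

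For $c > 0$, the key observation is that $m_c$ is concave on $[0,1]$: since $\sigma'(t) = \sigma(t)[1-\sigma(t)]$ is maximized at $t = 0$ and is strictly decreasing on $[0, \infty)$, the map $x \mapsto m_c'(x) = c\,\sigma'(cx)$ is decreasing on $[0,1]$, so $m_c$, and hence $g$, is concave. I would then invoke the elementary fact that a concave function $g$ on $[0,1]$ with $g(0) > 0 > g(1)$ has a unique zero: if $z_1 < z_2$ were two zeros in $[0,1]$, concavity along the chord from $(0, g(0))$ to $(z_2, g(z_2)) = (z_2, 0)$ would give $g(z_1) \ge \tfrac{z_2 - z_1}{z_2}\, g(0) > 0$, contradicting $g(z_1) = 0$.

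There is no real obstacle here; the argument is essentially routine. The only subtle point is that for $c > 0$ one needs the one-sided concavity of $\sigma$ on $[0, \infty)$ (since $\sigma$ is not globally concave on $\mathbb{R}$), but this is exactly what is available because $cx \ge 0$ throughout $[0,1]$. A uniform ``monotone plus concave'' argument is therefore unavailable, and it seems cleaner to simply split on the sign of $c$ as above than to try to unify the two cases.
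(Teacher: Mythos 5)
Your proof is correct and follows essentially the same route as the paper: monotonicity of $m_c$ handles $c\le 0$, and for $c>0$ the concavity of $g(x)=m_c(x)-x$ together with $g(0)>0>g(1)$ forces a unique zero. The only cosmetic difference is that you obtain concavity from the monotonicity of $\sigma'$ on $[0,\infty)$ and conclude via a chord inequality, whereas the paper computes $m_c''\le 0$ directly and argues via the mean value theorem.
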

\begin{proof}
	When $c \leq 0$, $m_c$ is decreasing, so the fixed point is unique.
	Assume that $c > 0$.
	Then, for $x \in [0,1]$,
	\[
	m_c''(x) = \frac{c^2  e^{-cx} (-1+e^{-cx}) }{(1+e^{-cx})^3} \leq 0,
	\]
	and the inequality is strict if $x > 0$.
	Thus, $g_c(x) = m_c(x) - x$ is strongly concave.
	On the other hand, $g_c(0) > 0$, $g_c(1) < 0$.
	Assume that $x'$ and $x''$ are two fixed points of $m_c$, so that $0 < x' \leq x'' < 1$ and $g(x') = g(x'') = 0$.
	By the mean value theorem and strong concavity, $g'(x) < 0$ for $x \geq x'$.
	On the other hand, if $x' \neq x''$, then $g(x) = 0$ for some $x \in (x',x'')$, leading to a contradiction.
	Thus, $x' = x''$ and the proof is complete.
\end{proof}

The second order composite of $m_c$ is $x \mapsto m_c^2(x) = \sigma[c\sigma(cx)]$.
Its derivative is
\begin{equation} \label{eq:m2}
	(m_c^2)'(x) = c^2 \, \sigma'(cx) \, \sigma'(c\sigma(cx)).
\end{equation}
Note that $(m_c^2)'(x)$ is always non-negative.

Recall that $x_{\star} \approx 1.278$ be the solution to the equation 
\begin{equation} \nonumber
	\frac{x e^x}{1 + e^x} = 1,
\end{equation}
and
\begin{equation} \nonumber
	c_{\star} = -1 - x_{\star} - e^{x_{\star}} \approx -5.87.
\end{equation}
We see that
\begin{equation} \label{eq:c00}
	\frac{1}{1+e^{x_{\star}}} + \frac{x_{\star}}{c_{\star}} = \frac{c_{\star} + x_{\star} + x_{\star} e^{x_{\star}}}{(1+e^{x_{\star}}) c_{\star}} = \frac{c_{\star} + x_{\star} + 1 + e^{x_{\star}}}{(1+e^{x_{\star}})c_{\star}} = 0,
\end{equation}
where the second equality comes from~\eqref{eq:x0} and the final equality follows from~\eqref{eq:c0}.

In what follows, we give useful properties of $m_c$ in three cases: $c > c_{\star}$, $c < c_{\star}$, and $c = c_{\star}$.

\subsection{Properties of $m_c$ when $c>c_{\star}$} \label{app:bigc}

We begin with two lemmas concerning the derivatives of $m_c$ and $m_c^2$.
The first lemma shows that $x_c^*$ is attractive.

\begin{lemma} \label{lem:firstorder}
	Suppose that $c > c_{\star}$.
	Then $|m_c'(x_c^*)| < 1$.
\end{lemma}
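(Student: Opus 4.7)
The plan is to reparametrize the problem via $u := c x_c^*$ so that both the derivative $m_c'(x_c^*)$ and the threshold condition ``$c > c_{\star}$'' become statements about a single real variable. Using $x_c^* = \sigma(c x_c^*) = e^u/(1+e^u)$ together with $m_c'(x) = c\,\sigma(cx)[1-\sigma(cx)]$ and the identity $c = u/x_c^* = u(1+e^{-u})$, a short calculation collapses the derivative to the clean form
\begin{equation*}
  m_c'(x_c^*) = \frac{u}{1+e^u}.
\end{equation*}
Thus $|m_c'(x_c^*)| < 1$ becomes $|u|/(1+e^u) < 1$, and what remains is to translate the hypothesis $c > c_{\star}$ into the correct range of $u$.

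For that translation I would first verify that $u \mapsto c(u) := u(1+e^{-u})$ is a strictly increasing bijection of $\mathbb{R}$; its derivative $1 + e^{-u}(1-u)$ is plainly positive for $u \le 1$, and a brief inspection of $(u-1)e^{-u}$, which is maximized at $u=2$ with value $e^{-2} < 1$, handles $u > 1$. The next step is to identify the critical threshold in the $u$-coordinate, which I claim is $u = -x_\star$. Indeed, the defining relation \eqref{eq:x0} rewrites as $x_\star = 1 + e^{-x_\star}$, and substituting into $c(-x_\star)$ gives exactly $-x_\star(1+e^{x_\star}) = -1 - x_\star - e^{x_\star} = c_{\star}$ by \eqref{eq:c0}. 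By monotonicity of $c(u)$, the condition $c > c_{\star}$ is equivalent to $u > -x_\star$.

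It remains to verify that $|u|/(1+e^u) < 1$ holds precisely for $u > -x_\star$. For $u \ge 0$ this is immediate from $e^u \ge 1 + u$. For $u < 0$, setting $v := -u > 0$ reduces the inequality to $v < 1 + e^{-v}$; the function $h(v) := 1 + e^{-v} - v$ is strictly decreasing with $h(0) = 2 > 0$, so it has a unique zero, which by the rewritten form of \eqref{eq:x0} is exactly $v = x_\star$. Hence the bound holds iff $-u < x_\star$, matching the range $c > c_{\star}$ from the previous paragraph. The only real obstacle is the algebraic bookkeeping: recognizing that the seemingly peculiar identities \eqref{eq:x0}--\eqref{eq:c0} are precisely designed to encode the boundary case $|m_c'(x_c^*)| = 1$ for the dynamical system. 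Once the reparametrization $u = c x_c^*$ is in place, every verification reduces to elementary one-variable monotonicity.
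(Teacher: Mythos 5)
Your proposal is correct, and it takes a route that is organized differently from the paper's. The paper proves the lemma by a case split on the sign of $c$: for $c \geq 0$ it bounds $m_c'(x_c^*) \leq c e^{-c/2} < 1$ directly, and for $c \in (c_\star, 0)$ it first shows $x_c^* < x_\star/(-c)$ by comparing the decreasing function $x \mapsto m_c(x) - x$ at the two points via \eqref{eq:c00}, and then invokes the monotonicity of $z \mapsto z e^z/(1+e^z)$ together with \eqref{eq:x0}. You instead pass to the variable $u = c x_c^*$ once and for all, collapse the derivative to $u/(1+e^u)$, and encode the hypothesis through the strictly increasing bijection $u \mapsto u(1+e^{-u})$, whose value at $-x_\star$ is exactly $c_\star$; each of your verifications (the positivity of $1+(1-u)e^{-u}$, the rewriting $x_\star = 1 + e^{-x_\star}$, the monotone function $h(v) = 1 + e^{-v} - v$) checks out, and your $u<0$ branch is at bottom the same comparison of $-cx_c^*$ with $x_\star$ that the paper makes, just reached through the monotonicity of $c(u)$ rather than of the fixed-point equation. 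What your route buys is uniformity and sharpness: there is no case analysis on the sign of $c$, the injectivity of $c(u)$ even re-derives uniqueness of the fixed point, and you obtain the full equivalence $c > c_\star \iff |m_c'(x_c^*)| < 1$, which would also yield the companion facts the paper proves separately (Lemma \ref{lem:mcprime-csmall} for $c < c_\star$ and $m_{c_\star}'(x_{c_\star}^*) = -1$ in Lemma \ref{lem:mcstar-critical}) with no extra work. The paper's version, by contrast, keeps everything phrased in terms of the original fixed point $x_c^*$, which is the object reused in its later drift and contraction arguments. One tiny point worth making explicit in a write-up: $u = c x_c^*$ with $c = u/x_c^*$ is legitimate because $x_c^* = \sigma(c x_c^*) \in (0,1)$, so no division by zero occurs.
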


\begin{proof}
	\noindent {\bf Case 1:} $c \geq 0$.
	
	In this case, $m_c'(x_c^*) \geq 0$.
	
	Since $c x_c^* \geq 0$,
	\[
	x_c^* = \frac{1}{1 + e^{-cx_c^*}} \geq 1/2,
	\]
	which in turn implies that
	\[
	\sigma(cx_c^*) = \frac{1}{1 + e^{-cx_c^*}} \geq \frac{1}{1 + e^{-c/2}} \geq 1/2.
	\]
	By the fact that $x \mapsto x(1-x)$ is decreasing on $[1/2,1]$,
	\[
	m_c'(x_c^*) = c \sigma(cx_c^*) [1-\sigma(cx_c^*)] \leq \frac{c e^{-c/2}}{(1+e^{-c/2})^2} \leq ce^{-c/2}.
	\]
	The quantity $ce^{-c/2}$ is maximized at $c = 2$ and the maximum is less than 1.
	This proves the claim for Case 1.
	
	\noindent {\bf Case 2:} $c \in (c_{\star},0)$. 
	
	In this case, $m_c'(x_c^*) < 0$.
	
	Let us first compare $x_c^*$ and $x_{\star}/(-c)$.
	The function
	\[
	x \mapsto m_c(x) - x = \frac{1}{1 + e^{-cx}} - x
	\]
	is strictly decreasing.
	By~\eqref{eq:c00},
	\[
	m_c(x_{\star}/(-c)) - \frac{x_{\star}}{-c} = \frac{1}{1+e^{x_{\star}}} + \frac{x_{\star}}{c} < \frac{1}{1+e^{x_{\star}}} + \frac{x_{\star}}{c_{\star}} = 0 = m_c(x_c^*) - x_c^*.
	\]
	Hence, $x_c^* < x_{\star}/(-c)$.

	Note that
	\[
	m_c'(x_c^*) = \frac{e^{-cx_c^*}}{1+e^{-cx_c^*}} \frac{c}{1 + e^{-cx_c^*}} = \frac{cx_c^* e^{-cx_c^*}}{1+e^{-cx_c^*}}.
	\]
	Note that $x \mapsto xe^x/(1+e^x)$ is strictly increasing for $x > 0$.
	By~\eqref{eq:x0} and the fact that $x_c^* < x_{\star}/(-c)$,
	\[
	\frac{-cx_c^* e^{-cx_c^*}}{1+e^{-cx_c^*}} < \frac{x_{\star} e^{x_{\star}}}{1+e^{x_{\star}}} = 1.
	\]
	Thus, $m_c'(x_c^*) > -1$, and the proof is complete.
	
\end{proof}

\begin{lemma} \label{lem:secondorder}
	Suppose that $c > c_{\star}$.
	Then $0 \leq (m_c^2)'(x) < 1$ for $x \in [0,1]$.
\end{lemma}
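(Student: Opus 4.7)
Nonnegativity of $(m_c^2)'(x)$ is immediate from \eqref{eq:m2}, since $\sigma'(y) \geq 0$ for all $y$. For the strict upper bound, the plan is to use the identity $\sigma'(y) = 1/(4\cosh^2(y/2))$ to rewrite
\[
(m_c^2)'(x) = \frac{c^2}{16\,\cosh^2(cx/2)\,\cosh^2(c\sigma(cx)/2)},
\]
reducing the task to proving $\Psi(x) := 4\cosh(cx/2)\cosh(c\sigma(cx)/2) > |c|$ for every $x \in [0,1]$. I would split this into two cases by the sign of $c$.

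For $c \geq 0$ the bound is quick: since $\sigma(cx) \geq 1/2$ for $x \geq 0$, one has $\cosh(c\sigma(cx)/2) \geq \cosh(c/4)$, and combined with $\cosh(cx/2) \geq 1$ and the elementary inequality $\cosh(z) > z$ for $z \geq 0$, the claim follows.

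For $c \in (c_{\star},0)$, I would introduce the variables $y = -cx \geq 0$ and $w = |c|\sigma(cx)/2 \geq 0$. Using $\sigma(-y) = 1/(1+e^y)$ one finds $|c| = 2w(1+e^y)$, and the target inequality rearranges to $e^{-y/2}\cosh(w) > w$. At the endpoints $y=0$ and $y=|c|$ this holds by the same $\cosh > \mathrm{id}$ bound. Differentiating $\Psi$, expressed as a function of $y$ via $w(y) = |c|/(2(1+e^y))$, and setting the derivative to zero yields the critical equation
\[
1 - e^{-y} = 2w\tanh(w).
\]
Squaring $e^{-y/2}\cosh(w) > w$ and substituting this relation reduces the interior case to $\tilde F(2w) > 0$, where
\[
\tilde F(u) := 2 + 2\cosh(u) - 2u\sinh(u) - u^2.
\]
Since $\tilde F'(u) = -2u(1+\cosh u)$, $\tilde F$ is even and unimodal with $\tilde F(0)=4$; a direct computation using the defining relation $e^{x_{\star}}(x_{\star}-1) = 1$ shows $\tilde F(x_{\star}) = 0$, so $\tilde F(2w) > 0$ iff $w < x_{\star}/2$. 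Meanwhile, combining the critical equation with $|c| = 2w(1+e^y)$ expresses $|c|$ as the strictly increasing function
\[
C(w) := \frac{4w(1-w\tanh w)}{1-2w\tanh w},
\]
and a short calculation using $\tanh(x_{\star}/2) = (2-x_{\star})/x_{\star}$ gives $C(x_{\star}/2) = |c_{\star}|$. Hence $c > c_{\star}$ is equivalent to $w < x_{\star}/2$, which gives $\Psi > |c|$ at the (unique, by monotonicity of $C$) interior critical point, closing the case.

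The main obstacle is the interior analysis in the negative-$c$ regime: carrying out the algebra that collapses the critical-point value of $\Psi$ to the clean one-variable inequality $\tilde F(2w) > 0$, and identifying the positive root of $\tilde F$ with $x_{\star}$ via the defining equation of $x_{\star}$. The strict monotonicity of $C(w)$ is what pairs the threshold $c = c_{\star}$ with $w = x_{\star}/2$ and also guarantees uniqueness of the interior critical point.
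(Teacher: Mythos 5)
Your proposal is correct, but it takes a genuinely different route from the paper's. The paper also starts from \eqref{eq:m2}, and for $c\geq 0$ it bounds $\sigma'(cx)\le 1/4$ and $\sigma'(c\sigma(cx))\le 1/(1+e^{c/2})$ and then maximizes $g(c)=c^2/(4(1+e^{c/2}))$ over $c$, whereas your $c\ge 0$ case ($4\cosh(cx/2)\cosh(c\sigma(cx)/2)\ge 4\cosh(c/4)>c$) is more elementary. For negative $c$ the paper substitutes $y=-c\sigma(cx)$, factors $(m_c^2)'(x)=a(y)\,b(y;c)$ with $a(z)=z/(1+e^z)$ and $b(z;c)=(-c-z)/(1+e^{-z})$, and bounds each factor by its \emph{unconstrained} maximum over $z\ge 0$: $a(y)\le a(x_{\star})=e^{-x_{\star}}$ and $b(y;c)\le e^{z_c}<e^{x_{\star}}$, where $z_c<x_{\star}$ precisely because $c>c_{\star}$; this decoupling avoids any critical-point analysis of the composite map, at the cost of splitting the negative range into $[-2,0)$ and $(c_{\star},-2)$. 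You instead keep the constraint linking the two factors and minimize $\Psi(y)=4\cosh(y/2)\cosh(w(y))$ along the actual trajectory, which is algebraically heavier (the critical equation $1-e^{-y}=2w\tanh w$, the reduction to $\tilde F(2w)>0$, and the identities $\tilde F(x_{\star})=0$ and $C(x_{\star}/2)=|c_{\star}|$, all of which check out against \eqref{eq:x0}, i.e.\ $e^{x_{\star}}(x_{\star}-1)=1$, and \eqref{eq:c0}), but it treats all of $c\in(c_{\star},0)$ uniformly. The one step you assert without proof is the strict monotonicity of $C(w)=4w(1-w\tanh w)/(1-2w\tanh w)$ on the range $2w\tanh w<1$; it is true and should be recorded, e.g.\ by setting $s=w\tanh w$ (strictly increasing in $w\ge 0$) and writing $C=4w\cdot(1-s)/(1-2s)$, where $(1-s)/(1-2s)$ has $s$-derivative $1/(1-2s)^2>0$, so $C$ is a product of positive increasing factors. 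With that line added, your endpoint-plus-critical-point argument is complete.
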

\begin{proof}
	We have already observed that $(m_c^2)'(x) \geq 0$ for $x \in [0,1]$.
	It suffices to show that $(m_c^2)'(x) < 1$.
	
	\noindent {\bf Case 1:} $c \geq 0$.
	
	It is easy to check that, for $x \in [0,1]$,

	\[
	\sigma'(cx) = \sigma(cx) [1 - \sigma(cx)] \in [0,1/4].
	\]
	Moreover, for $x \in [0,1]$,
	\[
	c \sigma(cx) = \frac{c}{1 + e^{-cx}} \geq c/2,
	\]
	so
	\[
	\begin{aligned}
		\sigma'(c\sigma(cx)) &= \sigma(c\sigma(cx)) [1 - \sigma(c\sigma(cx))] \\
		&\leq 1 - \sigma(c\sigma(cx)) \\
		&= \frac{1}{1 + e^{c\sigma(cx)}} \\
		& \leq \frac{1}{1 + e^{c/2}}.
	\end{aligned}
	\]
	It follows from~\eqref{eq:m2} that
	\[
	(m_c^2)'(x) \leq \frac{c^2}{4 (1 + e^{c/2})} =: g(c).
	\]
	We see that
	\[
	g'(c) = \frac{c(4+4e^{c/2}-ce^{c/2})}{8(1+e^{c/2})^2}.
	\]
	Clearly, $g'(c) > 0$ when $c \leq 4$, and $g'(c) < 0$ when $c \geq 5$.
	Hence, $g(c)$ is maximized when $c \in [4,5]$.
	It follows that
	\[
	g(c) \leq \frac{5^2}{4(1+e^{4/2})} < 1.
	\]
	Thus, $(m_c^2)'(x) < 1$.
	
	\noindent {\bf Case 2:} $c \in [-2,0)$.
	
	By~\eqref{eq:m2},
	\[
	(m_c^2)'(x) = c^2 \sigma(cx) [1-\sigma(cx)] \sigma(c\sigma(cx)) [1-\sigma(c\sigma(cx))] \leq \frac{c^2}{16} < 1.
	\]
	
	\noindent {\bf Case 3:} $c \in (c_{\star},-2)$.
	
	Fix $x \in [0,1]$.
	Let $y = -c\sigma(cx)$.
	Then $y \geq 0$.
	By~\eqref{eq:m2},
	\begin{equation} \label{eq:mab}
		\begin{aligned}
			(m_c^2)'(x) &= c^2 \sigma(cx) [1-\sigma(cx)] \sigma(c\sigma(cx)) [1-\sigma(c\sigma(cx))] \\
			&= \frac{y}{1+e^y} \frac{-c-y}{1+e^{-y}} \\
			&= a(y) b(y; c),
		\end{aligned}
	\end{equation}
	where $a(z) = z/(1+e^z)$ and $b(z; c) = (-c-z)/(1+e^{-z})$ for $z \geq 0$.
	
	We see that, for $z \geq 0$,
	\[
	a'(z) = \frac{1+e^z-ze^z}{(1+e^z)^2}.
	\]
	Observing that the numerator is a strictly decreasing function, we see that $a'(z) > 0$ when $z < x_{\star}$ and $a'(z) < 0$ when $z > x_{\star}$.
	So
	\begin{equation} \label{ine:mab1}
		a(y) \leq a(x_{\star}) = \frac{x_{\star}}{1+e^{x_{\star}}} = e^{-x_{\star}}.
	\end{equation}
	
	On the other hand, for $z \geq 0$,
	\[
	\frac{b(z;c)}{\partial z} = \frac{-e^z - (1+c+z)}{e^z (1+e^{-z})^2}.
	\]
	For $z \geq 0$ and $\gamma < -2$, define
	\[
	\beta(z;\gamma) = -e^z - (1+\gamma+z),
	\]
	so that $\beta(z;c)$ has the same sign as $\partial b(z; c) / \partial z$.
	Given $\gamma < -2$, $\beta(z; \gamma)$ is a strictly decreasing function of~$z$ on $[0,\infty)$ that is positive when $z = 0$ and negative when $z \to \infty$.
	Thus, $\beta(z;\gamma) = 0$ has exactly one positive solution $z_{\gamma}$, and this is the point at which $b(z;\gamma)$ is maximized.
	Observe that
	\begin{equation} \label{eq:bzc}
		b(z_c; c) = \frac{-c-z_c}{1+e^{-z_c}} = \frac{\beta(z_c;c) +e^{z_c}+1}{1+e^{-z_c}} = e^{z_c}.
	\end{equation}
	Let's now compare $z_c$ to $x_{\star}$.
	By~\eqref{eq:c0}, $z_{c_{\star}} = x_{\star}$.
	Since $c > c_{\star}$,
	\[
	\beta(z_{c_{\star}};c) = - e^{-z_{c_{\star}}} - (1+c+z_{c_{\star}}) < \beta(z_{c_{\star}};c_{\star}) = 0,
	\]
	which, along with the fact that $\beta(z,c)$ is a strictly decreasing function of~$z$, implies that $z_c < z_{c_{\star}} = x_{\star}$.
	Then by~\eqref{eq:bzc},
	\begin{equation} \label{ine:mab2}
		b(y; c) \leq b(z_c; c) < e^{x_{\star}}.
	\end{equation}
	
	Combining~\eqref{eq:mab},~\eqref{ine:mab1}, and~\eqref{ine:mab2}, we see that $(m_c^2)'(x) < 1$ for this case as well.
\end{proof}

We can then prove (i) of proposition \ref{pro:dynamic}, which is restated below.
Recall that 
\[
\hat{\tau}(\epsilon, x, c) = \inf \{ t \geq 0: \, |m_c^{t'}(x) - x_c^*| < \epsilon \text{ for all } t' \geq t \}.
\]
\begin{proposition}
	Suppose that $c>c_{\star}$.
	Then, for $x \in [0,1]$ and an arbitrarily small $\epsilon > 0$,  $\hat{\tau}(\epsilon, x, c) = O(\log(1/\epsilon)$.
\end{proposition}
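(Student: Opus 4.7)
The plan is to promote Lemma~\ref{lem:secondorder} from a pointwise strict inequality into a uniform contraction on $[0,1]$, and then iterate. Concretely, since $(m_c^2)'$ is continuous on the compact interval $[0,1]$ and $(m_c^2)'(x) < 1$ everywhere on that interval, the supremum is attained and strictly less than $1$. So there exists $\gamma = \gamma_c \in [0,1)$ such that $(m_c^2)'(x) \leq \gamma$ for all $x \in [0,1]$. By the mean value theorem, $m_c^2$ is a $\gamma$-Lipschitz map on $[0,1]$, and because $x_c^*$ is a fixed point of $m_c$ it is in particular a fixed point of $m_c^2$.

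From here the argument is the classical Banach contraction iteration. For any $x \in [0,1]$, induction gives
\[
|m_c^{2k}(x) - x_c^*| = |m_c^2(m_c^{2(k-1)}(x)) - m_c^2(x_c^*)| \leq \gamma \, |m_c^{2(k-1)}(x) - x_c^*| \leq \gamma^k |x - x_c^*| \leq \gamma^k,
\]
so the even-indexed iterates converge geometrically. For the odd-indexed iterates, I use that $m_c$ itself is globally Lipschitz on $[0,1]$ with constant $L_c := \sup_{y\in[0,1]} |m_c'(y)| \leq |c|/4$ (since $\sigma'(z) \in [0,1/4]$), which yields
\[
|m_c^{2k+1}(x) - x_c^*| = |m_c(m_c^{2k}(x)) - m_c(x_c^*)| \leq L_c \, \gamma^k.
\]
Combining the two bounds, for every $t \geq 0$,
\[
|m_c^t(x) - x_c^*| \leq \max(1, L_c) \, \gamma^{\lfloor t/2 \rfloor}.
\]

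Given $\epsilon > 0$, the right-hand side is $< \epsilon$ as soon as $\lfloor t/2 \rfloor > \log(\max(1,L_c)/\epsilon) / \log(1/\gamma)$, i.e.\ once $t$ exceeds a constant multiple of $\log(1/\epsilon)$ plus an additive constant depending only on $c$. Because this bound is monotone in $t$, it holds for all $t' \geq t$ simultaneously, which matches the definition of $\hat\tau(\epsilon, x, c)$. Hence $\hat\tau(\epsilon, x, c) = O(\log(1/\epsilon))$, with the implicit constant depending on $c$ but not on the initial state $x$.

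The only subtle point is the promotion from the pointwise bound $(m_c^2)'(x) < 1$ in Lemma~\ref{lem:secondorder} to a uniform bound $\gamma < 1$; this is immediate from compactness of $[0,1]$ and continuity of $(m_c^2)'$, but it is the step that makes the argument work globally rather than just near $x_c^*$. Everything else is a routine application of the Banach contraction principle together with a one-step Lipschitz estimate to cover odd indices.
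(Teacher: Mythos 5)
Your proposal is correct and follows essentially the same route as the paper: uniform contraction of $m_c^2$ via Lemma~\ref{lem:secondorder} plus continuity and compactness, the mean value theorem for the even iterates, the global bound $|m_c'|\leq |c|/4$ to handle odd iterates, and the resulting geometric decay $\max\{1,|c|/4\}\,\gamma^{\lfloor t/2\rfloor}$ yielding $\hat\tau(\epsilon,x,c)=O(\log(1/\epsilon))$.
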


\begin{proof}
	By Lemma \ref{lem:secondorder} and the continuity of $x \mapsto (m_c^2)'(x)$, $\sup_{x \in [0,1]} (m_c^2)'(x) < 1$.
	Then the mean value theorem shows that, for $x \in [0,1]$ and $t \geq 0$,
	\begin{equation} \label{ine:dynamic-fast-1}
	|m_c^{2t}(x) - x_c^*| = |m_c^{2t}(x) - m_c^{2t}(x_c^*)| \leq |x - x_c^*| \left[ \sup_{x' \in [0,1]} (m_c^2)'(x') \right]^t.
	\end{equation}
	On the other hand, for $x \in [0,1]$,
	\[
	|m_c'(x)| = |c \sigma(cx) [1 - \sigma(cx)] | \leq \frac{|c|}{4}. 
	\]
	Hence, for $x \in [0,1]$ and $t \geq 0$,
	\begin{equation} \label{ine:dynamic-fast-2}
	|m_c^{2t+1}(x) - x_c^*| = |m_c( m_c^{2t}(x) ) - m_c(x_c^*)| \leq \frac{|c| |m_c^{2t}(x) - x_c^*|}{4}.
	\end{equation}
	Combining \eqref{ine:dynamic-fast-1} and \eqref{ine:dynamic-fast-2} shows that, for $x \in [0,1]$ and $t \geq 0$,
	\[
	|m_c^t(x) - x_c^*| \leq \max \left\{ \frac{|c|}{4}, 1 \right\} |x-x_c^*| \left[ \sup_{x' \in [0,1]} (m_c^2)'(x') \right]^{\lfloor t/2 \rfloor}.
	\]
	The desired result then follows.
\end{proof}

\subsection{Properties of $m_c$ when $c < c_{\star}$} \label{app:smallc}

We first show that $x_c^*$ is repelling.

\begin{lemma} \label{lem:mcprime-csmall}
	Assume that $c < c_{\star}$.
	Then $m_c'(x_c^*) < -1$.
\end{lemma}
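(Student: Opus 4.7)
The plan is to mirror Case 2 of the proof of Lemma \ref{lem:firstorder}, exploiting the fact that the role of $c_{\star}$ is now reversed. Since $c < c_{\star} < 0$, the map $m_c$ is strictly decreasing, so $m_c'(x_c^*) < 0$ and it suffices to prove $-m_c'(x_c^*) > 1$. Using the fixed-point identity $x_c^* = \sigma(cx_c^*) = 1/(1+e^{-cx_c^*})$ together with the formula $m_c'(x) = c\sigma(cx)[1-\sigma(cx)]$, I obtain the key representation
\[
m_c'(x_c^*) = \frac{c x_c^* e^{-cx_c^*}}{1 + e^{-cx_c^*}},
\]
so proving the lemma reduces to showing that $(-cx_c^*)\, e^{-cx_c^*}/(1+e^{-cx_c^*}) > 1$.

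The next step is to locate $-cx_c^*$ relative to the critical quantity $x_{\star}$, which is defined by $x_{\star}e^{x_{\star}}/(1+e^{x_{\star}}) = 1$. I would compare $x_c^*$ with $x_{\star}/(-c)$ by plugging the latter into $x \mapsto m_c(x) - x$, which is strictly decreasing on $[0,1]$. By direct computation and the identity \eqref{eq:c00}, $m_c(x_{\star}/(-c)) - x_{\star}/(-c) = 1/(1+e^{x_{\star}}) + x_{\star}/c$. Since $c < c_{\star} < 0$, we have $x_{\star}/c > x_{\star}/c_{\star}$, so this quantity is strictly positive, while $m_c(x_c^*) - x_c^* = 0$. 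Monotonicity of $m_c(x) - x$ then forces $x_{\star}/(-c) < x_c^*$, i.e., $-cx_c^* > x_{\star}$.

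Finally, I invoke the strict monotonicity of $z \mapsto ze^z/(1+e^z)$ on $(0,\infty)$, which is the same fact used in Case 2 of Lemma \ref{lem:firstorder} but now applied in the opposite direction: since $-cx_c^* > x_{\star} > 0$,
\[
\frac{(-cx_c^*)\, e^{-cx_c^*}}{1 + e^{-cx_c^*}} > \frac{x_{\star} e^{x_{\star}}}{1+e^{x_{\star}}} = 1,
\]
which gives $m_c'(x_c^*) < -1$, as desired.

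The argument is essentially symmetric to Case 2 of Lemma \ref{lem:firstorder}, so no new obstacle arises; the only subtlety is keeping track of signs when translating the inequality $c < c_{\star}$ (rather than $c > c_{\star}$) into the correct inequality for $x_{\star}/c$ versus $x_{\star}/c_{\star}$, and then using the monotonicity of $m_c(x)-x$ in the right direction to conclude $-cx_c^* > x_{\star}$ instead of $< x_{\star}$.
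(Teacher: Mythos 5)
Your proposal is correct and is essentially identical to the paper's own proof: both compare $x_c^*$ with $x_{\star}/(-c)$ via the strictly decreasing map $x \mapsto m_c(x)-x$ and identity \eqref{eq:c00}, deduce $-cx_c^* > x_{\star}$, and conclude using the monotonicity of $z \mapsto ze^z/(1+e^z)$ together with \eqref{eq:x0}. No gaps; the sign bookkeeping ($x_{\star}/c > x_{\star}/c_{\star}$ when $c < c_{\star} < 0$) is handled exactly as in the paper.
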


\begin{proof}
	Let us compare $x_c^*$ and $x_{\star}/(-c)$.
	The function
	\[
	x \mapsto m_c(x) - x = \frac{1}{1 + e^{-cx}} - x
	\]
	is strictly decreasing.
	By~\eqref{eq:c00},
	\[
	m_c(x_{\star}/(-c)) -\frac{ x_{\star}}{-c} = \frac{1}{1+e^{x_{\star}}} + \frac{x_{\star}}{c} > \frac{1}{1+e^{x_{\star}}} + \frac{x_{\star}}{c_{\star}} = 0 = m_c(x_c^*) - x_c^*.
	\]
	Hence, $x_{\star}/(-c) < x_c^*$.
	
	Recall that
	\[
	m_c'(x_c^*) = \frac{ce^{-cx_c^*}}{(1+e^{-cx_c^*})^2} = \frac{c x_c^* e^{-cx_c^*}}{1 + e^{-cx_c^*}}.
	\]
	Since $x \mapsto x e^x/(1+e^x)$ is strictly increasing, by~\eqref{eq:x0} and the fact that $x_{\star}/(-c) < x_c^*$,
	\[
	\frac{-cx_c^* e^{-cx_c^*}}{1+e^{-cx_c^*}} > \frac{x_{\star} e^{x_{\star}}}{1+e^{x_{\star}}} = 1.
	\]
	Thus, $m_c'(x_c^*) < -1$, and the proof is complete.
\end{proof}

We may now prove (ii) of Proposition \ref{pro:dynamic}, which is restated below.
\begin{proposition}
	Suppose that $c < c_{\star}$.
	Suppose that $c<c_{\star}$.
	Then, for $x \neq x_c^*$, there exists $\epsilon_x > 0$ such that $\hat{\tau}(\epsilon_x, x, c) = \infty$.
\end{proposition}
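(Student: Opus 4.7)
The plan is to leverage Lemma~\ref{lem:mcprime-csmall} to show that $x_c^*$ is a repelling fixed point of the \emph{increasing} map $m_c^2$, so that any orbit started at $x \neq x_c^*$ remains bounded away from $x_c^*$. Since $m_c'(x_c^*) < -1$, one has $(m_c^2)'(x_c^*) = [m_c'(x_c^*)]^2 > 1$, so by continuity I choose $\kappa > 1$ and $\delta \in (0, \min\{x_c^*, 1-x_c^*\})$ such that $(m_c^2)'(y) \geq \kappa$ for all $y \in [x_c^* - \delta, x_c^* + \delta]$. Because $c<0$, $m_c$ is strictly decreasing, so $m_c^2$ is strictly increasing; combined with the injectivity of $m_c$ and $m_c(x_c^*) = x_c^*$, this gives $m_c^2(y) = x_c^*$ only when $y = x_c^*$, so the orbit $(m_c^{2t}(x))_{t\geq 0}$ stays strictly on the same side of $x_c^*$ as $x$ and never hits $x_c^*$.

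I would next assume without loss of generality that $x > x_c^*$ and argue by contradiction: suppose $m_c^{2t}(x) \to x_c^*$. Then for some $T$, the point $y := m_c^{2T}(x)$ lies in $(x_c^*, x_c^* + \delta]$. By the mean value theorem together with the expansion bound, $m_c^2(y) - x_c^* \geq \kappa(y - x_c^*)$, so the distance from $x_c^*$ grows by at least a factor of $\kappa$ at each subsequent $m_c^2$-step, as long as the orbit remains inside this strip. Hence the orbit must exit $(x_c^*, x_c^* + \delta]$ after finitely many more steps.

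Once $m_c^{2t}(x) \geq x_c^* + \delta$, monotonicity of $m_c^2$ together with the inequality $m_c^2(x_c^* + \delta) \geq x_c^* + \kappa\delta > x_c^* + \delta$ shows by induction that $m_c^{2t'}(x) \geq x_c^* + \delta$ for all $t' \geq t$, contradicting $m_c^{2t}(x) \to x_c^*$. Consequently $\liminf_{t \to \infty} |m_c^{2t}(x) - x_c^*| > 0$, and taking $\epsilon_x$ smaller than this liminf gives $\hat{\tau}(\epsilon_x, x, c) = \infty$, as required.

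The main obstacle is keeping the interplay between the monotone dynamics of $m_c^2$ and the repelling nature of $x_c^*$ under control: I need to rule out both that the orbit jumps across $x_c^*$ and that it re-enters a small neighborhood of $x_c^*$ after escaping. Both points are handled by the strict monotonicity of $m_c^2$ and by the one-sided nature of the orbit coming from injectivity of $m_c$; after that, the rest reduces to a routine geometric-expansion estimate governed by $\kappa$.
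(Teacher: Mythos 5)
Your proof is correct, but it takes a somewhat different route from the paper. The paper works directly with the one-step map $m_c$: using Lemma~\ref{lem:mcprime-csmall} and continuity it picks $\kappa>1$ and a neighborhood on which $m_c'\leq-\kappa$, sets $\epsilon_x=\min\{|x-x_c^*|,\epsilon\}$, and shows via monotonicity and the mean value theorem that $m_c$ maps $\{x'< x_c^*-\epsilon_x\}$ into $\{x'>x_c^*+\epsilon_x\}$ and vice versa; a one-line induction then gives $|m_c^t(x)-x_c^*|>\epsilon_x$ for \emph{every} $t$, with an explicit $\epsilon_x$. You instead pass to the increasing map $m_c^2$, use $(m_c^2)'(x_c^*)=[m_c'(x_c^*)]^2>1$ to get expansion by a factor $\kappa$ inside a strip, observe that $m_c^2$-orbits stay on one side of $x_c^*$, and combine an escape-from-the-strip estimate with the absorbing property of $\{y\geq x_c^*+\delta\}$ to rule out convergence of the even iterates; this is the standard ``repelling fixed point of a monotone map'' argument and is perfectly valid here, since infinitely many iterates at distance $\geq\epsilon_x$ already forces $\hat{\tau}(\epsilon_x,x,c)=\infty$ under the paper's definition. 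Two small remarks: (i) your phrase ``consequently $\liminf_{t\to\infty}|m_c^{2t}(x)-x_c^*|>0$'' does not follow from mere non-convergence in general, but it does follow from the two facts you already established (either the orbit never enters the strip, in which case every even iterate is at distance $>\delta$, or it enters, escapes in finitely many steps, and is then trapped in $\{y\geq x_c^*+\delta\}$), so you should cite those rather than the contradiction itself; (ii) the paper's version buys a cleaner, fully quantitative statement (a uniform bound valid for all $t$, covering both parities and both sides simultaneously), whereas yours needs the separate one-sidedness and absorption bookkeeping but avoids any discussion of the alternating behavior of the odd iterates.
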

\begin{proof}
	Fix $x \in [0,1]$ such that $x \neq x_c^*$.
	By Lemma \ref{lem:mcprime-csmall} and the continuity of $m_c'$, we may find $\epsilon > 0$ and $\kappa > 1$ such that $\epsilon < \max\{x_c^*, 1-x_c^*\}$, and that $m_c'(x') \leq - \kappa$ when $x' \in [x_c^*- \epsilon, x_c^* + \epsilon]$.
	Let $\epsilon_x = \min\{ |x-x_c^*|, \epsilon \}$.
	Recall that $m_c$ is strictly decreasing.
	Then, for $x' \in [0,1]$ such that $x' < x_c^* - \epsilon_x$,
	\[
	m_c(x') - x_c^* \geq m_c(x_c^* - \epsilon_x) - m_c(x_c^*) \geq \kappa \epsilon_x > \epsilon_x,
	\]
	which implies that $m_c(x') > x_c^* + \epsilon_x$.
	Similarly, for $x' \in [0,1]$ such that $x' > x_c^* + \epsilon_x$,
	\[
	x_c^* - m_c(x') \geq m_c(x_c^*) - m_c(x' + \epsilon_x) \geq \kappa \epsilon_x > \epsilon_x,
	\]
	which implies that $m_c(x') < x_c^* - \epsilon_x$.
	By induction, for $t \geq 0$, $|m_c^t(x) - x_c^*| > \epsilon_x$.
	The desired result then follows.
\end{proof}

\subsection{Properties of $m_c$ when $c = c_{\star}$} \label{app:criticalc}

We begin with a characterization of $x_c^*$ when $c = c_{\star}$.

\begin{lemma} \label{lem:xstar-critical}
	Suppose that $c = c_{\star}$.
	Then $x_c^* = x_{\star}/(-c)$.
\end{lemma}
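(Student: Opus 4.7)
The statement to prove is that when $c = c_\star$, the unique fixed point $x_c^*$ of $m_c$ equals $x_\star/(-c_\star)$. My plan is to verify directly that $x_\star/(-c_\star)$ is a fixed point of $m_{c_\star}$, and then invoke the uniqueness of the fixed point (Lemma \ref{lem:unique}) to conclude.

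The key computation is the following. Set $x = x_\star/(-c_\star)$. Then
\[
m_{c_\star}(x) = \sigma\!\left(c_\star \cdot \frac{x_\star}{-c_\star}\right) = \sigma(-x_\star) = \frac{1}{1 + e^{x_\star}}.
\]
So the claim $m_{c_\star}(x) = x$ reduces to showing $\frac{1}{1+e^{x_\star}} = \frac{x_\star}{-c_\star}$. But this is precisely the content of the identity \eqref{eq:c00}, which was derived earlier from \eqref{eq:x0} and \eqref{eq:c0}. Rearranging \eqref{eq:c00} gives
\[
\frac{1}{1+e^{x_\star}} = -\frac{x_\star}{c_\star} = \frac{x_\star}{-c_\star},
\]
and the verification is complete.

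Once this is done, Lemma \ref{lem:unique} (applied with $c = c_\star$) tells us that $m_{c_\star}$ has exactly one fixed point in $[0,1]$, so necessarily $x_c^* = x_\star/(-c_\star)$. There is no real obstacle here — the entire proof is just a substitution followed by an appeal to the already-proven identity \eqref{eq:c00} and the uniqueness lemma. The only modest subtlety is keeping the signs straight, since $c_\star < 0$, which is why the fixed point is written as $x_\star/(-c_\star)$ (a positive quantity).
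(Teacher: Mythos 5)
Your proof is correct and follows exactly the paper's argument: verify via the identity \eqref{eq:c00} that $x_\star/(-c_\star)$ is a fixed point of $m_{c_\star}$, then conclude by the uniqueness of the fixed point from Lemma \ref{lem:unique}. No gaps.
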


\begin{proof}
	By \eqref{eq:c00}, $x_{\star}/(-c) = 1/(1+e^{x_{\star}})$.
	On the other hand, $m_c(x_{\star}/(-c)) = 1/(1+e^{x_{\star}})$.
	Then $x_{\star}/(-c)$ is a fixed point of $m_c$.
	But the fixed point is unique by Lemma \ref{lem:unique}, so $x_{\star}/(-c) = x_c^*$.
\end{proof}

We now state a result regarding the derivatives of $m_c$ at $x_c$ at the fixed point.

\begin{lemma} \label{lem:mcstar-critical}
	Suppose that $c = c_{\star}$.
	then $m_{c}'(x_c^*) = -1$, and $(m_{c}^2)'(x_c^*) = 1$.
\end{lemma}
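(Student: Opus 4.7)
The plan is to compute $m_c'(x_c^*)$ directly using the closed-form for $x_c^*$ from Lemma \ref{lem:xstar-critical}, and then deduce $(m_c^2)'(x_c^*)$ by the chain rule. The key identities I will lean on are \eqref{eq:x0}, which rewrites as $x_\star e^{x_\star} = 1 + e^{x_\star}$, and \eqref{eq:c0}, which combined with \eqref{eq:x0} yields the clean factorization $c_\star = -x_\star(1 + e^{x_\star})$. Everything reduces to plugging in.

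First I would note that by Lemma \ref{lem:xstar-critical}, when $c = c_\star$ we have $x_c^* = x_\star/(-c)$, so $cx_c^* = -x_\star$. Hence $\sigma(cx_c^*) = 1/(1+e^{x_\star})$ and $1-\sigma(cx_c^*) = e^{x_\star}/(1+e^{x_\star})$. Substituting into $m_c'(x) = c\sigma(cx)[1-\sigma(cx)]$ gives
\[
m_c'(x_c^*) = \frac{c_\star\, e^{x_\star}}{(1+e^{x_\star})^2}.
\]
Using $c_\star = -1 - x_\star - e^{x_\star}$ together with \eqref{eq:x0} (in the form $1+e^{x_\star} = x_\star e^{x_\star}$), I rewrite $c_\star = -x_\star - (1+e^{x_\star}) = -x_\star - x_\star e^{x_\star} = -x_\star(1+e^{x_\star})$. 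Plugging this in,
\[
m_c'(x_c^*) = \frac{-x_\star(1+e^{x_\star})\, e^{x_\star}}{(1+e^{x_\star})^2} = -\frac{x_\star e^{x_\star}}{1+e^{x_\star}} = -1,
\]
where the last equality is \eqref{eq:x0}.

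For the second claim, since $x_c^*$ is a fixed point of $m_c$, the chain rule gives
\[
(m_c^2)'(x_c^*) = m_c'(m_c(x_c^*)) \cdot m_c'(x_c^*) = [m_c'(x_c^*)]^2 = 1.
\]
There is no real obstacle here; the whole proof is a calculation, and the only subtlety is spotting the algebraic identity $c_\star = -x_\star(1+e^{x_\star})$, which is where \eqref{eq:x0} meets \eqref{eq:c0}.
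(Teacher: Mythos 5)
Your proof is correct and follows essentially the same route as the paper: both compute $m_c'(x_c^*)$ directly using the characterization $x_c^* = x_\star/(-c)$ from Lemma \ref{lem:xstar-critical} together with \eqref{eq:x0} (the paper simplifies via the fixed-point identity $x_c^* = 1/(1+e^{-cx_c^*})$, while you use the equivalent factorization $c_\star = -x_\star(1+e^{x_\star})$), and both obtain $(m_c^2)'(x_c^*)=1$ by the identical chain-rule step at the fixed point.
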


\begin{proof}
	Note that 
	\[
	m_{c}'(x_{c}^*) = \frac{c e^{-c x_{c}^*} }{(1 + e^{-c x_{c}^*})^2} =  \frac{c x_{c}^* e^{-c x_{c}^*} }{1 + e^{-c x_{c}^*}}.
	\]
	By \eqref{eq:x0}, the above is equal to $-1$.
	Moreover,
	\[
	(m_{c}^2)'(x_{c}^*) = m_{c}'( m_{c}(x_{c}^*) ) \, m_{c}'(x_{c}^*) = [m_{c}'(x_{c}^*)]^2 = 1.
	\]
\end{proof}

Next, we consider the derivative of $m_{c}(x)$ when $x \neq x_{c}^*$.

\begin{lemma} \label{lem:critical-contract}
	Suppose that $c = c_{\star}$.
	Then, if $x \neq x_{c}^*$, $0 \leq (m_{c}^2)'(x) < 1$.
\end{lemma}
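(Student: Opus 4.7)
Non-negativity $(m_{c_\star}^2)'(x) \geq 0$ was already noted right after \eqref{eq:m2}, so the content is to establish the strict upper bound $(m_{c_\star}^2)'(x) < 1$ whenever $x \neq x_c^*$. The plan is to reuse the factorization and the bookkeeping from Case~3 of the proof of Lemma~\ref{lem:secondorder}, and then track exactly when equality can hold at the critical parameter value $c = c_\star$.

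First, with $y = y(x) := -c_\star \sigma(c_\star x) \geq 0$, formula \eqref{eq:mab} gives
\[
(m_{c_\star}^2)'(x) = a(y) \, b(y; c_\star),
\]
where $a(z) = z/(1+e^z)$ and $b(z;c) = (-c-z)/(1+e^{-z})$. From the proof of Lemma~\ref{lem:secondorder}, the function $a$ has a unique maximizer at $z = x_\star$ on $[0,\infty)$, with $a(x_\star) = e^{-x_\star}$ by \eqref{eq:x0}. Similarly, $b(\cdot;c)$ has a unique maximizer $z_c$ characterized by $e^{z_c} + 1 + c + z_c = 0$, at which \eqref{eq:bzc} yields $b(z_c;c) = e^{z_c}$. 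The crucial point is that at the precise critical value $c = c_\star$, the defining equation \eqref{eq:c0} becomes exactly $e^{x_\star} + 1 + c_\star + x_\star = 0$, so $z_{c_\star} = x_\star$ and $b(y; c_\star) \leq e^{x_\star}$, with equality iff $y = x_\star$. Combining these two bounds yields
\[
(m_{c_\star}^2)'(x) \leq e^{-x_\star} \cdot e^{x_\star} = 1,
\]
with equality only if $y(x) = x_\star$.

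It remains to show that $y(x) = x_\star$ forces $x = x_c^*$. Since $c_\star < 0$, the map $x \mapsto c_\star x$ is strictly decreasing, $\sigma$ is strictly increasing, and multiplication by $-c_\star > 0$ preserves monotonicity, so $x \mapsto y(x)$ is strictly monotone and hence injective. It therefore suffices to verify $y(x_c^*) = x_\star$. Using the fixed-point property $\sigma(c_\star x_c^*) = x_c^*$ together with Lemma~\ref{lem:xstar-critical} ($x_c^* = x_\star/(-c_\star)$), we obtain
\[
y(x_c^*) = -c_\star \, \sigma(c_\star x_c^*) = -c_\star \, x_c^* = x_\star,
\]
so $x_c^*$ is the unique preimage of $x_\star$ under $y$. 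Hence the inequality is strict whenever $x \neq x_c^*$.

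The only subtle point, which I expect to be the main obstacle in writing the formal proof, is to argue that the equality cases in both factor-wise bounds (on $a$ and on $b(\cdot; c_\star)$) are simultaneous and pinned down by the single condition $y(x) = x_\star$, together with the consistent identification $z_{c_\star} = x_\star$ coming straight out of the definition of $c_\star$ in \eqref{eq:c0}. Once this is set up, the argument is just a careful boundary version of Case~3 of Lemma~\ref{lem:secondorder}.
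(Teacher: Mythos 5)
Your proposal is correct and follows essentially the same route as the paper's own proof: it uses the factorization \eqref{eq:mab} into $a(y)\,b(y;c_\star)$, the facts that both factors are uniquely maximized at $z = x_\star$ (with $z_{c_\star} = x_\star$ coming from \eqref{eq:c0}), and the observation via Lemma~\ref{lem:xstar-critical} that $y(x) = x_\star$ only when $x = x_c^*$. The equality-case bookkeeping you flag as the subtle point is handled in the paper exactly as you sketch, so no gap remains.
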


\begin{proof}
	Fix $x \neq x_{c}^*$.
	It is clear that $m_c$ is a decreasing function, so $(m_c^2)'(x) \geq 0$.
	It remains to show that $(m_c^2)''(x) < 1$.
	Recall \eqref{eq:mab}:
	\begin{equation} \label{eq:critical-contract-ab}
	(m_c^2)'(x) = a(-c\sigma(cx)) \, b(-c\sigma(cx); c),
	\end{equation}
	where, for $z  \geq 0$,
	\[
	a(z) = \frac{z}{1 + e^z}, \quad b(z; c) = \frac{-c-z}{1+e^{-z}}.
	\]
	From the derivation of \eqref{ine:mab1}, it is evident that $a(z)$ has a unique maximizer $x_{\star}$, and the maximum is $e^{-x_{\star}}$.
	From the derivation of \eqref{ine:mab2}, we see that $b(z; c)$ has a unique maximizer $x_{\star}$, and the maximum is $e^{x_{\star}}$.
	Since $x \neq x_c^*$, 
	\[
	-c\sigma(cx) \neq -c \sigma(cx_c^*) = -c x_c^* = x_{\star},
	\]
	where the last equality is from Lemma \ref{lem:xstar-critical}.
	The desired result then follows from \eqref{eq:critical-contract-ab}.
\end{proof}

We can now establish the following result.

\begin{lemma} \label{lem:m2t}
	Suppose that $c = c_{\star}$.
	Then, there exists an open neighborhood~$N$ of $x_c^*$ such that for $x \in N$ and $t \geq 1$,
	\[
	|m_c^{2t}(x) - x_{c}^*| \geq \frac{|x-x_{c}^*|}{2t}.
	\]
\end{lemma}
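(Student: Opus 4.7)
The plan is to work in local coordinates and exploit the special degenerate structure of $m_c^2$ at the fixed point $x_c^*$. Set $\phi(y) := m_c^2(y + x_c^*) - x_c^*$, so that $\phi(0) = 0$ and, by Lemma~\ref{lem:mcstar-critical}, $\phi'(0) = (m_c^2)'(x_c^*) = 1$. A direct chain-rule computation using $m_c'(x_c^*) = -1$ gives
\[
(m_c^2)''(x_c^*) = m_c''(x_c^*)\,[m_c'(x_c^*)]^2 + m_c'(x_c^*)\,m_c''(x_c^*) = 0,
\]
so the quadratic term in the Taylor expansion of $\phi$ vanishes. By Lemma~\ref{lem:critical-contract}, $(m_c^2)'$ has a strict local maximum at $x_c^*$; combined with $(m_c^2)''(x_c^*) = 0$, this forces $(m_c^2)'''(x_c^*) \leq 0$. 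A direct evaluation of the resulting formula
\[
(m_c^2)'''(x_c^*) = -2\,m_c'''(x_c^*) - 3\,[m_c''(x_c^*)]^2
\]
using $m_c(x) = \sigma(cx)$ and $x_c^* = x_{\star}/(-c)$ shows that this inequality is in fact \emph{strict}. Writing $C := -(m_c^2)'''(x_c^*)/6 > 0$, we obtain
\[
\phi(y) = y - C y^3 + R(y), \qquad R(y) = O(y^4).
\]

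Next, I would shrink the neighborhood $N = (x_c^* - \delta, x_c^* + \delta)$ so that for $|y| \leq \delta$ the remainder satisfies $|R(y)| \leq \tfrac{1}{4} C |y|^3$; this guarantees both that $\phi$ preserves the sign of $y$ and that $|\phi(y)| < |y|$. Then, for any $y_0 \in N$, the iterates $y_t := \phi^t(y_0)$ all share the sign of $y_0$, and $r_t := |y_t|$ is a decreasing positive sequence satisfying
\[
r_{t+1} = r_t - C r_t^3 + O(r_t^4).
\]
The usual inversion trick turns this into an affine-in-$t$ estimate: setting $s_t := 1/r_t^2$, a routine algebraic manipulation yields
\[
s_{t+1} - s_t = \frac{r_t^2 - r_{t+1}^2}{r_t^2 \, r_{t+1}^2} = 2C + O(r_t),
\]
and after shrinking $\delta$ further we may assume $s_{t+1} - s_t \leq 2C + 1$ for every $t \geq 0$. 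Summing, $s_t \leq s_0 + (2C+1)\,t$, equivalently $r_t \geq r_0 / \sqrt{\,1 + (2C+1)\, r_0^2 \, t\,}$.

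Finally, if $\delta$ is chosen so small that $(2C+1)\delta^2 \leq 3$, then for $r_0 \leq \delta$ and $t \geq 1$ we have $1 + (2C+1) r_0^2 t \leq 1 + 3t \leq 4t^2$, giving $r_t \geq r_0/(2t)$; translating back, this is the claimed inequality on $N$ (the case $x = x_c^*$ being trivial). The main obstacle is the strict negativity $(m_c^2)'''(x_c^*) < 0$: the nonstrict form $\leq 0$ is immediate from Lemma~\ref{lem:critical-contract}, but strictness is a nondegeneracy condition that must be verified via the explicit computation above. Once this is in hand, the remainder is a standard normal-form analysis of a neutrally stable fixed point, and the resulting characteristic $1/\sqrt{t}$ decay dominates the required $1/(2t)$ rate by a wide margin.
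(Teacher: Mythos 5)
Your proof is correct in outline, but it takes a genuinely different and finer route than the paper's. The paper never goes past second order: it picks $\kappa$ with $|(m_c^2)''|<\kappa$ on a neighborhood of radius $\delta<1/\kappa$, combines $(m_c^2)'(x_c^*)=1$ (Lemma~\ref{lem:mcstar-critical}) and $0\le (m_c^2)'<1$ off the fixed point (Lemma~\ref{lem:critical-contract}) into the two-sided bound $|x-x_c^*|-\tfrac{\kappa}{2}|x-x_c^*|^2\le |m_c^2(x)-x_c^*|\le |x-x_c^*|$, and then runs a short induction in $t$ that delivers exactly the $1/(2t)$ rate. You instead compute the normal form at the neutral fixed point: $(m_c^2)''(x_c^*)=0$ (correct, by the chain rule with $m_c'(x_c^*)=-1$) and $(m_c^2)'''(x_c^*)=-2m_c'''(x_c^*)-3[m_c''(x_c^*)]^2$, whose strict negativity you flag as the crux but leave unverified; it does hold, and cleanly so: using $\sigma(cx_c^*)=x_c^*$ and $cx_c^*(1-x_c^*)=-1$ one finds $m_c''(x_c^*)=-c(1-2x_c^*)$ and $m_c'''(x_c^*)=-c^2\bigl(1-6x_c^*+6(x_c^*)^2\bigr)$, whence $(m_c^2)'''(x_c^*)=-c^2<0$. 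With that filled in, your inversion trick $s_t=1/r_t^2$ yields $r_t\gtrsim r_0/\sqrt{t}$, strictly stronger than the required $r_0/(2t)$; the only bookkeeping caveat is that with the crude remainder bound $|R(y)|\le \tfrac14 C|y|^3$ the increment estimate is $s_{t+1}-s_t\le \tfrac52 C+O(r_t)$ rather than $2C+O(r_t)$, so the constants in your last step need minor adjustment (harmless: any fixed $D$ with $D\delta^2\le 3$ gives $1+Dr_0^2t\le 4t^2$ for $t\ge 1$). In short, the paper's argument is more elementary and needs no nondegeneracy hypothesis, while yours buys the sharp $t^{-1/2}$ asymptotics of the critical dynamics at the price of the explicit third-derivative computation and more careful remainder control; note also that strictness of $(m_c^2)'''(x_c^*)<0$ is not actually needed for the stated $1/(2t)$ bound, since the same $1/r_t^2$ summation (or the paper's quadratic bound) works with any nonnegative cubic coefficient.
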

\begin{proof}
	Let $\delta > 0$ and $\kappa > 0$ be such that $\delta < 1/\kappa$, and that, for $x \in N := (x_{c}^* - \delta, x_{c}^* + \delta)$, $|(m_{c}^2)''(x)| < \kappa$.
	Then, by Lemmas \ref{lem:mcstar-critical} and \ref{lem:critical-contract} along with a Taylor expansion, for $x \in N$,
	\begin{equation} \label{ine:m2x-xstar}
		|x-x_{c}^*| - \frac{\kappa}{2} |x-x_{c}^*|^2 \leq |m_{c}^2(x) -x_{c}^*| \leq |x - x_c^*|.
	\end{equation}
	
	Fix $x \in N$.
	The desired result holds for $t=1$ since, by \eqref{ine:m2x-xstar} and the fact $\kappa \delta < 1$,
	\[
	|m_c^2(x) - x_{c}^*| \geq \left(1 - \frac{\kappa}{2} |x-x_{c}^*| \right)|x-x_{c}^*| \geq |x-x_{c}^*|/2.
	\]
	Suppose that the result holds for some $t = t' \geq 1$.
	We now show that it holds for $t = t'+1$.
	By Lemma \ref{lem:critical-contract}, $m_c^{2t'}(x) \in N$, so
	\begin{equation} \nonumber
		\frac{|x-x_{c}^*|}{2t'} \leq |m_c^{2t'}(x) - x_{c}^*| < 1/\kappa.
	\end{equation}
	Then, by \eqref{ine:m2x-xstar},
	\[
	\begin{aligned}
		|m_c^{2(t'+1)}(x) - x_{c}^*| &= |m_c^2(m_c^{2t'}(x)) - x_{c}^*| 
		&\geq |m_c^{2t'}(x) - x_{c}^*| - \frac{\kappa}{2} [m_c^{2t'}(x) - x_{c}^*]^2.
	\end{aligned}
	\]
	By \eqref{ine:m2x-xstar} and the fact that the function $u \mapsto u - (\kappa/2)u^2$ is increasing on $(-\infty, 1/\kappa)$,
	\[
	\begin{aligned}
		|m_c^{2t'}(x) - x_{c}^*| - \frac{\kappa}{2} [m_c^{2t'}(x) - x_{c}^*]^2 &\geq \frac{|x-x_{c}^*|}{2t'}  - \frac{\kappa |x-x_{c}^*|^2}{8t'^2} \\
		&= \frac{|x-x_{c}^*|}{2t'} \left( 1 - \frac{\kappa |x-x_{c}^*|}{4t'} \right) .
	\end{aligned}
	\]
	Combining the two displays above and noting that $|x-x_c^*| < \delta < 1/\kappa$, we have
	\[
	\begin{aligned}
		|m_c^{2(t'+1)}(x) - x_{c}^*| & \geq \frac{|x-x_{c}^*|}{2t'} \left( 1 - \frac{\kappa |x-x_{c}^*|}{4t'} \right) \\
		&\geq \frac{|x-x_{c}^*|}{2t'} \left(1 - \frac{1}{4t'} \right) \\
		&\geq \frac{|x-x_{c}^*|}{2(t'+1)}.
	\end{aligned}
	\]
	
\end{proof}

We are now ready to prove (iii) of Proposition \ref{pro:dynamic}, which is restated below.
\begin{proposition}
	Suppose that $c = c_{\star}$.
	Then,  for any $x \in [0,1]$ and $\epsilon > 0$, $\hat{\tau}(\epsilon, x, c) < \infty$.
	However, when $x \neq x_c^*$, as $\epsilon \to 0$, $\epsilon \hat{\tau}(\epsilon, x, c)$ is bounded away from 0.
\end{proposition}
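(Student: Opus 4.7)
The plan is to reduce both assertions to two ingredients already available: an elementary monotonicity argument showing $x_c^*$ is a global attractor for $m_c^2$ on $[0,1]$ (yielding finiteness of $\hat{\tau}$), and the $|x-x_c^*|/(2t)$ slow-escape bound of Lemma \ref{lem:m2t} (supplying the quantitative lower rate).

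For the first ingredient, I would observe that $m_c$ is strictly decreasing, hence $m_c^2$ is strictly increasing on $[0,1]$. Combining Lemmas \ref{lem:mcstar-critical} and \ref{lem:critical-contract}, $(m_c^2)'(y) \leq 1$ with equality only at $y = x_c^*$, so $g(y) := m_c^2(y) - y$ is strictly decreasing on $[0,1]$ with unique zero $x_c^*$. A short monotonicity argument then shows that for $x < x_c^*$ the sequence $(m_c^{2t}(x))_t$ is increasing and bounded above by $x_c^*$, hence converges to the only available fixed point $x_c^*$; the case $x > x_c^*$ is symmetric, and $m_c^{2t+1}(x) \to x_c^*$ follows by continuity of $m_c$. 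This immediately yields $\hat{\tau}(\epsilon, x, c) < \infty$ for every $x \in [0,1]$ and $\epsilon > 0$.

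For the quantitative lower bound, fix $x \neq x_c^*$ and let $N$ denote the symmetric neighborhood of $x_c^*$ provided by Lemma \ref{lem:m2t}. Injectivity of $m_c$ gives $m_c^{-1}(\{x_c^*\}) = \{x_c^*\}$, so the entire forward orbit of $x$ avoids $x_c^*$. By the convergence just established, there is a smallest $T \geq 0$ with $x_0 := m_c^T(x) \in N$, and necessarily $x_0 \neq x_c^*$. For $\epsilon < |x_0 - x_c^*|/2$, set $s := \lfloor |x_0 - x_c^*|/(2\epsilon) \rfloor \geq 1$ and apply Lemma \ref{lem:m2t} at $x_0$:
\[
|m_c^{T+2s}(x) - x_c^*| = |m_c^{2s}(x_0) - x_c^*| \geq \frac{|x_0 - x_c^*|}{2s} \geq \epsilon,
\]
which forces $\hat{\tau}(\epsilon, x, c) \geq T + 2s + 1$. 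Hence
\[
\epsilon \, \hat{\tau}(\epsilon, x, c) \geq 2\epsilon s \geq |x_0 - x_c^*| - 2\epsilon,
\]
and letting $\epsilon \to 0$ gives $\liminf_{\epsilon \to 0} \epsilon \, \hat{\tau}(\epsilon, x, c) \geq |x_0 - x_c^*| > 0$, as required.

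The only potential subtlety is verifying that the forward orbit actually enters $N$ without ever landing on $x_c^*$, but both facts follow at once from the global-attractor property together with the injectivity of $m_c$. The remaining steps are routine bookkeeping with estimates already in hand.
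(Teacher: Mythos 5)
Your proposal is correct and follows essentially the same route as the paper: monotone convergence of $m_c^{2t}(x)$ to the unique fixed point of $m_c^2$ (derived from Lemmas \ref{lem:mcstar-critical} and \ref{lem:critical-contract}) gives $\hat{\tau}(\epsilon,x,c) < \infty$, and Lemma \ref{lem:m2t} supplies the lower bound forcing $\epsilon\,\hat{\tau}(\epsilon,x,c)$ to stay bounded away from $0$. If anything, your treatment of the second part is slightly more careful than the paper's: you first let the orbit enter the neighborhood $N$ (using injectivity of $m_c$ to rule out ever hitting $x_c^*$) before invoking Lemma \ref{lem:m2t}, whereas the paper applies that lemma to $x$ directly without commenting on the restriction $x \in N$.
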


\begin{proof}
	By Lemma \ref{lem:critical-contract}, 
	\begin{equation} \label{ine:critical-contract}
		|m_c^2(x) - m_c^2(x')| < |x - x'| \quad \text{whenever } x \neq x'.
	\end{equation}
	This implies that $m_c^{2t}(x) \to x_c^*$ for $x \in [0,1]$ as $t \to \infty$, as we explain below.
	By the fact that $m_c^2$ is an increasing function, given $x$, $t \mapsto m_c^{2t}(x)$ is monotone, and thus must converge to a point $\tilde{m}_c(x) \in [0,1]$ as $t \to \infty$.
	This implies that $m_c^{2(t+1)}(x)$ converges to $m_c^2(\tilde{m}_c(x))$, so $m^2(\tilde{m}_c(x)) = \tilde{m}_c(x)$.
	It follows that, for $x \in [0,1]$,
	\[
	|m_c^2(\tilde{m}_c(x)) - m_c^2(x_c^*)| = |\tilde{m}_c(x) - x_c^*|.
	\]
	But by \eqref{ine:critical-contract}, $\tilde{m}_c(x) = x_c^*$ for each $x$, and so $m_c^{2t}(x) \to x_c^*$ as $t \to \infty$.
	Since $m_c$ is continuous, it holds that $m_c^{2t+1}(x) = m_c(m_c^{2t}(x)) \to x_c^*$ as $t \to \infty$.
	Therefore, for $x \in [0,1]$, $\lim_{t \to \infty} m_c^t(x) = x_c^*$.
	This implies that $\hat{\tau}(\epsilon, x, c) < \infty$ for each $\epsilon > 0$.
	
	On the other hand, by Lemma  \ref{lem:m2t}, for $x \neq x_c^*$ and $\epsilon > 0$, 
	\[
	\hat{\tau}(\epsilon, x, c) \geq \frac{|x-x_c^*|}{2 \epsilon} - 1.
	\]
	Then  $\epsilon \hat{\tau}(\epsilon, x, c)$ is bounded away from 0 as $\epsilon \to 0$.
\end{proof}

\section{Technical Proofs}

\subsection{Proofs for Section \ref{sec:bigc}} \label{app:bigc-proofs}

Throughout Appendix \ref{app:bigc-proofs}, assume that $c > c_{\star}$.

\subsubsection{Proof of Lemma \ref{lem:contraction}}

Recall that $U = (U_1, \dots, U_n)$ is a vector of iid $\mathsf{Uni}(0,1)$ random variables, and
\[
f_{c,n}(x; U) = \frac{1}{n} \sum_{i=1}^n \ind_{[0, m_c(x)]}(U_i).
\]
The lemma asserts that one can find $\epsilon_c > 0$ and $\gamma_c < 1$ such that, for $x, x' \in \Omega_n$,
\[
E |f_{c,n}(x; U) - f_{c,n}(x'; U)| \leq \begin{cases}
	\min \{ |c|/4, \gamma_c\} |x-x'| & x,x' \in [x_c^*-\epsilon_c, x_c^* + \epsilon_c] \\
	(|c|/4) |x-x'| & \text{otherwise}
\end{cases}.
\]

\begin{proof}
	Fix a positive integer~$n$ and $x, x' \in \Omega_n$.
	Without loss of generality, assume that $x \leq x'$.
	When $c \geq 0$, 
	\[
	E |f_{c,n}(x; U) - f_{c,n}(x'; U)| = E \left[ n^{-1} \sum_{i=1}^n \ind_{(m_c(x), m_c(x')]} (U_i) \right] = m_c(x') - m_c(x).
	\]
	A similar result holds when $c < 0$.
	Hence,
	\[
	E |f_{c,n}(x; U) - f_{c,n}(x'; U)| = |m_c(x) - m_c(x')|.
	\]
	
	For $x \in [0,1]$,
	\[
	|m_c'(x)| = |c| \sigma'(cx) = |c| \sigma(cx) [1 - \sigma(cx)] \leq |c|/4.
	\]
	By the mean value theorem,
	\[
	E |f_{c,n}(x; U) - f_{c,n}(x'; U)| = |m_c(x) - m_c(x')| \leq (|c|/4) |x-x'|.
	\]
	
	Lemma~\ref{lem:firstorder} in Appendix \ref{app:bigc} states that $|m_c'(x_c^*)| < 1$.
	Then there exist $\epsilon_c > 0$ and $\gamma_c < 1$ such that $m_c'(x) \leq \gamma_c$ for $x \in [x_c^* - \epsilon_c, x_c^* + \epsilon_c]$.
	Then, by the mean value theorem, for $s,s' \in [x_c^* - \epsilon_c, x_c^* + \epsilon_c]$,
	\[
	E |f_{c,n}(x; U) - f_{c,n}(x'; U)| = |m_c(x) - m_c(x')| \leq \gamma_c |x-x'|.
	\]
	
	The desired result is the combination of the two most recent displays.
\end{proof}

\subsubsection{Proof of Lemma \ref{lem:drift}}

Recall that, for $x \in \Omega_n$,
\[
V_c(x) = |m_c(x) - x_c^*| + |x - x_c^*|.
\]
Lemma \ref{lem:drift} states that there exist $\lambda_{c,n} < 1$ and $L_c < \infty$ such that, for $x \in \Omega_n$,
\[
K_{c,n} V_c(x) := \sum_{x' \in \Omega_n} K_{c,n}(x, \{x'\}) V_c(x') \leq \lambda_c V_c(x) + L_{c,n}.
\]
Moreover, $\lambda_c$ and $L_{c,n}$ can be chosen in a way such that $\lambda_c$ does not depend on~$n$, whereas $L_{c,n} \to 0$ as $n \to \infty$.

\begin{proof}
	Fix a positive integer~$n$ and $x \in \Omega_n$.
	Let $X \sim K_{c,n}(x, \cdot)$.
	Then
	\[
	\begin{aligned}
		K_{c,n} V_c(x) &= E|m_c(X) - x_c^*| + E|X - x_c^*| \\
		&\leq |m_c^2(x) - x_c^*| + E|m_c(X) - m_c^2(x)| + |m_c(s) - x_c^*| + E|X - m_c(x)|.
	\end{aligned}
	\]
	By the fact that $|m_c'(x)| \leq |c|/4$ for $x \in [0,1]$,
	\[
	K_{c,n} V_c(x) \leq |m_c^2(x) - x_c^*| + |m_c(x) - x_c^*| + (1 + |c|/4) E|X - m_c(x)|.
	\]
	By Lemma~\ref{lem:hoeffding},
	\[
	K_{c,n} V_c(x) \leq |m_c^2(x) - x_c^*| + |m_c(x) - x_c^*| + (1 + |c|/4) \sqrt{\pi/(2n)}.
	\]
	When $x = x_c^*$, we have
	\[
	K_{c,n} V_c(x) \leq L_{c,n},
	\]
	where
	\[
	L_{c,n} = (1 + |c|/4) \sqrt{\pi/(2n)}.
	\]
	By Lemma~\ref{lem:secondorder} in Appendix \ref{app:bigc}, there exists $k_c < 1$ such that $|m_c^2(x) - x_c^*| \leq k_c |x-x_c^*|$ for $x \in [0,1]$, so if $x \neq x_c^*$,
	\[
	\begin{aligned}
		K_n V_c(x) & \leq k_c |x - x_c^*| + |m_c(x) - x_c^*| + L_{c,n} \\
		& \leq \left( 1 - \frac{(1-k_c)|x-x_c^*|}{|x-x_c^*| + |m_c(x) - x_c^*|} \right) V_c(x) + L_{c,n} .
	\end{aligned}
	\]
	Since $m_c'(x) \leq |c|/4$ for $x \in [0,1]$, $|m_c(s) - x_c^*| \leq (|c|/4) |x-x_c^*|$, and we have
	\[
	K_n V_c(x) \leq \lambda_c V_c(x) + L_{c,n},
	\]
	where
	\[
	\lambda_c = 1 - \frac{1-k_c}{1 + |c|/4} < 1.
	\]
	Obviously, $\lambda_c$ is independent of~$n$, and $L_{c,n} \to 0$ as $n \to \infty$.
\end{proof}

\subsubsection{Proof of Proposition \ref{pro:geometric-convergence}}

The proposition states that  there exists $\rho_c < 1$ and $n_c > 0$ such that, whenever $n \geq n_c$, for $t \geq 0$ and $x \in \Omega_n$,
\[
\|K_{c,n}^t(x, \cdot) - \pi_{c,n}(\cdot)\|_{\scriptsize\mbox{TV}} \leq 5n \rho_c^t.
\]

\begin{proof}

	Recall that in Section \ref{sec:bigc}, we defined $W = (W_1, W_2, \dots)$, where $W_1, W_2, \dots$ are iid $n$-dimensional random vectors that consist of iid $\mathsf{uni}(0,1)$ random variables.
	For $x \in \Omega_n$, $F_{c,n}^0(x;W) = x$, and $F_{c,n}^{t+1}(x;W) = f_{c,n}(F_{c,n}^t(x; W); W_{n+1})$ for $t \geq 0$.
	This way, $(F_{c,n}^t(x; W))_{t=0}^{\infty}$ is a chain associated with $K_{c,n}$ started at $x$.
	
	Let $x, x' \in \Omega_n$ be arbitrary.
	For $t \geq 0$, let $X_t = F_{c,n}^t(x; W)$ and $X_t' = F_{c,n}^t(x'; W)$.
	Then $(X_t)_{t=0}^{\infty}$ and $(X_t')_{t=0}^{\infty}$ form a pair of coupled chains initiated at $(x,x')$.
	
	Let $\epsilon_c > 0$ and $\gamma_c \in [0,1)$ be as in Lemma~\ref{lem:contraction}.
	Without loss of generality, assume that $\gamma_c > 0$.
	Let $\lambda_c$ and $L_{c,n}$ be as in Lemma~\ref{lem:drift}.
	Let $n_c$ be a positive integer such that
	\begin{equation} \label{ine:n1}
		\frac{2L_{c,n} +1 - \lambda_c}{\epsilon_c + 1} \leq \frac{1- \lambda_c}{\epsilon_c/2 + 1}
	\end{equation}
	and
	\begin{equation} \label{ine:n2}
		\log \frac{|c|}{4} \log (2L_{n,c}+1) \leq \frac{1}{2} \log \gamma_c  \log \frac{\epsilon_c \lambda_c + 2}{\epsilon_c + 2}
	\end{equation}
	whenever $n \geq n_c$.
	In what follows, assume that $n \geq n_c$.

	Let $t$ be a non-negative integer.
	By Lemma~\ref{lem:contraction}, we have the following contraction condition:
	\[
	E[|X_{t+1} - X_{t+1}'| \mid X_t, X_t'] \leq \begin{cases}
		\min \{ |c|/4, \gamma_c\} |X_t - X_t'| & X_t, X_t' \in [x_c^* - \epsilon_c, x_c^* + \epsilon_c] \\
		(|c|/4) |X_t - X_t'| & \text{otherwise}
	\end{cases}.
	\]
	If
	\[
	V_c(x) + V_c(x') = |m_c(x) - x_c^*| + |x - x_c^*| + |m_c(x') - x_c^*| + |x' - x_c^*| \leq \epsilon_c,
	\]
	then both of $x$ and $x'$ are within $\epsilon_c$ of $x_c^*$.
	Thus, the above contraction condition implies that
	\[
	E[|X_{t+1} - X_{t+1}'| \mid X_t, X_t'] \leq \begin{cases}
		\gamma_c |X_t - X_t'| & V(X_t) + V(X_t') \leq \epsilon_c \\
		(|c|/4) |X_t - X_t'| & \text{otherwise}
	\end{cases}.
	\]
	
	By Lemma~\ref{lem:drift}, we have the following bivariate drift condition:
	\[
	\begin{aligned}
		&E[V_c(X_{t+1}) + V_c(X_{t+1}') + 1 \mid X_t, X_t'] \\
		\leq& \lambda_c [V_c(X_t) + V_c(X_t') + 1] + 2L_{c,n} + 1 - \lambda_c \\
		=& \lambda_c [V_c(X_t) + V_c(X_t') + 1] + \frac{2L_{c,n} + 1 - \lambda_c}{V_c(X_t) + V_c(X_t') + 1} [V_c(X_t) + V_c(X_t') + 1] \\
		\leq& \begin{cases}
			(2L_{n,c} + 1) [V_c(X_t) + V_c(X_t') + 1] & V(X_t) + V(X_t') \leq \epsilon_c \\
			\left[\lambda_c + \frac{2L_{c,n} + 1 - \lambda_c}{\epsilon_c + 1} \right] [V_c(X_t) + V_c(X_t') + 1] & \text{otherwise}
		\end{cases} \\
		\leq & \begin{cases}
			(2L_{n,c} + 1) [V_c(X_t) + V_c(X_t') + 1] & V(X_t) + V(X_t') \leq \epsilon_c \\
			\frac{\epsilon_c \lambda_c + 2}{\epsilon_c + 2} [V_c(X_t) + V_c(X_t') + 1] & \text{otherwise}
		\end{cases},
	\end{aligned}
	\]
	where the final inequality follows from~\eqref{ine:n1}.
	
	Combining the drift and contraction condition using H\"{o}lder's inequality, we see that, for $r \in (0,1]$,
	\[
	\begin{aligned}
		& E[ |X_{t+1} - X_{t+1}|^r [V_c(X_{t+1}) + V_c(X_{t+1}') + 1]^{1-r} \mid X_t, X_t' ] \\
		\leq &(E|X_{t+1} - X_{t+1}'| \mid X_t, X_t')^r \, \{ E[V_c(X_{t+1}) + V_c(X_{t+1}') + 1 \mid X_t, X_t'] \}^{1-r} \\
		\leq & \max \left\{ \gamma_c^r (2L_{n,c} + 1)^{1-r}, \, \left( \frac{|c|}{4} \right)^r \left(  \frac{\epsilon_c \lambda_c + 2}{\epsilon_c + 2} \right)^{1-r} \right\} |X_t - X_t'|^r [V_c(X_t) + V_c(X_t') + 1 ]^{1-r}.
	\end{aligned}
	\]
	If $|c| < 4$, let $r = 1$; otherwise, let
	\[
	r = \frac{\log (2L_{c,n}+1) - \log [(\epsilon_c\lambda_c + 2)/(\epsilon_c + 2)] }{ - \log \gamma_c + \log (|c|/4) + \log (2L_{c,n}+1) - \log [(\epsilon_c\lambda_c + 2)/(\epsilon_c + 2)] }.
	\]
	Then $r \in (0,1]$.
	Let 
	\[
	\rho_c = \max \left\{ \gamma_c^r (2L_{n,c} + 1)^{1-r}, \, \left( \frac{|c|}{4} \right)^r \left(  \frac{\epsilon_c \lambda_c + 2}{\epsilon_c + 2} \right)^{1-r} \right\}.
	\]
	Then $\rho_c = \max\{\gamma_c, |c|/4\} < 1$ if $|c| < 4$, and by~\eqref{ine:n2},
	\[
	\begin{aligned}
		\rho_c =&\max \left\{ \gamma_c^r (2L_{n,c} + 1)^{1-r}, \, \left( \frac{|c|}{4} \right)^r \left( \frac{\epsilon_c \lambda_c + 2}{\epsilon_c + 2} \right)^{1-r} \right\} \\
		=& \exp \left[ - \log \gamma_c \log \frac{\epsilon_c \lambda_c + 2}{\epsilon_c + 2} + \log \frac{|c|}{4} \log (2L_{c,n} + 1) \right] \\
		\leq & \exp \left( - \frac{1}{2} \log \gamma_c \log \frac{\epsilon_c \lambda_c + 2}{\epsilon_c + 2} \right) < 1
	\end{aligned}
	\]
	if $|c| \geq 4$.
	It follows that, for each non-negative integer~$t$,
	\[
	\begin{aligned}
		E[ |X_t - X_t'|^r [V_c(X_t) + V_c(X_t') + 1]^{1-r} ] \leq \rho_c^t \, |x - x'|^r \, [V_c(x) + V_c(x') + 1]^{1-r}
	\end{aligned}
	\]
	Since $|X_t - X_t'| \leq 1$, it follows that
	\[
	E|X_t - X_t'| \leq [V_c(x) + V_c(x') + 1] \rho_c^t \leq 5 \rho_c^t.
	\]
	
	By the Markov inequality,
	\[
	P(X_t \neq X_t') = P(|X_t - X_t'| \geq 1/n) \leq nE|X_t - X_t'|.
	\]
	Assume now that $x'$ is randomly drawn from $\pi_{c,n}$ and independent of $W$.
	Then $(X_t')_{t=0}^{\infty}$ is stationary.
	It then follows from the well-known coupling inequality that
	\[
	\|K_{c,n}^t(x, \cdot) - \pi_{c,n}(\cdot) \|_{\scriptsize\mbox{TV}} \leq 5 n \rho_c^t.
	\]
\end{proof}

\subsection{Proofs for Section \ref{sec:criticalc}} \label{app:criticalc-proofs}

Throughout Appendix \ref{app:criticalc-proofs}, assume that $c = c_{\star}$.

\subsubsection{Proof of Lemma \ref{lem:k2t}}

Let $n$ be a positive integer.
Lemma \ref{lem:k2t} asserts that, for $x \in \Omega_n$ and $t \geq 0$.
\[
K_{c,n}^{2t}(x, \{x': |x' - m_c^{2t}(x)| \geq \epsilon \}) \leq 4 t \exp \left[ - \frac{32n \epsilon^2}{(|c|+4)^2t^2} \right].
\]

\begin{proof}
	Let $(X_t)$ be a chain associated with $K_{c,n}$ started from an arbitrary point $x \in \Omega_n$.
	Using Lemma~\ref{lem:hoeffding} and the fact that $|m_c'(x')| \leq |c|/4$ for $x' \in [0,1]$, we see that, for $\epsilon > 0$,
	\begin{equation} \label{ine:K2}
	\begin{aligned}
		&K_{c,n}^2(x, \{x': \, |x' - m_c^2(x)| \geq \epsilon \}) \\
		=& P( |X_2 - m_c^2(x)| \geq \epsilon \mid X_0 = x ) \\
		\leq& P( |X_2 - m_c(X_1)| \geq 4\epsilon/(|c|+4) \mid X_0 = x ) + \\
		& P( |m_c(X_1) - m_c^2(x)| \geq |c|\epsilon/(|c|+4) \mid X_0 = x ) \\
		\leq& P( |X_2 - m_c(X_1)| \geq 4\epsilon/(|c|+4) \mid X_0 = x ) + \\
		& P( |X_1 - m_c(x)| \geq 4\epsilon/(|c|+4) \mid X_0 = x ) \\
		=& \sum_{x' \in \Omega_n} P( |X_2 - m_c(x')| \geq 4\epsilon/(|c|+4) \mid X_1 = x' ) \, K_{c,n}(x, \{x'\}) + \\
		& P( |X_1 - m_c(x)| \geq 4\epsilon/(|c|+4) \mid X_0 = x ) \\
		\leq& 4 \exp [ -32n \epsilon^2/(|c|+4)^2 ]
	\end{aligned}
	\end{equation}
	
	Fix $x \in \Omega_n$.
	The desired result clearly holds when $t = 0$.
	Fix $t \geq 1$.
	Then, for $\epsilon > 0$,
	\[
	\begin{aligned}
		&K_{c,n}^{2t}(x, \{x': \, |x' - m_c^{2t}(x)| \geq \epsilon \}) \\
		\leq & \sum_{t'=1}^t  P\left( \left|m_c^{2(t-t')}(X_{2t'}) - m_c^{2(t-t'+1)}( X_{2(t'-1)} ) \right| \geq \epsilon/t \mid X_0 = x \right) \\
		\leq & \sum_{t'=1}^t P( |X_{2t'} - m_c^2(X_{2(t'-1)}) | \geq \epsilon/t \mid X_0 = x ) \\
		=& \sum_{t'=1}^t \sum_{x' \in \Omega_n} K_{c,n}^2(x', \{x'': |x'' - m_c^2(x')| \geq \epsilon/t\}) \, K^{2(t'-1)}(x, \{x'\}) \\
		\leq & 4 t \exp \left[ - \frac{32n \epsilon^2}{(|c|+4)^2t^2} \right].
	\end{aligned}
	\]
	The third line in the display above follows from Lemma \ref{lem:critical-contract}, which implies that $m_c^2$ is non-expansive;
	the last line follows from \eqref{ine:K2}.
\end{proof}

\subsubsection{Proof of Lemma \ref{lem:logconc}}

To establish Lemma \ref{lem:logconc}, we first prove the following technical result.

\begin{lemma}\label{lem:n-infty}
	Suppose that $c = c_{\star}$.
	Then, for all $x \in (0,1)$, 
	$$\frac{c_{}^2e^{c_{}x}}{(1+e^{c_{}x})^2} -\frac{1}{1-x} - \frac{1}{x}\leq 0$$
\end{lemma}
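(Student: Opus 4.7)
The plan is to substitute $y = -cx$ (with $c = c_{\star} < 0$, so $y \in (0,|c_{\star}|)$ corresponds to $x \in (0,1)$) to turn the estimate into a statement about a strictly convex function of one variable. Writing $x = y/|c_{\star}|$, $1-x = (|c_{\star}|-y)/|c_{\star}|$, and using $(1+e^{-y})^2 e^{y} = (e^{y/2}+e^{-y/2})^2 = 4\cosh^2(y/2)$, the inequality to prove becomes
\[
\phi(y) \;:=\; 4\cosh^2(y/2) + y^2 - |c_{\star}|\,y \;\geq\; 0 \qquad \text{for } y \in (0,|c_{\star}|).
\]
Since $\phi''(y) = 2\cosh(y) + 2 > 0$, the function $\phi$ is strictly convex on $\mathbb{R}$ and therefore has a unique global minimum. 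Everything then reduces to identifying that minimum and showing it equals zero.

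Next, I would extract a few consequences from the definitions of $x_{\star}$ and $c_{\star}$. The equation $x_{\star} e^{x_{\star}}/(1+e^{x_{\star}}) = 1$ rearranges to $e^{x_{\star}} = 1/(x_{\star}-1)$, whence $e^{-x_{\star}} = x_{\star}-1$. Combined with $c_{\star} = -1 - x_{\star} - e^{x_{\star}}$, this gives
\[
|c_{\star}| \;=\; 1 + x_{\star} + \frac{1}{x_{\star}-1} \;=\; \frac{x_{\star}^2}{x_{\star}-1}.
\]
A short computation then yields the two identities
\[
4\cosh^2(x_{\star}/2) \;=\; e^{x_{\star}} + 2 + e^{-x_{\star}} \;=\; \frac{x_{\star}^2}{x_{\star}-1} \;=\; |c_{\star}|,
\qquad
2\sinh(x_{\star}) \;=\; \frac{(2-x_{\star})\, x_{\star}}{x_{\star}-1}.
\]

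With these in hand, I would verify directly that $y = x_{\star}$ is the minimizer of $\phi$ and that the minimum value is zero. The critical-point equation $\phi'(y) = 2\sinh(y) + 2y - |c_{\star}| = 0$ holds at $y = x_{\star}$ because the three terms combine into a single fraction whose numerator is $(2-x_{\star})x_{\star} + 2x_{\star}(x_{\star}-1) - x_{\star}^2 = 0$. For the value, $\phi(x_{\star}) = |c_{\star}| + x_{\star}^2 - |c_{\star}|\,x_{\star} = |c_{\star}|(1-x_{\star}) + x_{\star}^2 = -x_{\star}^2 + x_{\star}^2 = 0$, using $|c_{\star}|(x_{\star}-1) = x_{\star}^2$. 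Strict convexity then forces $\phi \geq 0$ on $\mathbb{R}$, which is equivalent to the desired inequality.

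The heart of the argument is spotting the substitution $y = -cx$ and realizing that the inequality must be tight at $y = x_{\star}$ (equivalently, at $x = x_{c_{\star}}^{*}$); once this is seen, the remaining work is only short algebra with the identities above, and I do not anticipate any serious technical obstacle.
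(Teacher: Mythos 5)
Your proof is correct and is essentially the paper's argument in disguise: after clearing denominators, your $\phi(y) = 2 + 2\cosh y + y^2 - |c_\star|\,y$ is exactly $-g(x)$ under the rescaling $y = -c_\star x$, where $g(x) = c^2x(1-x) - e^{-cx}(1+e^{cx})^2$ is the function the paper shows to be concave with a double zero at the fixed point $x_{c_\star}^* = x_\star/|c_\star|$, which is precisely your tangency point $y = x_\star$. The only differences are the cosmetic change of variables and that you derive the needed identities directly from \eqref{eq:x0} and \eqref{eq:c0} rather than from the fixed-point relation, so the two arguments coincide in substance.
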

\begin{proof}
	Let $g(x) = c_{}^2x(1-x) - e^{-c_{}x}(1+e^{c_{}x})^2$. It suffices to show that $g(x) \leq 0$ for all $x \in (0,1)$.

	By Lemma \ref{lem:xstar-critical}, $x_c^* = x_{\star}/(-c)$.
	Then, using the fact that $x_c^* = m_c(x_c^*) = 1/(1+e^{-cx_c^*})$ along with \eqref{eq:c0}, one can derive the following:
	\begin{equation} \label{eq:identity}
		1 = -c x_c^* (1- x_c^*), \; e^{c x_c^*} = -c x_c^* - 1, \; e^{-c x_c^*} = -1 - c + c x_c^*.
	\end{equation} 
	It can then be verified that
	\[
	\begin{aligned}
		& g(x_c^*) = c^2 x_c^* (1-x_c^*) - [e^{-c x_c^*} + 2 + e^{cx_c^*}] = 0, \\
		& g'(x_c^*) = c^2(1-2x_c^*) - c(e^{cx_c^*} - e^{-cx_c^*}) = 0.
	\end{aligned}
	\]
	Moreover, routine calculations show that, for $x \in [0,1]$,
	\[
	g''(x) = -2c^2 - c^2(e^{-cx} + e^{cx}) \leq 0.
	\]
	By the Taylor expansion of $g(x)$ around $x_c^*$, $g(x) \leq 0$ for $x \in (0,1)$.

\end{proof}

Recall that
\[
\omega_n(x) = \frac{C_n \Gamma(n+1) }{\Gamma(nx + 1) \Gamma(n-nx+1)} (1+e^{cx})^n, \quad x \in [0,1],
\]
Lemma \ref{lem:logconc} asserts that $\df^2 \log \omega_n(x) / \df x^2 < 800$ for $x \in [0,1]$ if $n > 20$.
We now prove this.

\begin{proof}
	For $x > 0$, let $\psi^{(1)}$ be the trigamma function, i.e., for $x > 0$,
	\[
	\psi^{(1)}(x) = \frac{\df^{2}}{\df x^{2}} \log \Gamma(x).
	\]
	Then, by the well-knwon fact that $\psi^{(1)}(x) = \sum_{k=0}^{\infty} 1/(x+k)^2 \geq 1/x$ \citep[see, e.g.,][Theorem 2.7]{Duverney2024},
	\[
	\begin{aligned}
		\frac{\df^2}{\df x^2} \log \omega_n(x) &= n \left[\frac{c^2e^{cx}}{(1+e^{cx})^2} -n\psi^{(1)}(n-nx+1) - n\psi^{(1)}(nx+1)\right]\\
		& \leq n \left[\frac{c^2e^{cx}}{(1+e^{cx})^2} -\frac{1}{1-x+n^{-1}} - \frac{1}{x+n^{-1}}\right] . 
	\end{aligned}
	\]
	For $x \in [0,1]$ and $y > 0$, let
	\[
	h(x,y) = \frac{c^2e^{cx}}{(1+e^{cx})^2} -\frac{1}{1-x+y} - \frac{1}{x+y}.
	\]
	It suffices to show that $h(x,1/n) < 800/n$ for $x \in [0,1]$ when $n > 20$.
	
	Assume that $n > 20$.
	We study two cases.
	
	\noindent{\bf Case 1:} $x \in [0,0.05] \cup [0.95,1]$.
	It can be verified that $x \mapsto c^2 e^{cx}/(1+e^{cx})^2$ is decreasing, so it holds that
	\[
	h(x,1/n) < \frac{c^2e^{0}}{(1+e^{0})^2} - \frac{1}{1+1/20} - \frac{1}{0.05+1/20} < 0.
	\]
	
	\noindent{\bf Case 2:} $x \in (0.05, 0.95)$.
	Lemma \ref{lem:n-infty} implies that $h(x, 0) \leq 0$.
	Then
	\[
	\begin{aligned}
		h(x,1/n) &\leq \sup_{y \in [0,1/20]} \left| \frac{\partial h(x,y)}{\partial y} \right| \frac{1}{n} \\
		&=  \sup_{y \in [0,1/20]} \left| \frac{1}{(x+y)^2} + \frac{1}{(1-x+y)^2} \right| \frac{1}{n} \\
		&\leq 800/n.
	\end{aligned}
	\]
	 
	 The proof is then complete.
\end{proof}

\end{document}